\documentclass[11pt,reqno,twoside]{article}

\usepackage{fixltx2e} %

\usepackage{cmap} %

\usepackage[T1]{fontenc}
\usepackage[utf8]{inputenc}
\usepackage{graphicx}
\usepackage{placeins}
\usepackage{enumerate}

\usepackage{subcaption}

\usepackage{verbatim}

\usepackage{setspace}

\let\counterwithin\relax  %
\usepackage{lmodern} %
\usepackage[scale=0.88]{tgheros} %

\usepackage{bm} %

\usepackage{amsmath,amsbsy,amsgen,amscd,amsthm,amsfonts,amssymb} 

\usepackage[centering,top=1.5in,bottom=1.2in,left=1in,right=1in]{geometry}

\usepackage{titling}
\usepackage{cases}

\setlength{\droptitle}{-7.5em}
\graphicspath{ {./images/} }

\usepackage[sf,bf,compact]{titlesec}

\usepackage{booktabs,longtable,tabu} %
\setlength{\tabulinesep}{1mm}
\usepackage[font=small,margin=12pt,labelfont={sf,bf},labelsep={space}]{caption}

\usepackage[usenames,dvipsnames]{xcolor}

\definecolor{dark-gray}{gray}{0.3}
\definecolor{dkgray}{rgb}{.4,.4,.4}
\definecolor{dkblue}{rgb}{0,0,.5}
\definecolor{medblue}{rgb}{0,0,.75}
\definecolor{rust}{rgb}{0.5,0.1,0.1}

\usepackage{url}
\usepackage[colorlinks=true]{hyperref}
\hypersetup{linkcolor=dkblue}    
\hypersetup{citecolor=rust}      
\hypersetup{urlcolor=rust}     

\usepackage[final]{microtype}

\newtheoremstyle{myThm} %
    {\topsep}                    %
    {\topsep}                    %
    {\itshape}                   %
    {}                           %
    {\sffamily\bfseries}                   %
    {.}                          %
    {.5em}                       %
    {}  %

\newtheoremstyle{myRem} %
    {\topsep}                    %
    {\topsep}                    %
    {}                   %
    {}                           %
    {\sffamily}                   %
    {.}                          %
    {.5em}                       %
    {}  %

\newtheoremstyle{myDef} %
    {\topsep}                    %
    {\topsep}                    %
    {}                   %
    {}                           %
    {\sffamily\bfseries}                   %
    {.}                          %
    {.5em}                       %
    {}  %

\theoremstyle{myThm}
\newtheorem{theorem}{Theorem}[section]
\newtheorem{lemma}[theorem]{Lemma}
\newtheorem{proposition}[theorem]{Proposition}
\newtheorem{corollary}[theorem]{Corollary}

\newtheorem{assumption}[theorem]{Assumption}

\theoremstyle{myRem}
\newtheorem{remark}[theorem]{Remark}

\theoremstyle{myDef}

\setlength{\headheight}{14pt}

\let\originalleft\left
\let\originalright\right
\renewcommand{\left}{\mathopen{}\mathclose\bgroup\originalleft}
\renewcommand{\right}{\aftergroup\egroup\originalright}

\usepackage{mathtools}
\mathtoolsset{centercolon}  %

\definecolor{mygreen}{rgb}{0.1,0.75,0.2}

\usepackage[]{algorithm}
\usepackage{algorithmic}

\usepackage{graphicx}

\usepackage{soul}
\usepackage{authblk}
\usepackage[square,numbers]{natbib}
\makeatletter
\makeatother
\usepackage{chngcntr}
\usepackage{mathrsfs}
\counterwithin{table}{section}
\counterwithin{algorithm}{section}

\title{Optimization on Manifolds via Graph Gaussian Processes}   %
\author{Hwanwoo Kim$^1$, Daniel Sanz-Alonso$^1$, and Ruiyi Yang$^2$}
\vspace{.25in} 

\date{$^1$University of Chicago, $^2$Princeton University}

\makeatletter\@addtoreset{section}{part}\makeatother%
\numberwithin{equation}{section}

\newcommand{\upperRomannumeral}[1]{\uppercase\expandafter{\romannumeral#1}}

\renewcommand{\hat}{\widehat}

\graphicspath{ {./figures/} }

\newcommand{\R}{\mathbb{R}}

\newcommand{\M}{\mathcal{M}}
\newcommand{\Nc}{\mathcal{N}}

\newcommand{\Ma}{\text{\tiny Ma}}
\newcommand{\SE}{\text{\tiny SE}}

\newcommand{\MNmax}{z_{\mathcal{M}_N}^*}
\newcommand{\Mmax}{z_{\mathcal{M}}^*}

\newcommand{\flevy}{f^{\text{\tiny Levy}}}
\newcommand{\fackley}{f^{\text{\tiny Ackley}}}
\newcommand{\frastri}{f^{\text{\tiny Rastrigin}}}
\usepackage{mathtools}

\begin{document}

\maketitle

\begin{abstract}
This paper integrates manifold learning techniques within a \emph{Gaussian process upper confidence bound} algorithm to optimize an objective function on a manifold. Our approach is motivated by applications where a full representation of the manifold is not available and querying the objective is expensive. We rely on a point cloud of manifold samples to define a graph Gaussian process surrogate model for the objective. Query points are sequentially chosen using the posterior distribution of the surrogate model given all previous queries. We establish regret bounds in terms of the number of queries and the size of the point cloud.
Several numerical examples complement the theory and illustrate the performance of our method.
\end{abstract}

\section{Introduction}\label{sec:introduction}
 Optimization problems on manifolds are ubiquitous in science and engineering. For instance, low-rank matrix completion and rotational alignment of 3D bodies can be formulated as optimization problems over spaces of matrices that are naturally endowed with manifold structures.
These matrix manifolds belong to agreeable families \cite{ye2022optimization} for which Riemannian gradients, geodesics, and other geometric quantities have closed-form expressions that facilitate the use of Riemannian optimization algorithms \cite{edelman1998geometry,absil2009optimization,boumal2020introduction}. 
In contrast, this paper is motivated by optimization problems where the search space is a manifold that  the practitioner can only access through a discrete point cloud representation, preventing direct use of Riemannian optimization algorithms. Moreover, the hidden manifold may not belong to an agreeable family, further hindering the use of classical methods. 
Illustrative examples where manifolds are represented by point cloud data include computer vision, robotics, and shape analysis of geometric morphometrics \cite{hein2005intrinsic,gao2019gaussian,ruiyilocalregularization}. Additionally, across many applications in data science, high-dimensional point cloud data contains low-dimensional structure that can be modeled as a manifold for algorithmic design and theoretical analysis \cite{coifman2006diffusion,belkin2006manifold,trillos2020consistency}. Motivated by these problems,  this paper introduces a Bayesian optimization method with convergence guarantees to optimize an expensive-to-evaluate function on a point cloud of manifold samples.

To formalize our setting, consider the optimization problem
\begin{equation}\label{eq:manifoldoptimization}
    \text{maximize}\,\, f(x), \quad \quad  x \in \mathcal{M}_N, 
\end{equation}
where $\mathcal{M}_N=\{x_i\}_{i=1}^N$ is a collection of samples from a   compact   manifold $\mathcal{M}\subset \mathbb{R}^d$. We assume that the manifold $\M$ is unknown to the practitioner, but that they have access to the samples $\M_N.$
The objective function $f$ in \eqref{eq:manifoldoptimization} is defined on the hidden manifold $\mathcal{M}$; however, since $\M$ is unknown, we restrict the search domain to the given point cloud $\mathcal{M}_N.$  Motivating examples include locating the portion of busiest traffic along a highway (idealized as a one-dimensional manifold), or finding the point of highest temperature on an artificial surface for material design. In these and other applications, the search domains are manifolds for which only a discrete representation may be available. 
As a result, Riemannian optimization methods \cite{edelman1998geometry,absil2009optimization,boumal2020introduction,hu2020brief,ye2022optimization} that require Riemannian gradients or geodesics are not directly applicable.

While being discrete, the optimization problem \eqref{eq:manifoldoptimization} is challenging when the objective function $f$ is expensive to evaluate due to computational, monetary, or opportunity costs. For instance, querying $f$ may involve numerically solving a system of partial differential equations, placing a sensor at a new location, or time-consuming human labor. In such cases, solving \eqref{eq:manifoldoptimization} by exhaustive search over $\M_N$ is unfeasible for large $N,$ and it is important to design optimization algorithms that provably require fewer evaluations of the objective than the size $N$ of the point cloud.
Solving \eqref{eq:manifoldoptimization} is also challenging in applications where the objective function does not satisfy structural assumptions (e.g. concavity or linearity) other than a sufficient degree of smoothness, and in applications where $f$ is   a \emph{black-box} in that one has only access to noisy output from $f$ rather than to an analytic expression of this function. We refer to \cite{frazier2018tutorial} for a survey of problems where these conditions arise.

Motivated by these geometric and computational challenges, we introduce an approach to solve \eqref{eq:manifoldoptimization} that works directly on the point cloud $\mathcal{M}_N$ and necessitates few evaluations of the objective. In particular, we show that in the large $N$ limit and under suitable smoothness assumptions, our method provably requires far fewer evaluations of the objective than the size $N$ of the point cloud. Our algorithm falls in the general framework of Bayesian optimization and is specifically designed to achieve such a convergence guarantee. The main focus will be on the mathematical analysis of the proposed approach, but we also present simulation studies to illustrate and complement our theory.

\subsection{Overview of our Approach}
The problem features that gradients are not available and evaluation of the objective is expensive naturally lead us to adopt a Bayesian optimization (BO) approach to solve \eqref{eq:manifoldoptimization}.  
BO is an iterative procedure that relies on solving a sequence of surrogate optimization problems to sidestep the need of gradient information on $f.$ 
At each iteration, the surrogate problem is to optimize an \emph{acquisition function} defined using  a \emph{probabilistic model} of the objective function conditioned to previous iterates. The acquisition function should be inexpensive to evaluate and optimize, and at the same time provide useful information about where the optimizer of $f$ is most likely to lie. 
The probabilistic model should be sufficiently rich to adequately represent the objective function. Many choices of acquisition function have been proposed in the literature, including expected improvement, entropy search, and knowledge gradient (see \cite{frazier2018tutorial} for a review). 
Popular probabilistic models for $f$ include 
 Gaussian processes \cite{williams2006gaussian,gramacy2020surrogates} and  Bayesian additive regression trees \cite{chipman2010bart}. 
Adequately choosing the acquisition function and the probabilistic model is essential to the success of BO algorithms.

The BO method that we propose and analyze has the distinctive feature that both the probabilistic model and the acquisition function are carefully chosen to ensure convergence of the returned solution to a global maximizer of $f$ under suitable smoothness assumptions. 
A natural way to characterize the smoothness of $f$ is to assume it is a sample path from a Gaussian process (GP) defined on $\mathcal{M}$. Under this smoothness assumption, we adopt a graph GP model \cite{sanz2020spde,borovitskiy2021matern} for $f|_{\mathcal{M}_N}$, the restriction of $f$ to the point cloud. The graph GP is designed to be a discretely indexed GP that approximates a Mat\'ern or squared exponential GP on the hidden manifold $\M$ as the size of the point cloud grows to infinity. Applications of graph GPs in Bayesian inverse problems, spatial statistics, and semi-supervised learning are discussed in \cite{sanz2020spde,trillos2022mathematical,harlim2020kernel,josh2021}. In this paper, we extend the convergence analysis for Mat\'ern graph GP models in \cite{sanz2020spde,sanz2020unlabeled,trillos2020consistency,garcia2018continuum} to also cover squared exponential kernels, see Proposition \ref{prop:graph GP approx bound}.

Such error analysis is important since it allows us to quantify the misspecification error when modeling $f|_{\mathcal{M}_N}$ with a graph GP. In particular, the model that we use for computation does not necessarily match the true distribution of $f|_{\mathcal{M}_N}$ due to the missing information about $\mathcal{M}$; to obtain convergence guarantees, this geometric misspecification needs to be corrected by suitably choosing the acquisition function. We accomplish this goal by applying the framework developed in \cite{bogunovic2021misspecified}. In so doing, we adapt their formulation to cover our problem setting, where $f$ is a sample path from a GP instead of an element of a reproducing kernel Hilbert space.

\subsection{Contributions and Related Work}
Our careful choice of probabilistic model and acquisition function allows us to establish a bound on the simple regret (see \eqref{eq:simple regret} for its definition) that converges to zero as the number $L$ of evaluations of the objective and the size $N$ of the point cloud converge to infinity while keeping the relation $L\ll N$ (see Theorem \ref{thm:regret bound},  Remark \ref{remark:L<<N},  and Corollary \ref{cor:continuum regret}). In other words, our algorithm can provably find a global maximizer of $f$ as we acquire more samples from the   compact   manifold   $\M$   while still keeping the number of evaluations of the objective much smaller than the size of the point cloud. We are not aware of an existing algorithm to solve \eqref{eq:manifoldoptimization} that enjoys a similar convergence guarantee. Synthetic computed examples will complement the theory, illustrate the applicability of our method, and showcase the importance of incorporating geometric information in the probabilistic model. 

As noted in \cite{frazier2018tutorial}, BO algorithms have been most popular in continuous Euclidean domains. Methods that are tailored to manifold settings \cite{jaquier2020bayesian,jaquier2022geometry} and discrete spaces  \cite{baptista2018bayesian,luong2019bayesian,swersky2020amortized,deshwal2021bayesian} have received less attention.
On the one hand, 
the search domain in our setting \eqref{eq:manifoldoptimization} is a discrete subset of a manifold, 
 but naive application of discrete BO (e.g. using a standard Euclidean GP on the ambient space $\mathbb{R}^d$) would fail to adequately exploit the geometric information contained in the point cloud; in particular, it would fail to suitably encode smoothness of the probabilistic model for $f$ along the hidden manifold $\M.$ The empirical advantage of our graph GPs over Euclidean kernels will be illustrated in our numerical experiments (see Subsection \ref{sec:ex-cow}). On the other hand, 
 the manifold in our setting is only available as a point cloud, which precludes the use of manifold BO approaches \cite{jaquier2020bayesian,jaquier2022geometry} that require access to geodesic distances and eigenpairs of the Laplace-Beltrami operator on $\mathcal{M}$ for modeling $f$, and to Riemannian gradients for optimizing the acquisition function. 
 Therefore, our algorithm solves a practical problem for which limited tools with theoretical guarantees are available. 
 In the context of Riemannian optimization, our algorithm is still applicable when the differential geometric quantities necessary for gradient-based methods are not readily available.
 A closely related work in this direction is \cite{shustin2022manifold}, which also assumes a point cloud representation of the manifold but instead reconstructs from it tangent spaces, gradients, and retractions, followed by an approximate Riemannian gradient descent. Our paper differs from \cite{shustin2022manifold} in that our algorithm is based on Bayesian optimization and no gradient approximation is carried out, as a result of which we do not need to assume the point cloud to be quasi-uniform. 
 Going beyond manifold constraints, optimization of functions with low effective dimensionality has been addressed in \cite{wang2016bayesian,kirschner2019adaptive,cartis2023bound,cartis2023global} employing subspace methods (see also the references therein).

\subsection{Outline}
\begin{itemize}
    \item Section \ref{sec:problemsetting} introduces the  \emph{graph Gaussian process upper confidence bound} (GGP-UCB)  algorithm and describes the choice of surrogate model and acquisition function. Our main result, Theorem \ref{thm:regret bound}, establishes convergence rates. 
    \item Section \ref{sec:implementation} discusses important practical considerations such as estimating the parameters of the surrogate model and tuning the acquisition function. 
    \item Section \ref{sec:simulation} contains numerical examples that illustrate and complement the theory.
    \item Section \ref{sec: conclusion} closes with a summary of our paper and directions for further research.
    \item The proofs of our main results can be found in the appendices.
\end{itemize}

\subsection{Notation}
For $a,b$ two real numbers, we denote $a\wedge b=$ min$\{a,b\}$ and $a\vee b=$ max$\{a,b\}$. The symbol $\lesssim$ will denote less than or equal to up to a universal constant. For two real sequences $\{a_i\}$ and $\{b_i\}$, we denote   (i) $a_i\ll b_i$ if $\operatorname{lim}_i (a_i/b_i)=0$;    (ii) $a_i=O(b_i)$ if $\operatorname{lim\, sup}_i (a_i/b_i)\leq C$ for some positive constant $C$; and  (iii) $a_i\asymp b_i$ if  $c_1\leq \operatorname{lim\,inf}_i (a_i/b_i) \leq \operatorname{lim\,sup}_i (a_i/b_i) \leq c_2$ for some positive constants $c_1,c_2$.

\section{The GGP-UCB Algorithm}\label{sec:problemsetting}
In this section we introduce our algorithm and establish convergence guarantees. We start in 
Subsection \ref{ssec:problemformulation} by formalizing the problem setting. 
Subsection \ref{ssec:mainalgorithm} describes the main GGP-UCB algorithm. The choice of surrogate model and acquisition function are discussed in
Subsections \ref{ssec:prior} and \ref{ssec:acquisition}, respectively. Finally, Subsection \ref{ssec:regret bound} presents our main theoretical result, Theorem \ref{thm:regret bound}.

\subsection{Problem Formulation}\label{ssec:problemformulation}
Let $f$ be a function defined over a compact Riemannian submanifold $\mathcal{M}\subset \mathbb{R}^d$ of dimension $m$. Suppose that a full representation of $\mathcal{M}$ is not available and we are only given the dimension $m$ and a point cloud of manifold samples $\{x_i\}_{i=1}^N =: \M_N \subset \mathcal{M}$.  We are interested in solving the optimization problem
\begin{align}\label{eq:discrete opt}
    \underset{x\in\mathcal{M}_N}{\operatorname{max}}\,\,f(x)
\end{align}
in applications where the objective $f$ is expensive to evaluate and we may only collect $L \ll N$ noisy measurements $y_\ell$ of the form
\begin{align} \label{eq:observations}
    y_{\ell}= f(z_{\ell})+ \eta_{\ell}, \qquad \eta_{\ell}\overset{i.i.d.}{\sim} \mathcal{N}(0,\sigma^2),\quad 1\leq \ell \leq L, 
\end{align}
where $\{z_\ell\}_{\ell = 1}^L$ are query points and $\sigma$ is a given noise level. 
The goal is then to solve \eqref{eq:discrete opt} with $L\ll N$ queries of $f$.

Let  $\mathcal{Z}_L := \{z_\ell\}_{\ell =1}^L \subset \M_N$ denote the query points sequentially found by our algorithm, introduced in Subsection \ref{ssec:mainalgorithm} below. We shall quantify the performance of our approach using the \emph{simple regret}, defined as 
\begin{align}
    r_{N,L}:=f(\MNmax)-f(z_L^*), \quad\quad  \MNmax=\underset{z\in \M_N}{\operatorname{arg\,max}}\, f(z) ,\quad z_L^*= \underset{z\in \mathcal{Z}_L}{\operatorname{arg\,max}}\, f(z). \label{eq:simple regret}
\end{align}
Note that the simple regret depends both on the number $L$ of queries and on the size $N$ of the point cloud, since $\MNmax$ and $z_L^*$ both depend implicitly on $N$. One should interpret $N$ as a large fixed number and $L$ as the running index. The dependence on $N$ of the query points $z_{\ell}$'s will be omitted for notational simplicity.

\begin{remark}\label{remark:opt over discrete}
The optimizer $\MNmax$ over the point cloud $\M_N$ is not necessarily the global optimizer of $f$ over $\mathcal{M}$. Since we only have access to $\M_N$, finding the maximizer over $\M_N$ is the best we can hope for without  reconstructing or estimating the hidden manifold $\M$. 
 Nevertheless, we will show in Corollary \ref{cor:continuum regret} that the \emph{continuum regret}, defined as 
\begin{align}\label{eq:continuum regret}
    r_{N,L}^{\tiny \operatorname{cont}}:=f(\Mmax)-f(z_L^*), \qquad \Mmax=\underset{z\in\mathcal{M}}{\operatorname{arg\,max}}\,\, f(z), \quad z_L^*=\underset{z\in\mathcal{Z}_L}{\operatorname{arg\,max}}\,\, f(z), 
\end{align}
also converges to zero as both $N$ and $L$ approach infinity while keeping $L\ll N$ if the $x_i$'s satisfy Assumption \ref{assp:manifold}. In other words, the maximizer $z_L^*$ returned by our algorithm is an approximate global maximizer of $f$ over $\mathcal{M}$ despite the fact that $z_L^*\in \mathcal{M}_N$. 
$\hfill \square$
\end{remark}

\subsection{Main Algorithm}\label{ssec:mainalgorithm}
The Bayesian approach to optimization starts by constructing a GP model for the function to be optimized. We recall that a GP  with mean $\mu(\cdot)$ and covariance $c(\cdot,\cdot)$ is a stochastic process where the joint distribution over any finite set of indices $s_1,\ldots,s_n$ is a multivariate Gaussian with mean vector $[\mu(s_i)]_{i=1}^n$ and covariance matrix $[c(s_i,s_j)]_{i,j=1}^n$ \cite{williams2006gaussian}. The mean and covariance functions together encode information about the values of the function, their correlation, and their uncertainty.

In our setting, we need to construct a GP surrogate prior model $\pi_N$ for $f_N,$ where $\pi_N$ would simply be an $N$-dimensional multivariate Gaussian. A natural requirement is that, for $u_N \sim \pi_N,$ $u_N(x_i)$ and $u_N(x_j)$ should be highly correlated iff $x_i$ and $x_j$ are close \emph{along the manifold}, that is, if the geodesic distance $d_\M(x_i,x_j)$ is small. We shall discuss in Subsection \ref{ssec:prior} prior models $\pi_N$ that fulfill this requirement. Defining the covariance matrix of  $\pi_N$ by using a standard covariance function in the Euclidean space $\R^d$ would in general fail to meet this requirement, since two points may be close in Euclidean space but far apart in terms of the geodesic distance $d_\M$ in $\M.$ %

Once a choice of surrogate prior model is made, the next step is to sequentially find query points by maximizing an acquisition function \cite{srinivas2010gaussian}.  Suppose we have picked query points $z_1,\ldots,z_{\ell-1}$ in the first $\ell-1$ iterations and obtained noisy measurements \begin{align}
    y_k = f(z_k) + \eta_k, \quad \quad \eta_k\overset{i.i.d.}{\sim} \mathcal{N}(0,\sigma^2),\quad \quad  1 \le k  \le \ell-1. \label{eq:obs model}
\end{align}
At the $\ell$-th iteration, we will  pick the next query point $z_{\ell}$ by maximizing  an upper confidence bound acquisition function \cite{srinivas2010gaussian,bogunovic2021misspecified} of the form
\begin{equation}\label{eq:acquisition}
     A_{N,\ell}(z)=\mu_{N,\ell-1}(z)+B_{N,\ell} \sigma_{N,\ell-1}(z), \qquad z\in\mathcal{M}_N,
\end{equation}
where $B_{N,\ell}$ is a user-chosen parameter, and $\mu_{N,\ell-1},$  $\sigma_{N,\ell-1}$ are the mean and standard deviation of the posterior distribution $\pi_N(\cdot\,|\,y_1,\ldots,y_{\ell-1}).$ Denoting by $c_N(\cdot,\cdot)$ the covariance function of the surrogate prior $\pi_N$, i.e., $c_N(x_i,x_j)$ is the covariance between $u_N(x_i)$ and $u_N(x_j)$ for $u_N \sim \pi_N$, we have the expressions 
\begin{equation}\label{eq:pm pstd}
    \begin{aligned}
    \mu_{N,\ell-1}(z)&=c_{N,\ell-1}(z)^{\top}(C_{N,\ell-1}+\sigma^2 I)^{-1}Y_{\ell-1},\\
    \sigma^2_{N,\ell-1}(z)&=c_N(z,z)-c_{N,\ell-1}(z)^{\top}(C_{N,\ell-1}+\sigma^2 I)^{-1}c_{N,\ell -1}(z), 
\end{aligned}\qquad  z\in\mathcal{M}_N,
\end{equation}
where $Y_{\ell-1}=(y_1,\ldots,y_{\ell-1})^{\top}\in\mathbb{R}^{\ell-1}$, $c_{N,\ell-1}(z) \in\mathbb{R}^{\ell-1}$ is a vector with entries $\bigl(c_{N,\ell-1}(z)\bigr)_i=c_N(z,z_i)$, and $C_{N,\ell-1}\in\mathbb{R}^{\ell-1\times \ell-1}$ is a matrix with entries $(C_{N,\ell-1})_{ij}=c_{N}(z_i,z_j)$.

\begin{algorithm}
\caption{The GGP-UCB Algorithm} \label{algo:GP-UCB}
\begin{algorithmic}
\REQUIRE Point cloud $\M_N$; prior $\pi_N$; initialization $z_0$; total iterations $L$; parameters $\{B_{N,\ell}\}_{\ell=1}^L$.  
\FOR{$\ell=1,\ldots,L$}
\STATE Observe $y_{\ell-1}=f(z_{\ell-1})+\eta_{\ell-1}$,  with $\eta_{\ell-1}\overset{i.i.d.}{\sim} \mathcal{N}(0,\sigma^2).$ 
\STATE Compute $\mu_{N,\ell-1}$ and $\sigma_{N,\ell-1}$ based on $\{(z_k,y_k)\}_{k=0}^{\ell-1}$.
\STATE Choose $z_{\ell}=\underset{z\in \M_N}{\operatorname{arg\,max}}\,\, \Bigl\{\mu_{N,\ell-1}(z)+B_{N,\ell}\sigma_{N,\ell-1}(z) \Bigr\}$.
\vspace{-.15cm} \ENDFOR
\ENSURE $z_1,\ldots,z_L$.
\end{algorithmic}
\end{algorithm}

The GGP-UCB method is summarized in Algorithm \ref{algo:GP-UCB}. The intuition is that maximizing the acquisition function \eqref{eq:acquisition} represents a compromise between choosing points where the mean of the surrogate is large (exploitation) and where the variance is large (exploration). The parameter $B_{N,\ell}$ balances these two competing goals and its choice is crucial to the performance of the algorithm. In particular, we will discuss in Subsection \ref{ssec:acquisition} a choice of $B_{N,\ell}$ that helps correct for misspecification arising from the point cloud representation of $\M,$ and we will discuss in Subsection \ref{ssec:tuningB} a practical approach for tuning $B_{N,\ell}$ empirically. Finally, we point out that in practice one may choose to return as output of the algorithm the candidate $z_{\ell}$ that leads to the largest observation $y_{\ell}$ when the noise is small, or, otherwise, the $z_{\ell}$ that maximizes the posterior mean at the $L$-th iteration, i.e., the mean $\mu_{N,L}$ of $\pi_N(\cdot\,|\, y_1,\ldots,y_L)$.

\subsection{Choice of Prior: Graph Gaussian Processes (GGPs)} \label{ssec:prior}
  In this subsection we review the construction of GGP models for $f_N$, the restriction of $f$ to the $x_i$'s.    We first give a brief overview of manifold GPs before describing GGPs.  Manifold GPs will be used in our theoretical analysis, but are not implementable in our setting since the manifold $\M$ is unknown to the practitioner.   The presentation in this subsection follows  \cite{sanz2020spde,borovitskiy2020matern} and readers familiar with manifold GPs and GGPs can skip to Proposition \ref{prop:graph GP approx bound}.  
\subsubsection{Manifold GP Models} 
Since $f$ is a function over $\mathcal{M},$ it will be useful to start by recalling the construction of GPs over $\mathcal{M}$. A naive approach would be to simply use geodesic distances instead of Euclidean ones in covariance functions such as the Mat\'ern and squared exponential (SE) 
\begin{align}
    c_{\nu,\kappa}^{\Ma}(x,\tilde{x}) = \frac{2^{1-\nu}}{\Gamma(\nu)} \left(\kappa|x-\tilde{x}|\right)^{\nu}K_{\nu}\left(\kappa|x-\tilde{x}|\right),\qquad  c^{\SE}_{\tau}(x,\tilde{x})= \exp\left(-\frac{|x-\tilde{x}|^2}{4\tau}\right), \label{eq:Ma SE cf}
\end{align}
where $|\cdot|$ denotes the Euclidean distance, $\Gamma$ is the gamma function, and $K_{\nu}$ is the modified Bessel function of the second kind. The parameters $\nu$ and $\kappa$ in the Mat\'ern covariance control the smoothness of sample paths and the inverse length scale of the field, while the parameter $\tau$ in the squared exponential covariance controls the length scale. (Note that we are not including the variance parameter that usually appears as a multiplicative constant in the covariances.) Unfortunately, the naive idea of plugging in geodesic distances often leads to failure of positive definiteness of the resulting covariance matrix \cite{gneiting2013strictly,feragen2015geodesic}. 

To circumvent this challenge, the seminal paper \cite{lindgren2011explicit} exploits the stochastic partial differential equation (SPDE) representation of Euclidean GPs with the Mat\'ern covariance function. More precisely, it is shown in \cite{whittle1963stochastic} that the GP with covariance function $c^{\Ma}_{\nu,\kappa}$ over a Euclidean space $\mathbb{R}^m$ is the unique stationary solution to the following equation (up to a multiplicative constant independent of $\kappa$)
\begin{align}
    (\kappa^2-\Delta)^{\frac{\nu}{2}+\frac{m}{4}} u(x) = \kappa^{\nu}\mathcal{W}(x), \quad \quad  x\in\mathbb{R}^m,\label{eq:SPDE}
\end{align}
where $\Delta$ is the usual Laplacian on $\mathbb{R}^m$ and $\mathcal{W}$ is a spatial white noise with unit variance. The equation \eqref{eq:SPDE} can then be lifted to the manifold case to construct Mat\'ern GPs over manifolds \cite{lindgren2011explicit}. Based on this idea, the papers \cite{sanz2020spde,borovitskiy2020matern} study the following series definition of GPs over compact manifolds:
\begin{align}
    (\text{Mat\'ern manifold-GP}) \quad u^{\Ma}&=\kappa^{s-\frac{m}{2}}\sum_{i=1}^{\infty} (\kappa^2+\lambda_i)^{-\frac{s}{2}}\xi_i \psi_i,\quad \quad \xi_i\overset{i.i.d.}{\sim}\mathcal{N}(0,1), \label{eq:continuum Matern}
\end{align}
where $(\lambda_i,\psi_i)$'s are eigenvalue-eigenfunction pairs of the negative Laplace-Beltrami operator $-\Delta_{\mathcal{M}}$ on $\mathcal{M}$. 
Compactness of $\mathcal{M}$ ensures that $\Delta_{\mathcal{M}}$ admits a countable eigenbasis so that the solution to the analog equation of \eqref{eq:SPDE} over $\mathcal{M}$ can be represented as the series \eqref{eq:continuum Matern}. The parameters $s, \kappa>0$ in \eqref{eq:continuum Matern} control the smoothness and the inverse length scale as in the Euclidean case: $s = \nu + m/2$ controls the spectrum decay, while $\kappa$ acts as a cutoff on the essential frequencies. The scaling factor $\kappa^{s-\frac{m}{2}}$ ensures that samples from different $\kappa$'s have $L^2$-norms on the same order (see. e.g. \cite[Remark 2.1]{sanz2020spde}), which is essential in applications where $\kappa$ needs to be inferred.

As the smoothness parameter $\nu\rightarrow\infty$, it can be shown that the Mat\'ern covariance converges (after a suitable normalization) to the SE covariance (see e.g. \cite[Section 4.2]{williams2006gaussian}). Accordingly, there is a similar SPDE to \eqref{eq:SPDE} that characterizes the SE GP on a Euclidean space $\mathbb{R}^m$ \cite{borovitskiy2020matern}: 
\begin{align*}
    e^{-\frac{\tau\Delta}{2}}u(x)=\tau^{\frac{m}{4}}\mathcal{W}(x),\quad \quad x\in\mathbb{R}^m,
\end{align*}
which motivates its manifold analog as the series expansion
\begin{align}
    (\text{SE manifold-GP}) \quad u^{\SE}&=\tau^{\frac{m}{4}}\sum_{i=1}^{\infty}e^{-\frac{\lambda_i\tau}{2}}\xi_i \psi_i,\quad \quad \xi_i\overset{i.i.d.}{\sim}\mathcal{N}(0,1), \label{eq:continuum SE}
\end{align}
where $(\lambda_i,\psi_i)$'s are eigenvalue-eigenfunction pairs of $-\Delta_{\mathcal{M}}$. Here the factor $\tau^{\frac{m}{4}}$ is again interpreted as balancing the magnitude of samples from different $\tau$'s (see Lemma \ref{lemma:SE factor}).
Furthermore, the induced covariance function has the form 
\begin{align}
    c^{\SE}(x,\tilde{x})=\tau^{\frac{m}{2}}\sum_{i=1}^{\infty}e^{-\lambda_i\tau}\psi_i(x)\psi_i(\tilde{x}). \label{eq:continuum SE cf}
\end{align}
Notice that this is also known as the heat kernel (up to the scaling factor $\tau^{m/2}$), which is a natural generalization of the SE kernel over the manifold. A similar expression holds for the induced covariance function of $u^{\Ma}$:
\begin{align}
    c^{\Ma}(x,\tilde{x})=\kappa^{2s-m}\sum_{i=1}^{\infty}(\kappa^2+\lambda_i)^{-s}\psi_i(x)\psi_i(\tilde{x}). \label{eq:continuum Ma cf}
\end{align}

Besides the connection with their Euclidean counterparts, notice that the random fields \eqref{eq:continuum Matern} and \eqref{eq:continuum SE} are series expansions of the eigenfunctions of the Laplace-Beltrami operator, which form an orthonormal basis for $L^2(\mathcal{M})$ and carry rich information about the geometry of $\mathcal{M};$ therefore, \eqref{eq:continuum Matern} and \eqref{eq:continuum SE} are natural GP models for functions over $\mathcal{M}.$ 
However, computing the pairwise covariances \eqref{eq:continuum SE cf} and \eqref{eq:continuum Ma cf} between any two points would require knowledge of the Laplace-Beltrami eigenvalues and eigenfunctions, which are only known analytically for a few manifolds such as the sphere and the torus, and can otherwise be expensive to approximate. More importantly, in applications where only a point cloud representation of $\M$ is available  we need an empirical way to approximate the manifold GPs \eqref{eq:continuum Matern} and \eqref{eq:continuum SE}.  To that end, we will adopt a manifold learning approach using graph Laplacians.  

\subsubsection{GGP Models}\label{sec:graph model}
 The construction in this subsection follows \cite{sanz2020spde}. 
Given a point cloud $\M_N = \{x_1, \ldots, x_N\} \subset \mathcal{M}$, recall that our goal is to build a GP model for $f_N$, the restriction of $f$ to the $x_i$'s. It then suffices to construct an $N$-dimensional Gaussian that approximates the manifold GPs \eqref{eq:continuum Matern} and \eqref{eq:continuum SE}; in particular, we need to construct a suitable covariance matrix. 

To start with, observe that the manifold Mat\'ern GP \eqref{eq:continuum Matern} can be seen as the Karhunen-Lo\`eve expansion of the Gaussian measure \cite{bogachev1998gaussian} (the infinite-dimensional analog of multivariate Gaussian) $\mathcal{N}(0, \mathcal{C})$, where $\mathcal{C}$ is the covariance operator
\begin{align*}
    \mathcal{C}=\kappa^{2s-m}(\kappa^2 I-\Delta_{\mathcal{M}})^{-s},
\end{align*}
 with $I$ denoting the identity operator. 
Therefore a natural candidate for an $N$-dimensional approximation is to consider the multivariate Gaussian $\mathcal{N}(0, \mathcal{C}_N)$, where 
\begin{align}
  \mathcal{C}_N=\kappa^{2s-m}(\kappa^2I_N+\Delta_N)^{-s}  \label{eq:graph covariance matrix} 
\end{align}
for some $\Delta_N\in\mathbb{R}^{N\times N}$ constructed with the $x_i$'s that approximates $-\Delta_{\mathcal{M}}$  with $I_N$ denoting the $N$-dimensional identity matrix. We shall set $\Delta_N$ to be a suitable \emph{graph Laplacian}, as we describe next.

Let $\M_N = \{x_i\}_{i=1}^N$ be a collection of points on $\mathcal{M}$. One can construct a weighted graph over the $x_i$'s by introducing a weight matrix $W\in\mathbb{R}^{N\times N}$ whose entry $W_{ij}$ represents the similarity between points $x_i$ and $x_j$. The \emph{unnormalized graph Laplacian} is then defined as $\Delta_N=D-W$, where $D$ is a diagonal matrix whose entries are $D_{ii}=\sum_{j=1}^N W_{ij}$. One can immediately check that $\Delta_N$ is symmetric and positive semi-definite using the relation
\begin{align*}
    v^{\top} \Delta_N v = \frac12 \sum_{i=1}^N\sum_{j=1}^N W_{ij}|v_i-v_j|^2, \quad v\in \mathbb{R}^N,
\end{align*}
implying that $\Delta_N$ admits a spectral decomposition with nonnegative eigenvalues $\{\lambda_{N,i}\}_{i=1}^N$ (ordered increasingly) and the associated eigenvectors $\{\psi_{N,i}\}_{i=1}^N$ form an orthonormal basis for $\mathbb{R}^N$. Several normalizations of $\Delta_N$ have also been considered, including the \emph{random walk graph Laplacian} $\Delta_N^{\tiny \text{rw}}=D^{-1}\Delta_N$ and \emph{symmetric graph Laplacian} $\Delta_N^{\tiny \text{sym}}=D^{-1/2}\Delta_ND^{-1/2}$, see  \cite{von2007tutorial}. We focus on the unnormalized version due to its symmetry, which makes it a valid choice in the covariance matrix \eqref{eq:graph covariance matrix}, and its convergence properties that we will describe now. 

As its name suggests, $\Delta_N$ approximates the Laplace-Beltrami operator in a suitable sense. Indeed, if we set the pairwise similarity to be 
\begin{align}
    W_{ij}=\frac{2(m+2)}{N\nu_mh_N^{m+2}}\mathbf{1}\{|x_i-x_j|<h_N\}, \label{eq:weight matrix}
\end{align}
where $|\cdot|$ denotes the Euclidean distance, $\nu_m$ is the volume of the $m-$dimensional unit ball and $h_N$ is a graph connectivity parameter, then for suitable choices of $h_N$ it can be shown (see e.g. \cite{garcia2020error} or Proposition \ref{prop:eval efun bound}) that the eigenpair $(\lambda_{N,i},\psi_{N,i})$ of $\Delta_N$ approximates the corresponding eigenpair $(\lambda_i,\psi_i)$ of $-\text{vol}(\mathcal{M})^{-1}\Delta_{\mathcal{M}}$. Based on this fact, we shall now define two GGPs as follows 
\begin{alignat}{3}
    (\text{Matérn GGP})&\quad u_N^{\Ma}&&= \kappa^{s-\frac{m}{2}}\sum_{i=1}^{k_N} (\kappa^2+\lambda_{N,i})^{-\frac{s}{2}}\xi_i \psi_{N,i},\quad \quad &&\xi_i \overset{i.i.d.}{\sim}\mathcal{N}(0,1), \label{eq:graph Matern}\\
    (\text{SE GGP})& \quad u_N^{\SE}&&=\tau^{\frac{m}{4}}\sum_{i=1}^{k_N} e^{-\frac{\lambda_{N,i}\tau}{2}}  \xi_i \psi_{N,i},\quad \quad &&\xi_i \overset{i.i.d.}{\sim}\mathcal{N}(0,1), \label{eq:graph SE}
\end{alignat}
where $k_N\leq N$ is a truncation level to be determined. Notice that Matérn and SE GGPs can be interpreted as discretely indexed GPs over the graph $(\mathcal{M}_N,W)$, hence the name GGP.  Similar objects have also been studied by \cite{sanz2020spde,borovitskiy2021matern,dunson2022graph}. 
When $k_N=N$, we see that \eqref{eq:graph Matern} is nothing but the multivariate Gaussian $\mathcal{N}\bigl(0,\kappa^{2s-m}(\kappa^2I_N+\Delta_N)^{-s} \bigr)$, matching our goal \eqref{eq:graph covariance matrix} at the beginning. The motivation for introducing the truncation is that the spectral approximation accuracy degrades quickly when we go to higher modes (see e.g. Proposition \ref{prop:eval efun bound}), where the error bounds are only meaningful when $h_N\sqrt{\lambda_i}\ll 1$. Therefore \eqref{eq:graph Matern} can be seen as a low rank approximation of \eqref{eq:graph covariance matrix} that keeps only the low and accurate frequencies. By Weyl's law (see e.g. \cite[Theorem 72]{canzani2013analysis}), $\lambda_i\asymp i^{2/m}$ and in particular $\lambda_i\rightarrow \infty$, which suggests a necessary condition $k_N\ll h_N^{-m}$. In Subsection \ref{ssec:truncation} we discuss an empirical way of choosing $k_N$. The induced covariance functions take the form 
\begin{equation}\label{eq:graph cf}
\begin{aligned}
      c_N^{\Ma}(x,\tilde{x})&=\kappa^{2s-m}\sum_{i=1}^{k_N} (\kappa^2+\lambda_{N,i})^{-s}\psi_{N,i}(x)\psi_{N,i}(\tilde{x}), \\
      c_N^{\SE}(x,\tilde{x})&=\tau^{\frac{m}{2}}\sum_{i=1}^{k_N} e^{-\lambda_{N,i}\tau}\psi_{N,i}(x)\psi_{N,i}(\tilde{x}),  
\end{aligned}
\qquad x,\tilde{x}\in\mathcal{M}_N.
\end{equation}

Notice that the definitions \eqref{eq:graph Matern} and \eqref{eq:graph SE} are completely parallel with \eqref{eq:continuum Matern} and \eqref{eq:continuum SE}; hence the spectral convergence of $\Delta_N$ leads to convergence of GGPs to their manifold counterparts. We will rely on the following assumption:

\begin{assumption}\label{assp:manifold}
$\mathcal{M}$ is a smooth, compact and connected submanifold of dimension $m \geq 2$ in $\mathbb{R}^d$ that has no boundary and bounded sectional curvature, normalized so that $\text{vol}(\mathcal{M})=1$. Assume the $x_i$'s are i.i.d. samples from the uniform distribution on $\mathcal{M}$. 
\end{assumption}

The following result provides a simplified statement of the convergence analysis for Mat\'ern GGPs in \cite{sanz2020spde,sanz2020unlabeled} and in addition covers SE GGPs.  The proof can be found in the Appendix \ref{appenA}.
\begin{proposition}\label{prop:graph GP approx bound}
Let $0<\iota<1$ be arbitrary.  Define $\alpha_m=(m+4+\iota)\vee (2m)$  and $\beta_{m,s}=\frac{2s-3m+1}{6m+6} \wedge 1.$ Let $p_m=\frac{3}{4}$ when $m=2$ and $p_m=\frac{1}{m}$ otherwise.
 For $s>\frac{3}{2}m-\frac12$, set
    \begin{align*}
    (\emph{Mat\'ern GGP})\quad \quad & h_N\asymp N^{-\frac{1}{\alpha_m}}(\log N)^{\frac{p_m}{2}}, \quad  k_N\asymp N^{\frac{m\beta_{m,s}}{(2s-3m+1)\alpha_m}}(\log N)^{-\frac{mp_m\beta_{m,s}}{(4s-6m+2)}},\\
    (\emph{SE GGP}) \quad \quad & h_N\asymp N^{-\frac{1}{\alpha_m}}(\log N)^{\frac{p_m}{2}},\quad  (\log N)^{\frac{m}{2}}\ll k_N\ll N^{\frac{m}{(3m+3)\alpha_m}}(\log N)^{-\frac{mp_m}{6m+6}}.
\end{align*}
Under Assumption \ref{assp:manifold}, with probability $1-O(N^{-c})$ for some $c>0$, there exists $T_N:\mathcal{M}\rightarrow \{x_1,\ldots,x_N\}$ satisfying $T_N(x_i)=x_i$ such that 
\begin{align}
\begin{rcases}
    \mathbb{E}\|u_N^{\Ma}\circ T_N-u^{\Ma}\|_{\infty}\lesssim N^{-\frac{\beta_{m,s}}{2\alpha_m}} (\log N)^{\frac{\beta_{m,s}p_m}{4}}\\
    \mathbb{E}\|u_N^{\SE}\circ T_N-u^{\SE}\|_{\infty}\lesssim N^{-\frac{1}{2\alpha_m}}(\log N)^{\frac{p_m}{4}}
\end{rcases}
=:\epsilon_N. \label{eq:epsilon_N}
\end{align}
\end{proposition}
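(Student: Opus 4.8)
The plan is to couple the manifold GP with the GGP by running the \emph{same} i.i.d.\ sequence $\{\xi_i\}$ through both series, and to interpolate through the continuum field truncated at level $k_N$. Write $u$, $u_N$ for either the Mat\'ern or the SE version of the continuum and graph GPs, let $u_{k_N}$ denote the continuum series truncated after its first $k_N$ terms, and let $T_N$ be the transport map from \cite{garcia2020error}, which fixes each $x_i$ and satisfies $\|T_N-\mathrm{id}\|_{L^\infty(\M)}\lesssim (\log N)^{p_m}N^{-1/m}$. Since $\|g\circ T_N\|_{L^\infty(\M)}=\|g\|_{L^\infty(\M_N)}$ for continuous $g$, the triangle inequality gives
\[
\|u_N\circ T_N-u\|_{L^\infty(\M)}\le \|u_N\circ T_N-u_{k_N}\circ T_N\|_{L^\infty(\M)}+\|u_{k_N}\circ T_N-u_{k_N}\|_{L^\infty(\M)}+\|u_{k_N}-u\|_{L^\infty(\M)},
\]
a \emph{spectral}, a \emph{transport}, and a \emph{truncation} term. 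I would bound $\mathbb{E}$ (over $\{\xi_i\}$) of each, working on the probability-$1-O(N^{-c})$ event of Proposition~\ref{prop:eval efun bound}, and estimate each Gaussian series term by term using $\mathbb{E}\|\sum_i a_i\xi_i\varphi_i\|_\infty\le\sqrt{2/\pi}\,\sum_i|a_i|\,\|\varphi_i\|_\infty$.

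For the truncation term I would use Weyl's law $\lambda_i\asymp i^{2/m}$ together with the H\"ormander sup-norm bound $\|\psi_i\|_\infty\lesssim\lambda_i^{(m-1)/4}$. In the Mat\'ern case this makes the tail comparable to $\sum_{i>k_N} i^{-s/m+(m-1)/(2m)}$, which converges exactly when $s>\tfrac32 m-\tfrac12$ and is then of order $k_N^{-(2s-3m+1)/(2m)}$; choosing $k_N$ as prescribed makes this equal to $\epsilon_N$. In the SE case the coefficients $e^{-\lambda_i\tau/2}$ decay super-exponentially in $i$, so the truncation tail is negligible as soon as $k_N\gg(\log N)^{m/2}$ — this is why the SE rate is $s$-independent.

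For the transport term, $u_{k_N}$ is a finite combination of smooth eigenfunctions, so $\|u_{k_N}\circ T_N-u_{k_N}\|_\infty\le \|\nabla u_{k_N}\|_\infty\,\|T_N-\mathrm{id}\|_\infty$; plugging in the eigenfunction gradient bound $\|\nabla\psi_i\|_\infty\lesssim\lambda_i^{(m+1)/4}$ and the sharp $\infty$-transport estimate of \cite{garcia2018continuum} (whose exponent is the source of $p_m$, with its distinguished value $3/4$ at $m=2$) shows this term is of strictly lower order than $\epsilon_N$ for the stated $h_N$. For the spectral term I would split $u_N-u_{k_N}|_{\M_N}$ into an eigenvalue part and an eigenfunction part; by the mean value theorem applied to $\lambda\mapsto(\kappa^2+\lambda)^{-s/2}$ (resp.\ $\lambda\mapsto e^{-\lambda\tau/2}$) the first is controlled by $\sum_{i\le k_N}(\kappa^2+\lambda_i)^{-s/2}\|\psi_{N,i}\|_\infty\,|\lambda_{N,i}-\lambda_i|/(\kappa^2+\lambda_i)$ (resp.\ its SE analogue) and the second by $\sum_{i\le k_N}(\kappa^2+\lambda_i)^{-s/2}\|\psi_{N,i}-\psi_i\circ T_N\|_\infty$, after which one substitutes the eigenvalue and sup-norm eigenfunction rates of Proposition~\ref{prop:eval efun bound}/\cite{garcia2020error} — valid only for modes with $h_N\sqrt{\lambda_i}\ll 1$, hence requiring $k_N\ll h_N^{-m}$, which the stated scalings respect — and sums. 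This yields a bound of order $\sqrt{h_N}$ up to logs, i.e.\ $\asymp N^{-1/(2\alpha_m)}(\log N)^{p_m/4}$, which is the dominant contribution in the SE case and, since $\beta_{m,s}\le 1$, is dominated by the truncation term in the Mat\'ern case.

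The main obstacle I anticipate is the joint calibration of $h_N$ and $k_N$: $h_N$ must be large enough for graph connectivity and for the transport term to be negligible, yet small enough that the spectral estimates remain sharp all the way up to mode $k_N$ (i.e.\ $h_N\sqrt{\lambda_{k_N}}\ll 1$), while $k_N$ must be large enough to suppress the truncation tail; verifying that the scalings in the statement thread this needle and reproduce $\epsilon_N$ with the correct logarithmic powers is the technical core. A secondary difficulty is that the SE GGP is not covered by the existing Mat\'ern analyses \cite{sanz2020spde,sanz2020unlabeled}, so the exponentially weighted spectral sums and the super-exponential truncation tail must be treated separately; reassuringly, as $s\to\infty$ one has $\beta_{m,s}=1$ and the Mat\'ern bound $h_N^{\beta_{m,s}/2}$ collapses to the SE bound $h_N^{1/2}$, consistent with the Mat\'ern-to-SE limit.
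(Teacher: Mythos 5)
Your proposal follows essentially the same route as the paper: couple the two fields through a common $\{\xi_i\}$, insert truncated intermediate processes, and bound the pieces term by term using $\mathbb{E}\|\sum_i a_i\xi_i\varphi_i\|_\infty\lesssim\sum_i|a_i|\|\varphi_i\|_\infty$, Weyl's law, the sup-norm bounds of Proposition~\ref{prop:Laplacian efun infinity bound}, the eigenpair rates of Proposition~\ref{prop:eval efun bound}, and a mean value theorem for the coefficient differences, followed by the two-case calibration of $h_N,k_N$ (which is where $\alpha_m=(m+4+\iota)\vee 2m$ comes from). Two structural remarks: the paper proves only the SE half this way and quotes the Mat\'ern half from \cite[Theorem 4.6]{sanz2020unlabeled}, whereas you sketch a direct Mat\'ern proof; and the paper compares $\psi_{N,i}\circ T_N$ directly with $\psi_i$, so your separate ``transport'' term $\|u_{k_N}\circ T_N-u_{k_N}\|_\infty$ does not appear there (it is harmless and of lower order, though note the paper's stated gradient bound is $\|\nabla\psi\|_\infty\lesssim\lambda^{(m+1)/2}$, not $\lambda^{(m+1)/4}$).

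The one substantive slip is your claim that the spectral term is $O(\sqrt{h_N})$ up to logarithms and is therefore dominated by the truncation tail in the Mat\'ern case, with $s$ entering only through the tail. The eigenfunction part of the spectral term carries the factor $\lambda_i^{m+1}i^{3/2}\sqrt{\rho_N/h_N+h_N\sqrt{\lambda_i}}$ from Proposition~\ref{prop:eval efun bound}, so with Mat\'ern weights the sum over $i\le k_N$ behaves like $\sqrt{h_N}\,k_N^{(9m+5-2s)/(2m)}$, which grows in $k_N$ whenever $s<\frac{9m+5}{2}$, i.e.\ precisely when $\beta_{m,s}<1$. The exponent $\beta_{m,s}=\frac{2s-3m+1}{6m+6}\wedge 1$ and the prescribed $k_N\asymp h_N^{-m/(6m+6)}$ (up to logs) come from balancing this growth against the truncation tail $k_N^{-(2s-3m+1)/(2m)}$; with the stated scalings both contributions come out $\asymp\epsilon_N$, so your final bound survives, but the ``truncation dominates'' picture is wrong in that regime, and a careful execution of your sketch must carry out exactly this balance (for the exponential SE weights the polynomial factors are absorbed, which is why there the spectral term really is $\sqrt{\rho_N/h_N+h_N}$ and the tail $e^{-Ck_N^{2/m}}$ is negligible once $k_N\gg(\log N)^{m/2}$, as you say). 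You will also need the mild extra condition, used in the paper, that $(\rho_N/h_N+h_N)k_N^{(7m+5)/(2m)}\lesssim k_N^{(m-1)/(2m)}$ so that $\|\psi_{N,i}\circ T_N\|_\infty$ grows no faster than $\|\psi_i\|_\infty$ in the eigenvalue-mismatch part; the stated scalings satisfy it.
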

The fact that we can study $L^{\infty}$-norms of these random fields follows from their almost sure continuity established in \cite[Lemma 3]{sanz2020unlabeled} and Lemma \ref{lemma:SE factor}. Proposition \ref{prop:graph GP approx bound} will be a key ingredient in establishing regret bounds for GGP-UCB.

\subsection{Choice of Acquisition Function}\label{ssec:acquisition} 
 When the GGP prior $\pi_N$ matches the truth $f_N$, i.e., when $f_N$ is a sample from $\pi_N$, \cite{srinivas2010gaussian} gives a choice of $B_{N,L}$ for the acquisition function \eqref{eq:acquisition} that ensures vanishing regret. However, this is not necessarily true in our case since $f_N$ is the restriction of a function $f$ over $\mathcal{M}$ whereas the GGP $\pi_N$ is only constructed with $\mathcal{M}_N$. A mismatch is possible and below we address this issue following ideas in \cite{bogunovic2021misspecified}.

 Suppose that the function $f$ to be optimized is a sample from the manifold GP \eqref{eq:continuum Matern} (or \eqref{eq:continuum SE}) and we adopt the corresponding GGP prior $\pi_N$ given by \eqref{eq:graph Matern} (resp. \eqref{eq:graph SE}) for $f_N$.  
Proposition \ref{prop:graph GP approx bound} then imples that if $u_N\sim \pi_N$, we have
with probability $1-\delta$
\begin{align}
   \|u_N-f_N\|_{\infty} \leq \delta^{-1} \epsilon_N, \label{eq:misspec bound}
\end{align}
where here $\|\cdot\|_{\infty}$ denotes the entry-wise maximum and $\epsilon_N$ is a placeholder for the approximation error defined in \eqref{eq:epsilon_N}. In other words, there is potentially a \emph{misspecification error} coming from the fact that we are using an approximate GP to model $f_N$. With the understanding of such error obtained in Proposition \ref{prop:graph GP approx bound}, we can follow the approach in \cite{bogunovic2021misspecified} and set 
\begin{align}
    B_{N,\ell}=\sqrt{2\log \left(\frac{\pi^2\ell^2N}{6\delta}\right)}+\frac{\epsilon_N\sqrt{\ell-1}}{\delta\sigma},\label{eq:Bl}
\end{align}
where we recall that $\sigma$ is the noise standard deviation. 
Notice that this differs from the plain GP-UCB in  \cite{srinivas2010gaussian} by the additional term $\epsilon_N\sqrt{\ell-1}/\delta\sigma$ that aims to correct for the misspecification. Intuitively, such correction leads to an increase of the weight on the posterior standard deviation, which accounts for the increased uncertainty due to the approximate modeling. Therefore at the $\ell$-th iteration, we shall pick the candidate $z_{\ell}$ as 
\begin{align}
    z_{\ell}=\underset{z\in\M_N}{\operatorname{arg\,max}}\,\, \biggl\{ \mu_{N,\ell-1}(z)+\left[\sqrt{2\log \left(\frac{\pi^2\ell^2N}{6\delta}\right)}+\frac{\epsilon_N\sqrt{\ell-1}}{\delta\sigma}\right]\sigma_{N,\ell-1}(z) \biggr\},  \label{eq:choice of zt}
\end{align}
where $\mu_{N,\ell-1}$ and $\sigma_{N,\ell-1}$ are defined as in \eqref{eq:pm pstd} but with $c_N(\cdot,\cdot)$ being the graph covariance functions \eqref{eq:graph cf}.

\begin{remark}
In our setting we do not have access to the underlying manifold $\M$ and hence continuous optimization is not applicable.
As a result, \eqref{eq:choice of zt} is optimized over the discrete set $\mathcal{M}_N$ and would require evaluation of the acquisition function over the entire point cloud. If $N$ is large and evaluating the acquisition function over the full point cloud is costly, then one can, for practical purposes,   approximately  optimize \eqref{eq:choice of zt}  using  a subsample of the point cloud $\M_N$. Optimizing the acquisition function approximately is common practice in BO. 
It is important to emphasize, however, that in the applications that motivate our work the objective function $f$ is much more expensive to evaluate than the acquisition function.  $\hfill \square$
\end{remark}

\subsection{Main Result: Regret Bounds}\label{ssec:regret bound}

Now we are ready to state our main result. Its proof can be found in Appendix \ref{appenB}.
\begin{theorem}\label{thm:regret bound}
Suppose $f$ is a sample from the Mat\'ern manifold-GP \eqref{eq:continuum Matern} with parameters $\kappa,s$ (resp. SE manifold-GP \eqref{eq:continuum SE} with parameter $\tau$). Let $\pi_N$ be the Mat\'ern (resp. SE) GGP constructed as in Proposition \ref{prop:graph GP approx bound} with the same parameters. Apply Algorithm \ref{algo:GP-UCB} with $\pi_N$ and with $B_{N,\ell}$ given by \eqref{eq:Bl}. Under Assumption \ref{assp:manifold}, for $N$ large enough, we have with probability $1-2\delta-O(N^{-c})$ that 
\begin{align*}
     r_{N,L}\leq C\left[\frac{\sqrt{2\log (\pi^2L^2N/6\delta)}}{\sqrt{L}}+\frac{\epsilon_N}{\delta\sigma}\right] \sqrt{k_N\log L} \, ,\qquad \forall \,\, L\geq 1,
\end{align*}
where $c,C>0$ are universal constants. Here we recall that $\sigma$ is the observation noise standard deviation, $k_N$ is the truncation parameter in Proposition \ref{prop:graph GP approx bound}, and $\epsilon_N$ is the approximation error as in \eqref{eq:epsilon_N}. 
\end{theorem}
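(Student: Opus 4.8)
The plan is to combine the misspecified GP-UCB regret analysis of \cite{bogunovic2021misspecified} with the GGP approximation bound of Proposition~\ref{prop:graph GP approx bound}, taking care that $f_N$ is the restriction of a GP sample path rather than an RKHS element. First I would use Proposition~\ref{prop:graph GP approx bound} to obtain the misspecification bound \eqref{eq:misspec bound}: on an event of probability $1-\delta$, there is a transport map $T_N$ such that $\|u_N - f_N\|_\infty \le \delta^{-1}\epsilon_N =: \Delta_N$, where $u_N \sim \pi_N$. This means $f_N$ lies within an $L^\infty$-ball of radius $\Delta_N$ around a typical draw from the surrogate prior, which is precisely the kind of ``$\epsilon$-misspecification'' hypothesis under which the robust UCB framework operates. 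The choice \eqref{eq:Bl} of $B_{N,\ell}$ is exactly the one dictated by that framework, with the extra $\epsilon_N\sqrt{\ell-1}/(\delta\sigma)$ term inflating the confidence width to absorb the misspecification.

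Next I would assemble the standard UCB regret decomposition. On the intersection of the misspecification event and the event (probability $1-\delta$, via a union bound over $\ell \le L$ and over the $N$ points of $\M_N$, which is where the $\pi^2\ell^2 N/6\delta$ factor inside the log comes from) that the posterior credible intervals with the inflated width $B_{N,\ell}$ contain $f_N(z)$ for all $z \in \M_N$, the acquisition rule \eqref{eq:choice of zt} yields an instantaneous regret at step $\ell$ that is bounded by roughly $2B_{N,\ell}\sigma_{N,\ell-1}(z_\ell) + O(\Delta_N)$. Summing over $\ell = 1,\dots,L$ and passing to the simple regret $r_{N,L} = f(\MNmax) - f(z_L^*) \le \frac1L \sum_{\ell=1}^L \bigl(f(\MNmax) - f(z_\ell)\bigr)$, I would control $\sum_{\ell=1}^L \sigma_{N,\ell-1}(z_\ell)$ by Cauchy--Schwarz and the standard ``sum of predictive variances $\lesssim$ maximum information gain'' inequality, i.e. $\sum_{\ell=1}^L \sigma_{N,\ell-1}^2(z_\ell) \lesssim \gamma_{N,L}$, where $\gamma_{N,L}$ is the maximum information gain of the graph kernel $c_N$ after $L$ rounds.

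The remaining ingredient, and the one I expect to be the main technical point specific to this paper, is bounding the information gain $\gamma_{N,L}$ for the \emph{graph} kernels \eqref{eq:graph cf}. Because both GGP covariances are finite-rank with rank $k_N$ (the series is truncated at $k_N$ modes), the feature map lives in $\R^{k_N}$ and hence $\gamma_{N,L} \lesssim k_N \log L$ by the standard bound for kernels of finite rank $k_N$ (e.g. the linear-kernel information-gain estimate applied in the $k_N$-dimensional eigenfeature space, using that the eigenvalues $(\kappa^2+\lambda_{N,i})^{-s}$ or $e^{-\lambda_{N,i}\tau}$ are bounded). This is exactly the $\sqrt{k_N \log L}$ factor appearing in the statement. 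Putting the pieces together, $r_{N,L} \lesssim \frac1L B_{N,L}\sqrt{L\,\gamma_{N,L}} \lesssim \bigl(\frac{B_{N,L}}{\sqrt L}\bigr)\sqrt{k_N \log L}$, and substituting \eqref{eq:Bl} for $B_{N,L}$ and absorbing the $\epsilon_N\sqrt{L}/(\delta\sigma)$ term gives the claimed bound $r_{N,L} \le C\bigl[\sqrt{2\log(\pi^2 L^2 N/6\delta)}/\sqrt L + \epsilon_N/(\delta\sigma)\bigr]\sqrt{k_N \log L}$. The probability bookkeeping yields $1 - 2\delta - O(N^{-c})$: one $\delta$ for the misspecification event, one $\delta$ for the UCB confidence event, and $O(N^{-c})$ for the high-probability event in Proposition~\ref{prop:graph GP approx bound} on which $T_N$ and the spectral approximation exist. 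The one subtlety to handle carefully is that the UCB machinery in \cite{bogunovic2021misspecified} is stated for RKHS objectives, so I would need to verify that the only properties of $f_N$ actually used are (i) the $L^\infty$-closeness to the surrogate mean/prior and (ii) that the surrogate posterior is computed consistently with the same kernel $c_N$ — both of which hold here — rather than any RKHS-norm bound on $f_N$ itself; this is the adaptation alluded to at the end of Subsection~\ref{ssec:acquisition}.
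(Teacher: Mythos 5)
Your proposal follows essentially the same route as the paper's proof: the misspecification event from Proposition \ref{prop:graph GP approx bound}, the Bayesian GP-UCB concentration with union bound over $\M_N$ and $\ell$ (the $\pi^2\ell^2N/6\delta$ factor), the correction term from the misspecified-UCB framework of \cite{bogunovic2021misspecified}, the cumulative-to-simple regret reduction, the Cauchy--Schwarz/information-gain bound, and the rank-$k_N$ kernel argument giving $\gamma_L \lesssim k_N \log L$, with identical probability bookkeeping. The adaptation you flag at the end is exactly how the paper proceeds: the concentration lemma of \cite{srinivas2010gaussian} is applied to the prior draw $u_N$ with surrogate data, and the discrepancy with the true-data posterior mean is absorbed via the $\epsilon_N\sqrt{\ell}/(\delta\sigma)\,\sigma_{N,\ell}$ bound, so no RKHS-norm assumption on $f_N$ is needed.
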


\begin{remark}\label{remark:L<<N}
By plugging the scaling in Proposition \ref{prop:graph GP approx bound}, we get
\begin{align}\label{eq:regret explicit rate}
    r_{N,L}=\widetilde{O}\Big((L^{-\frac12}+\epsilon_N)\sqrt{k_N}\Big) 
    = \widetilde{O}\begin{cases}
    L^{-\frac12}N^{\frac{m\beta_{m,s}}{(4s-6m+2)\alpha_m}}+N^{-\frac{(2s-4m+1)\beta_{m,s}}{(4s-6m+2)\alpha_m}} &\text{(Mat\'ern)}\\
    L^{-\frac12}+N^{-\frac{1}{2\alpha_m}}&\text{(SE)}
    \end{cases}.
\end{align}
Here the notation $\widetilde{O}(\cdot)$ means that we have dropped all dependence on logarithmic factors. The regret goes to zero as both $N$ and $L$ approach infinity in both cases (when $s>\frac{7}{4}m+\frac12$ for the Mat\'ern case), although we recall that $N$ should be treated as a fixed large number and $L$ is the running index. The two terms in the above upper bound can be understood as the error incurred by Bayesian optimization and by misspecification, respectively. For a fixed $N$, the regret will decrease as $L\rightarrow\infty$ to a threshold imposed by the misspecification error, which itself will go to zero with more data points from $\mathcal{M}$ as $N\rightarrow\infty$. Notice that the two terms are balanced at $L\asymp N^{\beta_{m,s}/\alpha_m}$ for the Mat\'ern case and $L\asymp N^{1/\alpha_m}$ for the SE case. Since $\beta_{m,s}\leq 1$, for a fixed large enough $N$, number of queries of the order $L \ll N$ would be sufficient in both cases because otherwise the error coming from misspecification will dominate. We shall demonstrate by simulations in Section \ref{sec:simulation} that the algorithm is able to find the optimizer (or an almost optimizer) after a number $L$ of queries that is significantly smaller than the size $N$ of the point cloud.   $\hfill \square$
\end{remark}

We end this section with a bound on the continuum regret $r_{N,L}^{\tiny \operatorname{cont}}$ (see its definition in \eqref{eq:continuum regret}). 

\begin{corollary}\label{cor:continuum regret}
Under the same assumptions as in Theorem \ref{thm:regret bound}, $r_{N,L}^{\tiny \operatorname{cont}}$ follows the same bound as \eqref{eq:regret explicit rate}.  
\end{corollary}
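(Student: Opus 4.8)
The plan is to reduce the continuum regret to the simple regret plus a term measuring how densely the point cloud $\M_N$ fills $\M$, and then to absorb that term into the right-hand side of \eqref{eq:regret explicit rate}. Since the query set satisfies $\mathcal{Z}_L\subseteq\M_N\subseteq\M$, we have $f(\Mmax)\ge f(\MNmax)\ge f(z_L^*)$, so combining the definitions \eqref{eq:continuum regret} and \eqref{eq:simple regret} gives
\begin{align*}
    r_{N,L}^{\tiny \operatorname{cont}} = \bigl(f(\Mmax)-f(\MNmax)\bigr) + \bigl(f(\MNmax)-f(z_L^*)\bigr) =: \Delta_N + r_{N,L}, \qquad \Delta_N\ge 0 .
\end{align*}
Theorem \ref{thm:regret bound} already controls $r_{N,L}$ on an event of probability $1-2\delta-O(N^{-c})$, so it suffices to show that, with high probability, $\Delta_N$ is of order no larger, up to logarithmic factors, than the misspecification term in \eqref{eq:regret explicit rate}; adding it then changes the bound only through the constants and logarithms already hidden in $\widetilde O(\cdot)$.

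To bound $\Delta_N$, let $x^*$ be a maximizer of $f$ over $\M$ and $x_{i^*}\in\M_N$ a closest sample to it, so that $\Delta_N\le f(x^*)-f(x_{i^*})$, which is at most the oscillation of $f$ over a ball of radius equal to the fill distance $h_{\M_N}:=\sup_{x\in\M}\min_{1\le i\le N}d_\M(x,x_i)$. Two ingredients are needed. First, sample-path regularity: via the Karhunen--Lo\`eve representation of \eqref{eq:continuum Matern} (resp. \eqref{eq:continuum SE}), Weyl's law $\lambda_i\asymp i^{2/m}$, and Sobolev embedding, a sample $f$ is a.s. H\"older continuous on $\M$ with any exponent $\gamma<\min(s-m,1)$ in the Mat\'ern case and with any $\gamma<1$ (indeed $f\in C^\infty$) in the SE case; under $s>\tfrac32 m-\tfrac12$ one checks that this permits a choice $\gamma>\tfrac{m}{2\alpha_m}$, and then $\Delta_N\le \|f\|_{C^{0,\gamma}(\M)}\,h_{\M_N}^{\gamma}$ with $\|f\|_{C^{0,\gamma}(\M)}<\infty$ a.s. Second, a fill-distance estimate: under Assumption \ref{assp:manifold}, a union bound over a minimal net of $\M$, using that small geodesic balls have volume $\asymp r^m$ and that geodesic and ambient distances are comparable at small scales (bounded curvature), yields $h_{\M_N}\lesssim(\log N/N)^{1/m}$ with probability $1-O(N^{-c})$. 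Hence $\Delta_N=\widetilde O\bigl(N^{-\gamma/m}\bigr)$ on an event of probability $1-O(N^{-c})$.

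Finally one compares exponents. Because $\alpha_m=(m+4+\iota)\vee(2m)\ge 2m$, the chosen $\gamma$ gives $\tfrac{\gamma}{m}>\tfrac{1}{2\alpha_m}$. In the SE case the misspecification term in \eqref{eq:regret explicit rate} is of order $N^{-1/(2\alpha_m)}$, so $N^{-\gamma/m}=\widetilde O(N^{-1/(2\alpha_m)})$. In the Mat\'ern case that term is $N^{-(2s-4m+1)\beta_{m,s}/((4s-6m+2)\alpha_m)}$; since $4s-6m+2>0$ for $s>\tfrac32 m-\tfrac12$, $\beta_{m,s}\le 1$, and $2(2s-4m+1)<4s-6m+2$, its exponent is strictly below $\tfrac{1}{2\alpha_m}$ (and when $2s-4m+1\le 0$ the term does not decay and the claim is trivial), so again $\Delta_N$ is dominated. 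Intersecting the event of Theorem \ref{thm:regret bound} with the fill-distance event (probability $1-2\delta-O(N^{-c})$; if one insists on a deterministic constant, intersect in addition with $\{\|f\|_{C^{0,\gamma}(\M)}\le R_\delta\}$, at the cost of one further $\delta$) then yields that $r_{N,L}^{\tiny \operatorname{cont}}=r_{N,L}+\Delta_N$ obeys the same bound \eqref{eq:regret explicit rate}. I expect the main obstacle to be the quantitative fill-distance estimate together with the control of the a.s.-finite H\"older norm of the Gaussian sample path; the comparison of exponents is elementary once $\alpha_m\ge 2m$ and $\beta_{m,s}\le 1$ are in hand.
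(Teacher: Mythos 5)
Your proposal is correct and takes essentially the same route as the paper's proof: decompose $r_{N,L}^{\tiny \operatorname{cont}}=\bigl(f(\Mmax)-f(\MNmax)\bigr)+r_{N,L}$, bound the first term by the (almost sure) H\"older continuity of the sample path times the covering radius of $\M_N$, and check that the resulting $\widetilde{O}(N^{-\gamma/m})$ term is absorbed into \eqref{eq:regret explicit rate}. The only differences are cosmetic: the paper obtains the covering radius $\rho_N$ from Proposition \ref{prop:d infinity bound} (the transport map) and the H\"older regularity from Lemma \ref{lemma:SE factor}, whereas you re-derive a fill-distance bound by a union bound and invoke Sobolev embedding for (stronger, but unnecessary) path regularity, while handling the random H\"older constant slightly more explicitly.
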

Therefore we can recover a global maximizer of $f$ over $\mathcal{M}$ as both $N$ and $L$ tend to infinity while keeping $L\ll N$.  
 
\section{Estimation and Tuning of GGP-UCB Parameters}\label{sec:implementation}
This section discusses important considerations for the practical implementation of the GGP-UCB algorithm. Subsections \ref{ssec:MLE}, \ref{ssec:truncation} and \ref{ssec:tuningB} describe respectively the estimation of prior GGP parameters, the choice of graph connectivity $h_N$ and truncation level $k_N$, and the empirical tuning of the acquisition function.

\subsection{Parameter Estimation}\label{ssec:MLE}
Theorem \ref{thm:regret bound} holds under the assumption that the GGP model uses the same parameters $\kappa,s,\tau$ as those for the truth. However, these parameters are typically unavailable in practice and need to be estimated. In this subsection we give a possible empirical solution. 

Recall that at the $\ell$-th iteration we pick the next query point $z_\ell$ based on \eqref{eq:choice of zt} and observe a noisy function value 
\begin{align*}
    y_\ell=f(z_\ell)+\eta_\ell,
\end{align*} 
where $f$ is assumed to be a sample from the manifold GP \eqref{eq:continuum Matern} or \eqref{eq:continuum SE} with parameter $\theta$ ($\theta=(\kappa,s)$ for the Mat\'ern case and $\theta=\tau$ for the SE case). We shall obtain an estimate $\theta_\ell$ of $\theta$ in each iteration of the above procedure using a maximum likelihood estimation approach:  
\begin{align}
    \theta_\ell=\underset{\theta}{\operatorname{arg\,max}}\,\, \mathbb{P}(Y_\ell\,|\,\theta), \label{eq:ML theta_t}
\end{align}
where $Y_\ell=(y_1,\ldots,y_\ell)^\top$. Exact maximization of \eqref{eq:ML theta_t} would require knowing the covariance structure of the underlying manifold GP, in particular the eigenpairs of the Laplace-Beltrami operator, the lack of which is precisely the reason why we introduced our graph-based approach. However, since the GGPs \eqref{eq:graph Matern} and \eqref{eq:graph SE} are what we actually use for modeling $f$, a natural idea is then to seek for parameters of these surrogate models that can best fit the data. Therefore we shall consider the following ``surrogate'' data model by pretending that the $y_\ell$'s are generated from the GGPs:
\begin{align*}
\begin{split}
    y_k &= u_N(z_k) + \eta_k, \quad \quad \eta_k \overset{i.i.d.}{\sim} \Nc(0, \sigma^2),\qquad k=1,\ldots,\ell,\\
      u_N &\sim \mathcal{N}(0, \mathcal{C}_N^{\theta}), 
     \end{split}
\end{align*}
where $\mathcal{C}_N^{\theta}$ is the covariance matrix associated with \eqref{eq:graph cf}. It follows that 
\begin{align}
    Y_{\ell}\sim \mathcal{N}(0,\Sigma^{\theta}_N), \quad \quad \Sigma^{\theta}_N = A \mathcal{C}_N^{\theta}A^\top+\sigma^2 I_\ell, \label{eq:likelihood of Y}
\end{align}
where $A\in\mathbb{R}^{\ell\times N}$ is a matrix of 0's and 1's whose entries indicate the indices of the $z_\ell$'s among $\M_N = \{x_i\}_{i=1}^N$. Maximization of the likelihood of $Y_{\ell}$ under \eqref{eq:likelihood of Y} gives the estimate $\theta_{\ell}$.

\subsection{Determining the Truncation Level \texorpdfstring{$\boldsymbol{k_N}$}{Kn} and the Graph Connectivity \texorpdfstring{$\boldsymbol{h_N}$}{hn}}
\label{ssec:truncation}
As mentioned in Subsection \ref{sec:graph model}, the truncation level $k_N$ is crucial in that the higher frequencies obtained from the graph Laplacian give poor approximations to their manifold counterparts and can have a negative impact on approximating manifold GPs. Proposition \ref{prop:graph GP approx bound} gives a scaling for $k_N$ that is based on the asymptotic behavior of the graph Laplacian. Empirically, one can simply choose $k_N$ by plotting the spectrum of $\Delta_N$. 

\begin{figure}[!htb]
\centering
\minipage{0.333\textwidth}
  \includegraphics[width=\textwidth]{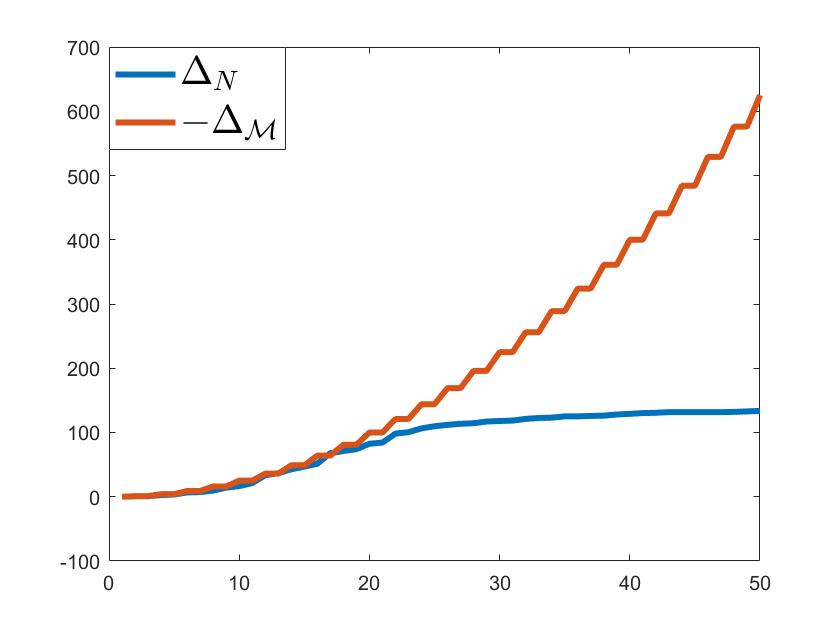} 
\vspace{-10pt}
\endminipage
\minipage{0.333\textwidth}
  \includegraphics[width=1\textwidth]{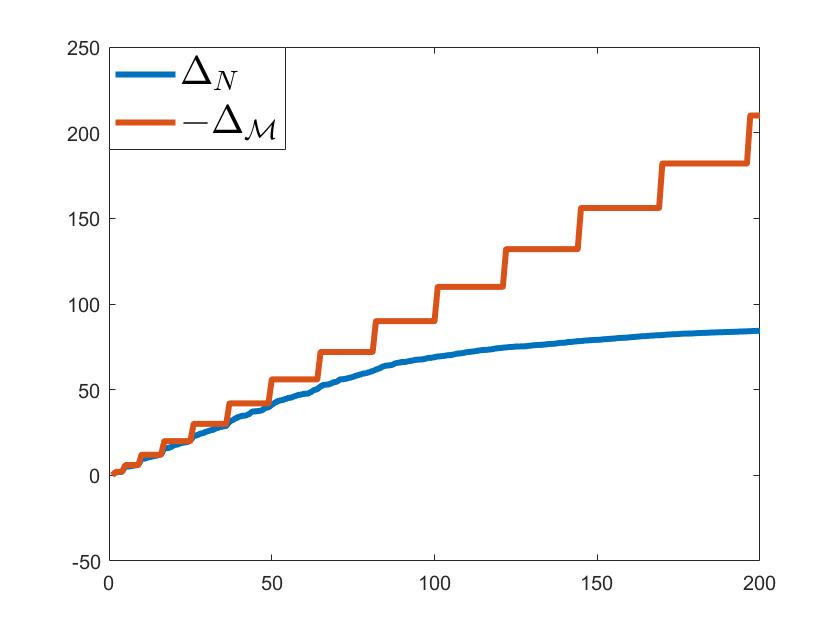}
\vspace{-10pt}
\endminipage
\caption{Spectrum of $\Delta_N$ versus spectrum of $-\Delta_{\mathcal{M}}$ for the unit circle (left) and the unit sphere (right). }
\label{fig:spectra}
\end{figure}

Proposition \ref{prop:eval efun bound} in the appendix gives an upper bound on the eigenvalue approximation, where the error is small only when $h_N\sqrt{\lambda_i}\ll1$. In practice, what we usually observe is not only such poor spectral approximation for large $i$'s, but also a ``saturation'' of the graph Laplacian eigenvalues after certain threshold. Figure \ref{fig:spectra} shows the first several eigenvalues of the Laplace-Beltrami operator $-\Delta_\M$ and the graph Laplacian $\Delta_N$  constructed with (a) $N=500$ points from the unit circle; and (b) $N=3000$ points from the unit sphere. We can see from both plots that for small index $i,$ the eigenvalues $\lambda_{N,i}$ of $\Delta_N$ approximate well the eigenvalues $\lambda_i$ of $-\Delta_\M$; however, the spectrum of $\Delta_N$ is essentially flat for large $i$. Therefore one can choose $k_N$ to be around the point of saturation in the spectrum of $\Delta_N$. 
Such saturation phenomenon, noted for instance in \cite{trillos2020consistency,garcia2018continuum,ruiyilocalregularization}, also helps to explain the need for truncation. Indeed, the eigenvalues $\lambda_{N,i}$ determine the decay of the coefficients in the series \eqref{eq:graph Matern} and \eqref{eq:graph SE} defining our GGPs. Without a truncation, 
too much weight would be given to the high frequencies, which would lead to overly rough sample paths. 
  
Another key parameter in the construction of our GGPs is the graph connectivity $h_N$ in the definition of the weights \eqref{eq:weight matrix}. A common choice \cite{garcia2020error,sanz2020unlabeled} is $h_N\propto\sqrt{\rho_N}$, where $\rho_N$ defined in \eqref{eq:rho_N} can be interpreted as the maximum distance between any two nearby $x_i$'s or the ``resolution'' of $\mathcal{M}_N$. In particular, the choice $h_N\propto\sqrt{\rho_N}$ ensures that the neighborhood of each $x_i$ in the graph is local but rich enough to capture the local geometry. Moreover, this choice balances the two terms in the error bound $\rho_N/h_N+h_N\sqrt{\lambda_i}$ in Proposition \ref{prop:eval efun bound}. The scaling of $\rho_N$ is shown in \cite[Theorem 2]{garcia2020error} and recorded in Proposition \ref{prop:d infinity bound}, which leads to the choice $h_N=C N^{-1/2m}$ (the logarithmic factor can be absorbed into the proportion constant). The proportion constant can be determined again by plotting the corresponding spectrum of $\Delta_N$. Starting with a large $C$, one can keep decreasing the value of $C$ while observing the point of saturation becoming larger, until one hits a point where the spectrum is no longer meaningful. This latter case will happen when $h_N$ is too small so that the graph is disconnected and the graph Laplacian has repeated zero eigenvalues.

\subsection{Empirical Tuning of the Acquisition Function}\label{ssec:tuningB}
Recall that the selection rule \eqref{eq:choice of zt} incorporates information on the level of misspecification $\epsilon_N$ incurred by the GGPs. Proposition \ref{prop:graph GP approx bound} gives such a bound on $\epsilon_N$, which goes to zero as $N\rightarrow \infty$. However, for practical considerations, the upper bound may not be small for certain ranges of $\delta$ and $N$, especially since there is a possibly non-sharp proportion constant in $\epsilon_N$. 
Therefore this could cause the term $\epsilon_N\sqrt{\ell-1}/\delta\sigma$ in $B_{N,\ell}$ to be overly large, so that the acquisition function puts too much weight on the posterior standard deviation, as a result of which exploration overwhelms exploitation. For this reason, we shall consider instead setting $B_{N,\ell}$ as
\begin{align}
    B_{N,\ell} = a \, \sqrt{2\log \left(\frac{\pi^2\ell^2N}{6\delta}\right)} \,\,, \label{eq:modofied Bl}
\end{align}
with a tuning parameter $a>0$.
As noticed in the simulation studies in \cite{srinivas2010gaussian}, setting $a=1/5$ in practice leads to the best performance in well-specified cases, i.e., when $\epsilon_N=0$ (although their theoretical results are proved for $a=1$). Motivated by this observation, we shall set $a=1/2$ throughout for our simulation studies in Section \ref{sec:simulation} to account for the case $\epsilon_N\neq 0$. The idea is that the original correction term $\epsilon_N\sqrt{\ell-1}/\delta\sigma$ for misspecification is now absorbed as the increment $(1/2-1/5)\sqrt{2\log (\pi^2\ell^2N/6\delta)}$. 

Finally, the selection rule \eqref{eq:choice of zt} searches for the query points over the entire $\mathcal{M}_N$ at each iteration, which could return points that have already been picked and get stuck at local optima in practice. We shall modify \eqref{eq:choice of zt} slightly by maximizing it over $\mathcal{M}_N\backslash\{z_1,\ldots,z_{\ell-1}\}$ at the $\ell$-th iteration, i.e., by asking the algorithm to output a query point that has not been chosen in previous iterations.

\section{Numerical Examples} \label{sec:simulation}
This section contains preliminary numerical experiments that complement the theory. The main focus will be to illustrate the performance of our method within the scope of Bayesian optimization rather than conduct an exhaustive comparison with existing discrete optimization algorithms. 

In Subsection \ref{sec:ex-circle} we give a detailed investigation of our approach over the unit circle, where eigenvalues and eigenfunctions of the Laplace-Beltrami operator are analytically known and manifold GPs are computable. The goal of this example is to show that our discrete GGP-UCB algorithm, which only requires point cloud data from the unit circle, achieves comparable performance to a UCB algorithm with manifold GPs. We also illustrate the parameter estimation technique discussed in Subsection \ref{ssec:MLE}.  In Subsection \ref{sec:ex-cow} we consider an artificial manifold for which the spectrum of its Laplace-Beltrami operator is not available, showcasing a typical application of our framework when the manifold is only accessed through a point cloud. The goal of this example is to show the empirical advantage of using our geometry-informed GGPs over Euclidean GPs. Finally, in Subsection \ref{sec:ex-heat} we apply Algorithm \ref{algo:GP-UCB} to solve an inverse problem ---heat source detection over the sphere, which is only represented as a point cloud. Here the objective function is defined in terms of a partial differential equation that needs to be numerically solved. 
The goal of this example is to illustrate the applicability of our algorithm with expensive-to-evaluate objective functions that need to be approximated using graph-based techniques. 

Throughout all the examples in Subsections \ref{sec:ex-circle} and \ref{sec:ex-cow}, we set $\sigma=0.05\cdot \|f_N\|_2/\sqrt{N}$, which corresponds to a noise level of roughly 5\%. We adopt the selection rule \eqref{eq:modofied Bl} and set $\delta=0.1$ in the choice of $B_{N,\ell}$. 

\subsection{The Unit Circle}\label{sec:ex-circle}
Let $\mathcal{M}$ be the unit circle in $\mathbb{R}^2$ and $\mathcal{M}_N=\{x_i\}_{i=1}^{N=500}$ be i.i.d. samples from the uniform distribution over $\mathcal{M}$. 
 The fact that the eigenvalues and eigenfunctions of the Laplace-Beltrami operator are available in closed form allows us to carry out ---for comparison purposes--- computation on the continuum level. In particular, we can compute the manifold GP covariance functions defined in \eqref{eq:continuum SE cf} and \eqref{eq:continuum Ma cf}.

To start with, suppose first that $f$ is a sample from the manifold Mat\'ern GP \eqref{eq:continuum Matern} with parameters $\tau_*$ and $s_*$, which can be generated from \eqref{eq:continuum Matern} with a sufficiently high truncation. We shall compare the performance of Algorithm \ref{algo:GP-UCB} with three different choices of the prior: (i) \eqref{eq:continuum Matern} with true parameters, (ii) \eqref{eq:graph Matern} with true parameters, and (iii) \eqref{eq:graph Matern} with inferred parameters, i.e., 
\begin{alignat}{4}
    &\text{(MGP-UCB)} \quad  &&u^{ \mathcal{M}}&&=\kappa_*^{s_*-\frac{m}{2}}\sum_{i=1}^K(\kappa_*^2+\lambda_i)^{-\frac{s_*}{2}}\xi_i\psi_i, \qquad &&\xi_i\overset{i.i.d.}{\sim}\mathcal{N}(0,1), \qquad  \label{eq:MGP}\\
    &\text{(GGP-UCB)} \quad &&u^{\mathcal{M}_N}&&=\kappa_*^{s_*-\frac{m}{2}}\sum_{i=1}^{k_N}(\kappa_*^2+\lambda_{N,i})^{-\frac{s_*}{2}}\xi_i\psi_{N,i},  &&\xi_i\overset{i.i.d.}{\sim}\mathcal{N}(0,1), \label{eq:graph trun}\\
    &\text{(GGP-UCB-ML)} \quad &&u^{ \text{\tiny MLE}}&&=\kappa_{\ell}^{s_{\ell}-\frac{m}{2}}\sum_{i=1}^{k_N}(\kappa_{\ell}^2+\lambda_{N,i})^{-\frac{s_{\ell}}{2}}\xi_i\psi_{N,i},   &&\xi_i\overset{i.i.d.}{\sim}\mathcal{N}(0,1), \label{eq:graph mle}
\end{alignat} 
where $K=100$ is a truncation for computing $u^{\mathcal{M}}$, and $\kappa_{\ell}$ and $s_{\ell}$ are the estimated parameters as discussed in Subsection \ref{ssec:MLE}. Specifically, we shall view MGP-UCB as an oracle algorithm whose performance serves as a benchmark, since for the graph-based algorithms we assume to be only given the point cloud $\{x_i\}_{i=1}^{N=500}$ and to have no access to the $\lambda_i$'s and $\psi_i$'s. %
We set $k_N=20$ and $h_N=4\times N^{-1/2}$ in the construction of $\Delta_N$.

\begin{figure}[!htb]
\centering
\minipage{0.333\textwidth}
  \includegraphics[width=\textwidth]{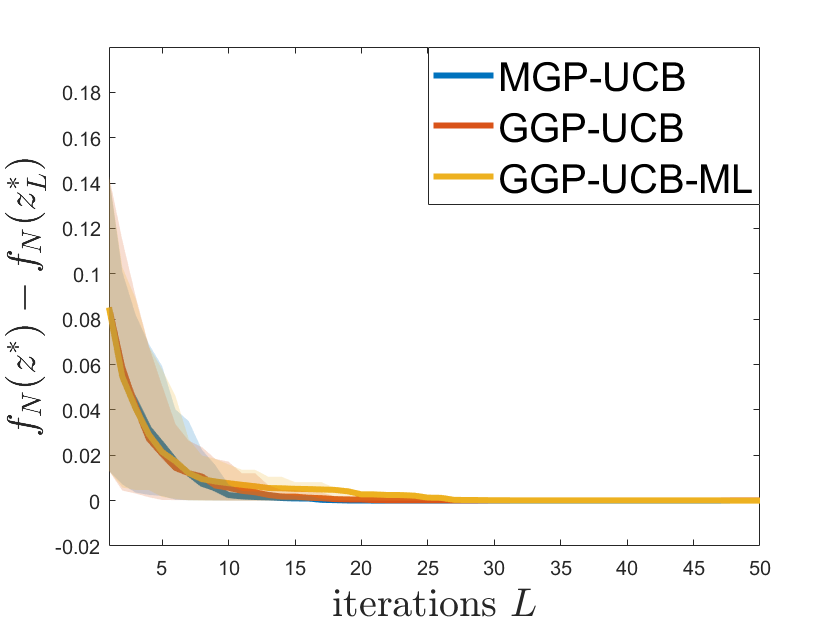}
\vspace{-10pt}\subcaption{$\kappa^2_*=5,s_*=2.$}
\endminipage
\minipage{0.333\textwidth}
  \includegraphics[width=1\textwidth]{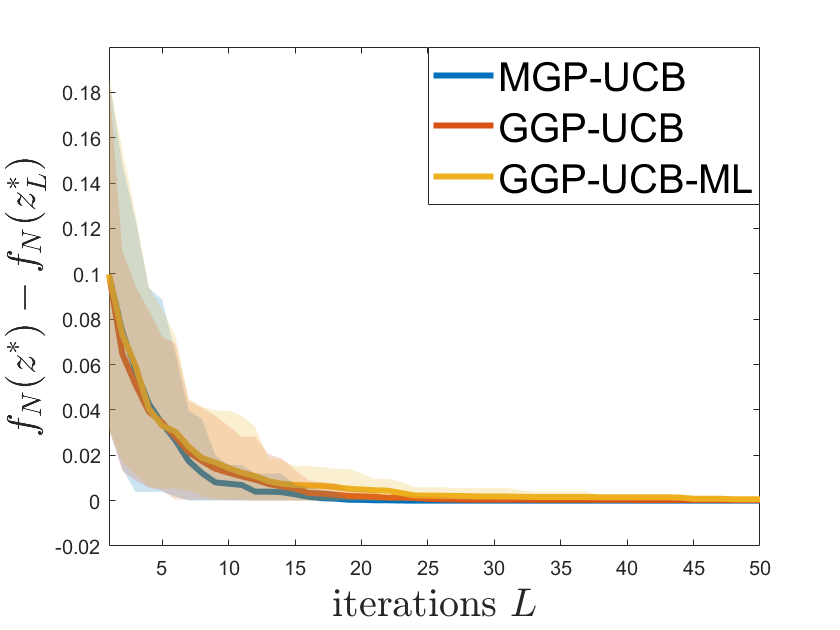}
\vspace{-10pt}\subcaption{$\kappa^2_*=10,s_*=2.$}
\endminipage
\minipage{0.333\textwidth}
  \includegraphics[width=1\textwidth]{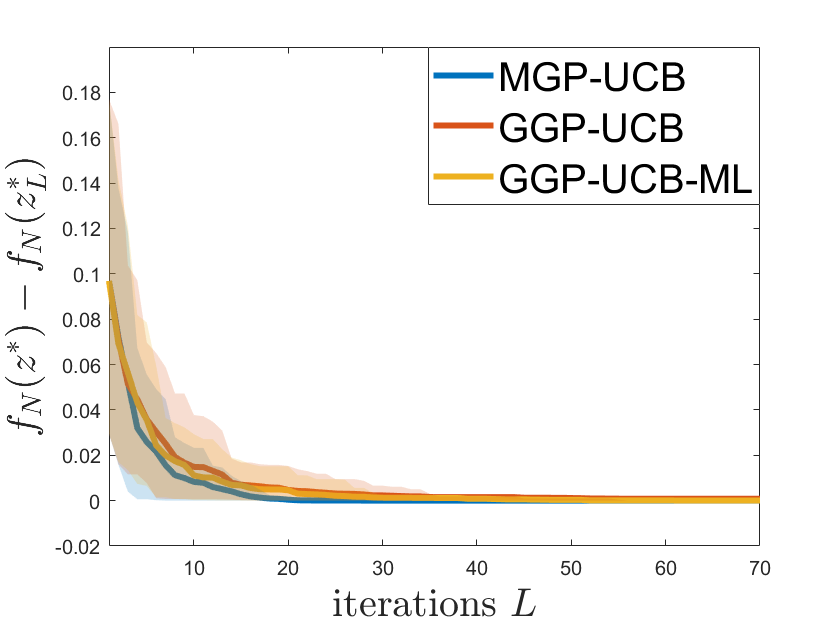}
\vspace{-10pt}\subcaption{$\kappa^2_*=15,s_*=2.$}
\endminipage

\caption{Comparisons of the simple regrets obtained from MGP-UCB (prior with \eqref{eq:MGP}), GGP-UCB (prior with \eqref{eq:graph trun}), and GGP-UCB-ML (prior with \eqref{eq:graph mle}) when $f$ is a Mat\'ern GP \eqref{eq:continuum Matern}.   The curves represent the average regrets over 50 trials and the shaded regions represent the 10\% $\sim$ 90\% percentiles.  }
\label{fig:circle-ma}
\end{figure}

Figure \ref{fig:circle-ma} shows the comparison for three sets of parameters $(\kappa_*,s_*)$, representing increasingly oscillatory true objective $f$. 
In all cases, the oracle MGP-UCB approach achieves the smallest regret, which is expected since it assumes complete knowledge of the unit circle. Meanwhile, the other two approaches show competitive performance and find the maximizer in less than $L=50$ iterations, which is much smaller than the size $N=500$ of the point cloud. In particular, incorporating maximum likelihood estimation of the parameters gives similar performance compared to the case when the parameters are assumed to be known. 
In a parallel setting, we also perform a similar comparison when the truth is a SE GP \eqref{eq:continuum SE}, where the graph SE GP \eqref{eq:graph SE} is used for modeling. Figure \ref{fig:circle-SE} shows the comparison, which is qualitatively similar to the Mat\'ern case except that the approach incorporating maximum likelihood gives a slightly worse performance. Nevertheless, it is still able to find a near optimizer within 50 iterations. 
 
\begin{figure}[!htb]
\minipage{0.333\textwidth}
  \includegraphics[width=\textwidth]{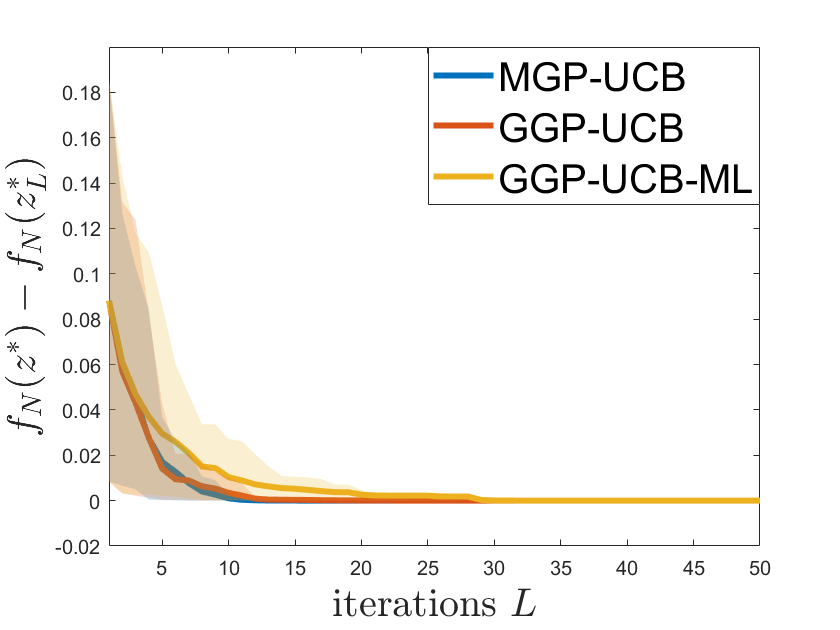}
\vspace{-10pt}\subcaption{$\tau_*=0.15.$}
\endminipage
\minipage{0.333\textwidth}
  \includegraphics[width=\textwidth]{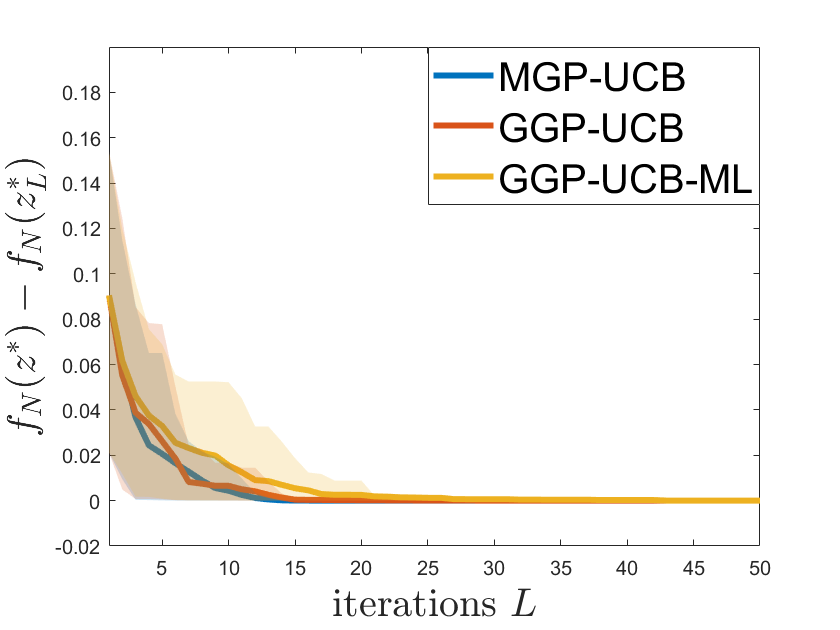}
\vspace{-10pt}\subcaption{$\tau_*=0.1.$}
\endminipage
\minipage{0.333\textwidth}
  \includegraphics[width=1\textwidth]{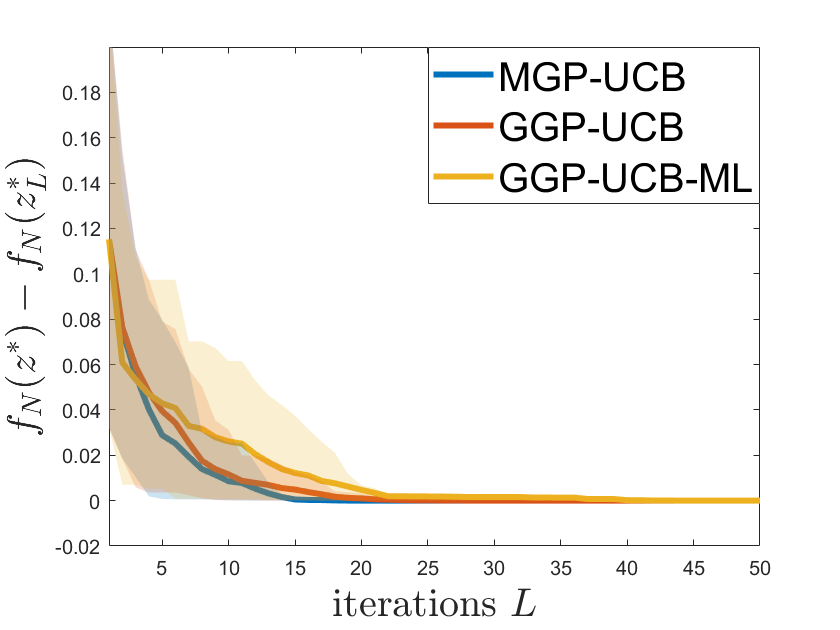}
\vspace{-10pt}\subcaption{$\tau_*=0.05.$}
\endminipage
\caption{Comparisons of the simple regrets obtained from MGP-UCB (prior with \eqref{eq:MGP}), GGP-UCB (prior with \eqref{eq:graph trun}), and GGP-UCB-ML (prior with \eqref{eq:graph mle}) when $f$ is a SE GP \eqref{eq:continuum SE}.   The curves represent the average regrets over 50 trials and the shaded regions represent the 10\% $\sim$ 90\% percentiles.  }
\label{fig:circle-SE}
\end{figure}

Next, we investigate the effect of the number $N$ of point cloud samples on the algorithmic performance. We generate the truth from \eqref{eq:MGP} as before and apply our graph-based algorithms with $N$=100, 300, 500 uniform samples from the unit circle. 
Figure \ref{fig:circle-point cloud} shows the results, suggesting improved performance as $N$ increases, in agreement with the qualitative behavior predicted by our regret bounds in \eqref{eq:regret explicit rate}.

\begin{figure}[!htb]
\centering
\minipage{0.333\textwidth}
  \includegraphics[width=\textwidth]{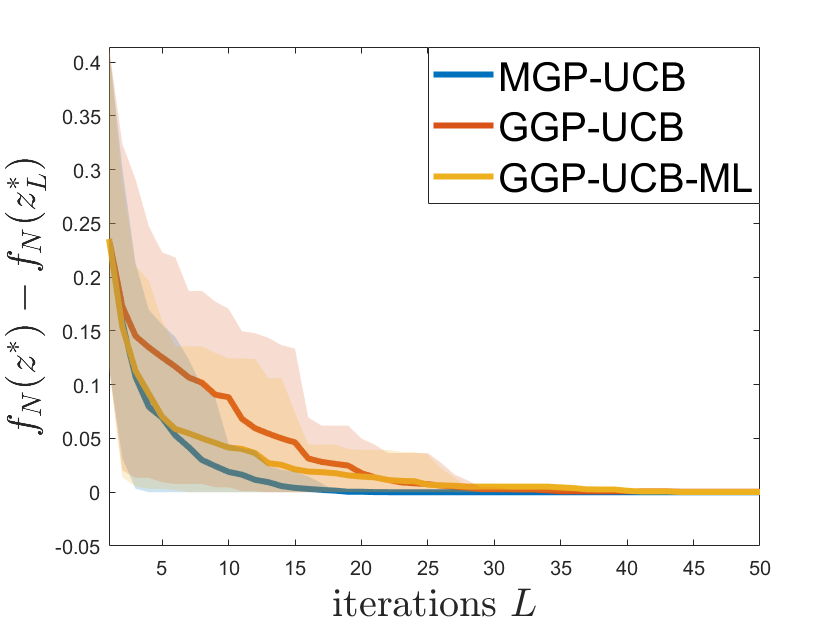}
\vspace{-10pt}\subcaption{$N=100$.}
\endminipage
\minipage{0.333\textwidth}
  \includegraphics[width=1\textwidth]{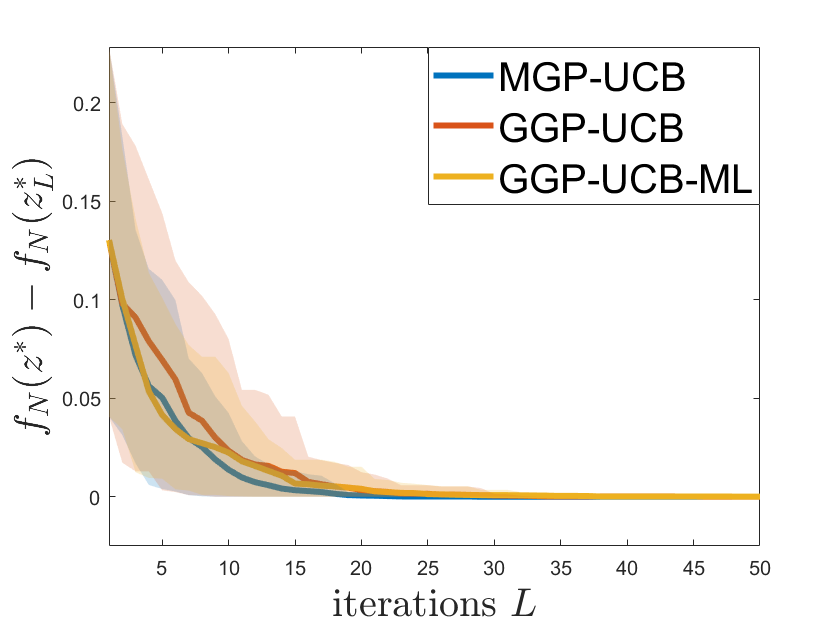}
\vspace{-10pt}\subcaption{$N=300$.}
\endminipage
\minipage{0.333\textwidth}
  \includegraphics[width=1\textwidth]{circle-tau=15-s=2.png}
\vspace{-10pt}\subcaption{$N=500$.}
\endminipage

\caption{Comparisons of the simple regrets obtained from MGP-UCB (prior with \eqref{eq:MGP}), GGP-UCB (prior with \eqref{eq:graph trun}), and GGP-UCB-ML (prior with \eqref{eq:graph mle}) with different size $N$ of the point cloud when $f$ is a Mat\'ern GP \eqref{eq:continuum Matern} with parameters $\kappa_*^2=15$ and $s_*=2$.   The curves represent the average regrets over 50 trials and the shaded regions represent the 10\% $\sim$ 90\% percentiles.  }
\label{fig:circle-point cloud}
\end{figure}

Finally, to further investigate the performance of our GGP-UCB algorithm, we consider optimizing three benchmark functions ---the Levy, Ackley, and Rastrigin functions defined over the circle (with suitable rescaling): 
\begin{align}
    \flevy(\theta)&=\Big(\frac{3\theta}{4}\Big)^2 \bigg(1+\sin^2\Big(\frac{\pi(3\theta+3)}{2}\Big)\bigg)  \tag{Levy}\label{eq:levy},\qquad \theta\in[-\pi,\pi),\\
    \fackley(\theta)&=-20\exp(-0.1\theta)-\exp(\cos(2\pi \theta) )+20+\exp(1),\qquad \theta\in[-\pi,\pi),\tag{Ackley}\label{eq:ackley}\\
    \frastri(\theta)&=2+\theta^2-2\cos(2\pi\theta),\qquad \theta\in[-\pi,\pi),\tag{Rastrigin}\label{eq:rastri}
\end{align}
where we identify points on the circle with their angle $\theta \in [-\pi,\pi)$.
The top row of Figure \ref{fig:circle-benchmarks} shows plots of the functions $\flevy$, $\fackley$, and $\frastri$, all of which admit many sharp local minima. These benchmark functions will serve as examples where the truth to be optimized is not generated from a GP. 
As before, we shall compare the performance of Algorithm \ref{algo:GP-UCB} with three different choices of prior \eqref{eq:MGP}, \eqref{eq:graph trun}, \eqref{eq:graph mle}, by manually setting $\kappa_*=15$ and $s_*=1$ for the first two. 
The results are shown in the bottom row of Figure \ref{fig:circle-benchmarks}, where all algorithms can find the global optimizer with very few iterations
(much fewer than the total number $N=500$ of the point cloud), including GGP-UCB-ML which infers the covariance parameters. 
This illustrates the applicability of our algorithm when the truth is not necessarily a sample path from the same GP model that we use for the algorithm.

\begin{figure}[!htb]
\centering
\minipage{0.333\textwidth}
  \includegraphics[width=\textwidth]{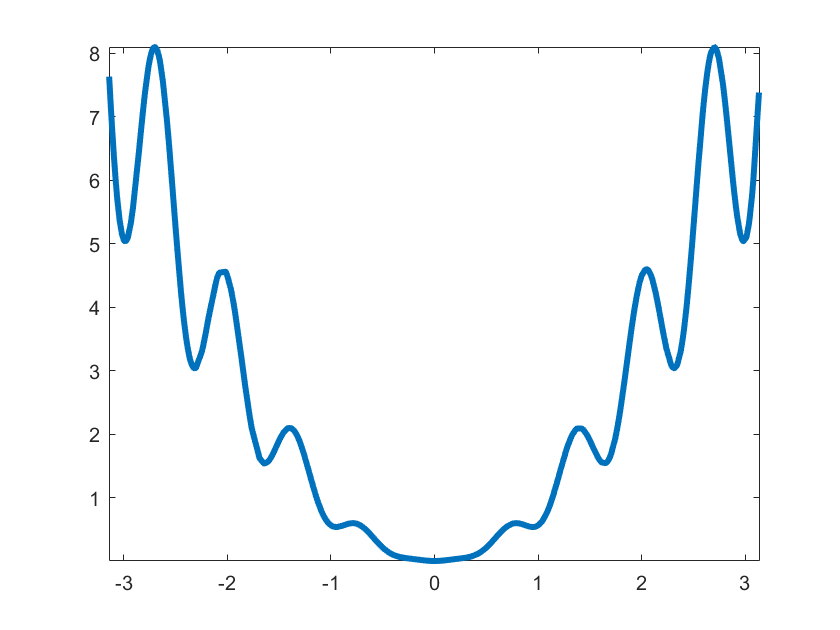}
\vspace{-10pt}
\endminipage
\minipage{0.333\textwidth}
  \includegraphics[width=\textwidth]{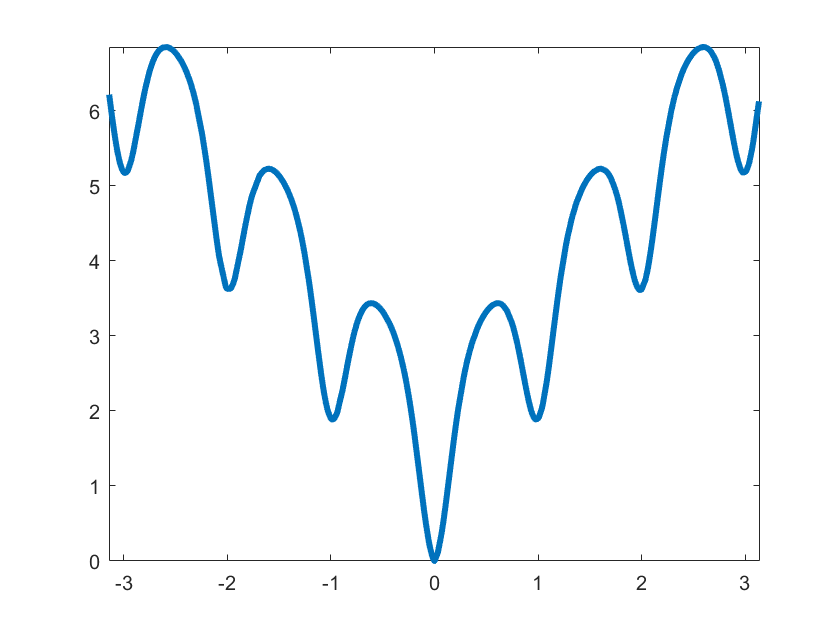}
\vspace{-10pt}
\endminipage\minipage{0.333\textwidth}
  \includegraphics[width=\textwidth]{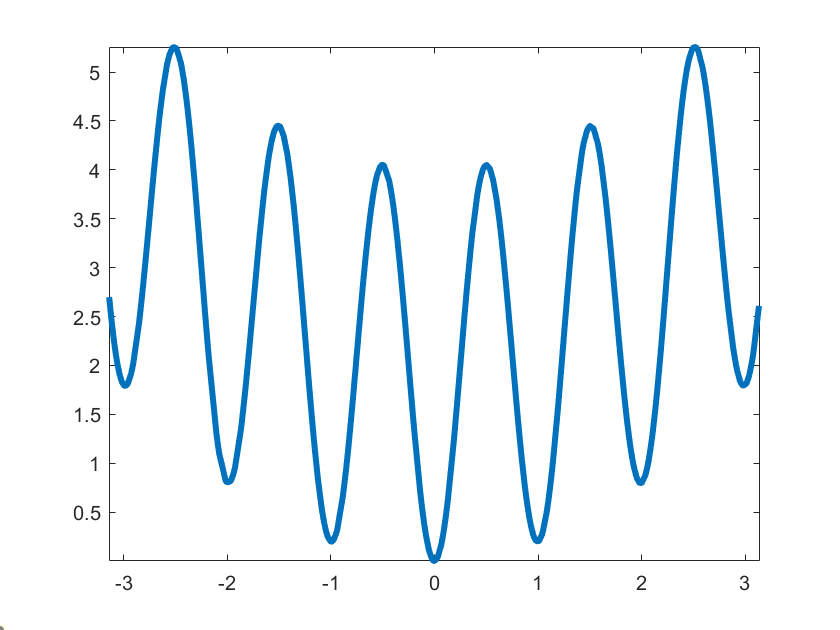}
\vspace{-10pt}
\endminipage

\minipage{0.333\textwidth}
  \includegraphics[width=1\textwidth]{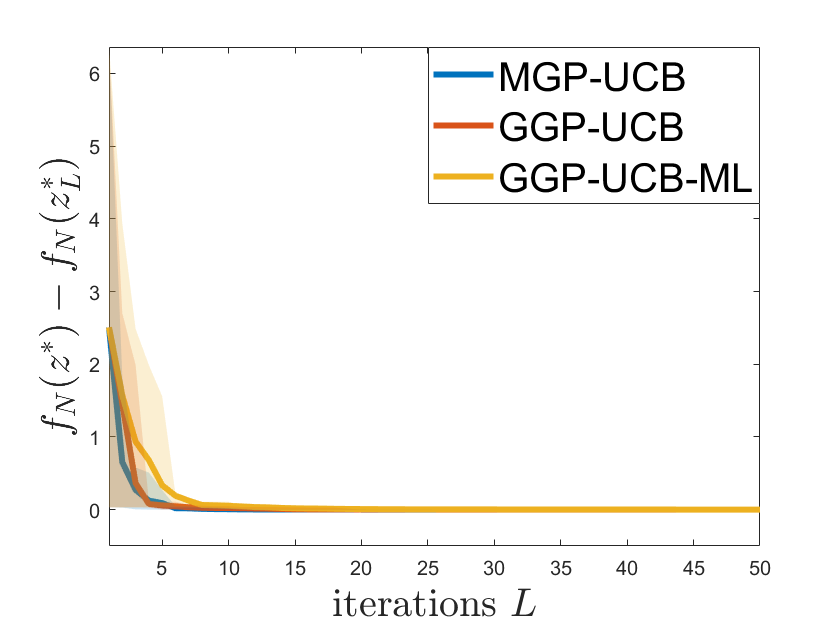}
\vspace{-10pt}\subcaption{\ref{eq:levy}.}
\endminipage
\minipage{0.333\textwidth}
  \includegraphics[width=1\textwidth]{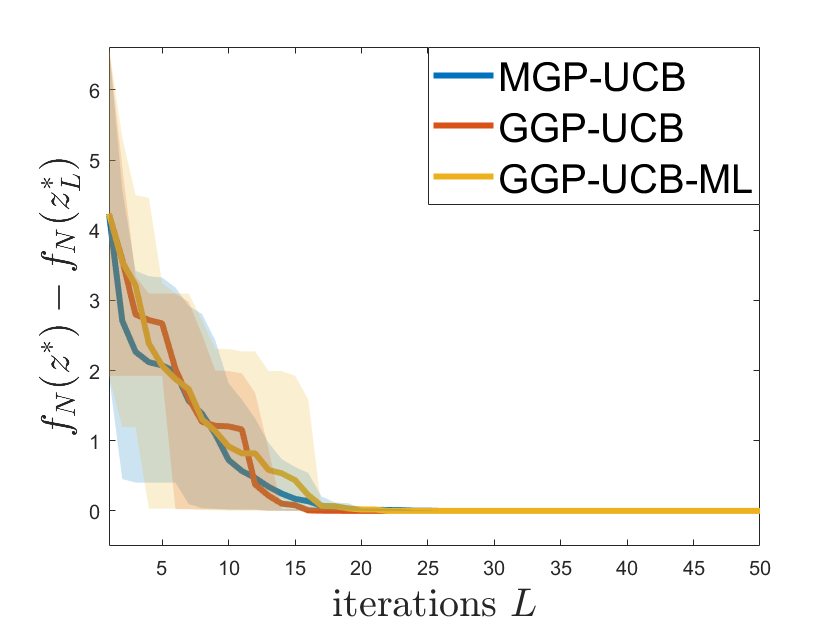}
\vspace{-10pt}\subcaption{\ref{eq:ackley}.}
\endminipage
\minipage{0.333\textwidth}
  \includegraphics[width=1\textwidth]{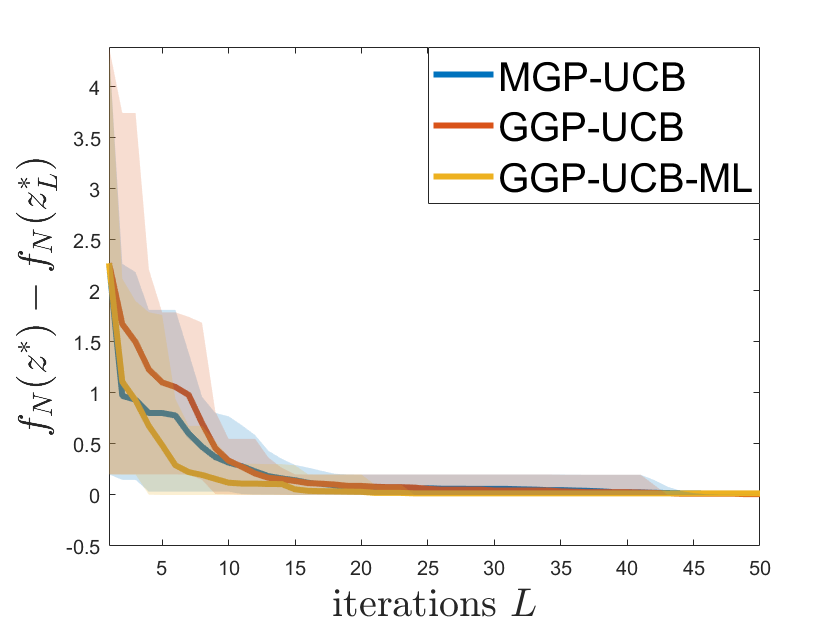}
\vspace{-10pt}\subcaption{\ref{eq:rastri}.}
\endminipage

\caption{Top row: plots of the \ref{eq:levy}, \ref{eq:ackley}, and \ref{eq:rastri} functions. Bottom row: Comparisons of the simple regrets obtained from MGP-UCB (prior with \eqref{eq:MGP}), GGP-UCB (prior with \eqref{eq:graph trun}), and GGP-UCB-ML (prior with \eqref{eq:graph mle}) for optimizing the three functions respectively. The curves represent the average regrets over 50 trials and the shaded regions represent the 10\% $\sim$ 90\% percentiles.  }
\label{fig:circle-benchmarks}
\end{figure}

\begin{remark}\label{remark:mle}
We end this example with a remark on inferring the GP parameters with maximum likelihood. 
For the Mat\'ern case, our experience suggests that joint estimation of $\kappa_{\ell}, s_{\ell}$ for \eqref{eq:graph mle} turns out to be unstable, and hence in the simulations above we have fixed $\kappa_{\ell}$ to be 1 throughout and only estimated $s_{\ell}$.
Such an observation may be related to the fact that not all parameters for the Mat\'ern model but only a certain combination of them are identifiable (see e.g. \cite{zhang2004inconsistent,bolin2020rational,li2021inference}). 
This issue may be exacerbated by the fact that the graph Mat\'ern GP we adopt is only an approximation of the Mat\'ern model, and similarly for the squared exponential model.  A detailed investigation of maximum likelihood for GGPs is an interesting direction for future research. Our focus on the remaining experiments will be however on illustrating other important aspects of our GGP-UCB algorithm, and for this reason we henceforth assume the GP parameters to be known or tune them empirically. $\hfill \square$
\end{remark}

\subsection{Two-Dimensional Artificial Manifold}\label{sec:ex-cow}
In this subsection we consider an artificial two-dimensional manifold, whose  
point cloud representation---taken from Keenan Crane’s 3D repository \cite{3dmodel}---is shown in Figure \ref{fig:cow-manifold}. This example is motivated by an application to locate the point of highest temperature \cite{srinivas2010gaussian} on a surface where an explicit parameterization is not given. 
Unlike the unit circle case in Subsection \ref{sec:ex-circle}, the eigenvalues and eigenfunctions of the Laplace-Beltrami operator over this new manifold are no longer known analytically, which prevents us from computing manifold GP covariances. The goal of this example is to demonstrate the superior performance of GGPs over Euclidean GPs.  

First, we shall generate our truth using a finer point cloud than the one given for optimization. More precisely, the original dataset $\mathcal{M}_{\bar{N}}$ provided by \cite{3dmodel} consists of $\bar{N}=2930$ points, but we only assume to be given a subsample of $N=2000$ points as our $\mathcal{M}_N$. 
The truth is then generated as a sample defined on the finer point cloud $\mathcal{M}_{\bar{N}}$:
\begin{align}\label{eq:cow-truth}
    f_{\bar{N}}=\kappa_*^{s_*-\frac{m}{2}}\sum_{i=1}^{k_{\bar{N}}}(\kappa_*^2+\lambda_{\bar{N},i})^{-\frac{s_*}{2}}\xi_i\psi_{\bar{N},i},  \qquad \xi_i\overset{i.i.d.}{\sim}\mathcal{N}(0,1),
\end{align}
where $\lambda_{\bar{N},i}$ and $\psi_{\bar{N},i}$'s are the eigenpairs of the graph Laplacian $\Delta_{\bar{N}}$ constructed with all $\bar{N}$ points. Here the graph connectivity is taken to be $h_{\bar{N}}=4\times \bar{N}^{-1/2}$ and $k_{\bar{N}}$ is set to be 50 based on the eigenvalue saturation of $\Delta_{\bar{N}}$. Figure \ref{fig:cow-ma-sample} shows one realization of $f_{\bar{N}}$ with parameters $\kappa_*^2=5$ and $s_*=2.5$

Since the manifold GP is not available in this example, we shall compare the performance of Algorithm \ref{algo:GP-UCB} with prior taken as a GGP (cf.\eqref{eq:graph trun} with graph connectivity $h_N=4\times N^{-1/2}$ and truncation $k_N=50$) or a Euclidean GP (EGP). As the truth \eqref{eq:cow-truth} is of Mat\'ern type, it is natural to take the EGP as defined by the usual Mat\'ern covariance function \eqref{eq:Ma SE cf} by viewing points in $\mathcal{M}_N$ as elements of $\mathbb{R}^3$. 
As discussed in Remark \ref{remark:mle}, we shall use the true parameters in GGP modeling, but point out that the true parameters are not necessarily the ones that lead to the best performance since the truth is generated based on $\Delta_{\bar{N}}$, whose eigenpairs are only close to but different from those of $\Delta_N$ used for computation. 
For EGP modeling, we tune the parameters empirically and report the one that leads to the smallest regret. The results are presented in Figure \ref{fig:cow-ma-regret}, suggesting that GGP modeling outperforms EGP and can find the optimizer with far fewer queries than the size $N=2000$ of the given point cloud. 
In a parallel setup, Figure \ref{fig:cow-se-regret} compares the performance of GGP-UCB with EGP-UCB when the truth and the associated prior models are of squared exponential type (cf. \eqref{eq:graph SE} and \eqref{eq:Ma SE cf}), where qualitatively similar behavior is observed.

\begin{figure}[!htb]
\minipage{0.333\textwidth}
  \includegraphics[width=\textwidth]{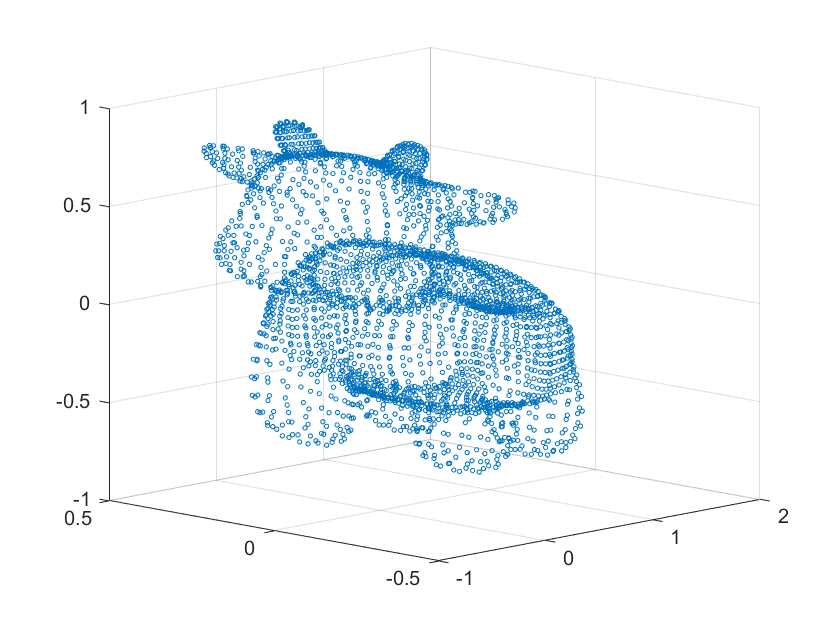}
\vspace{-10pt}\subcaption{Point cloud.}\label{fig:cow-manifold}
\endminipage\hfill
\minipage{0.333\textwidth}
  \includegraphics[width=1\textwidth]{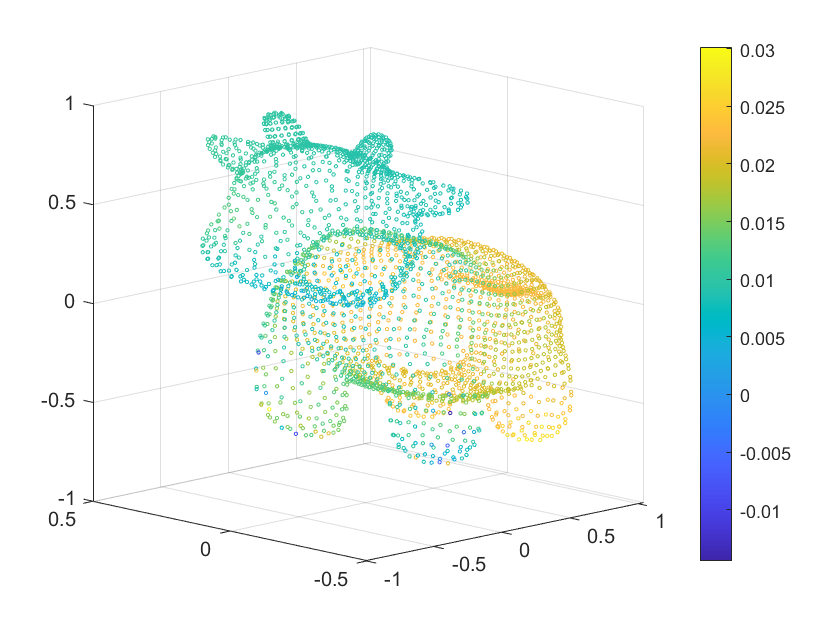}
\vspace{-10pt}\subcaption{Mat\'ern GGP sample.}\label{fig:cow-ma-sample}
\endminipage\hfill
\minipage{0.333\textwidth}
  \includegraphics[width=1\textwidth]{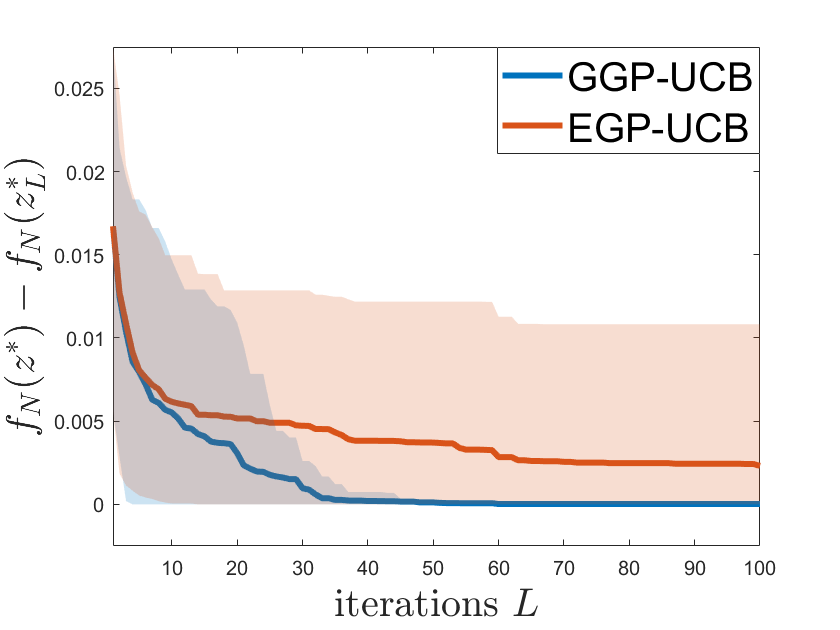}
\vspace{-10pt}\subcaption{Simple regrets.}\label{fig:cow-ma-regret}
\endminipage\hfill

\caption{
 (a) Point cloud. (b) A random sample  $f_{\bar N}$ defined as \eqref{eq:cow-truth} with $\kappa^2_*=5,s_*=2.5$; values of $f_
 {\bar N}$ vary smoothly along the point cloud. (c) Comparison of simple regrets as a function of $L$ between GGP-UCB and EGP-UCB.   The curves represent the average regrets over 50 trials and the shaded regions represent the 10\% $\sim$ 90\% percentiles. }
\label{fig:cow}
\end{figure}

\begin{figure}[!htb]
\centering
\minipage{0.333\textwidth}
  \includegraphics[width=1\textwidth]{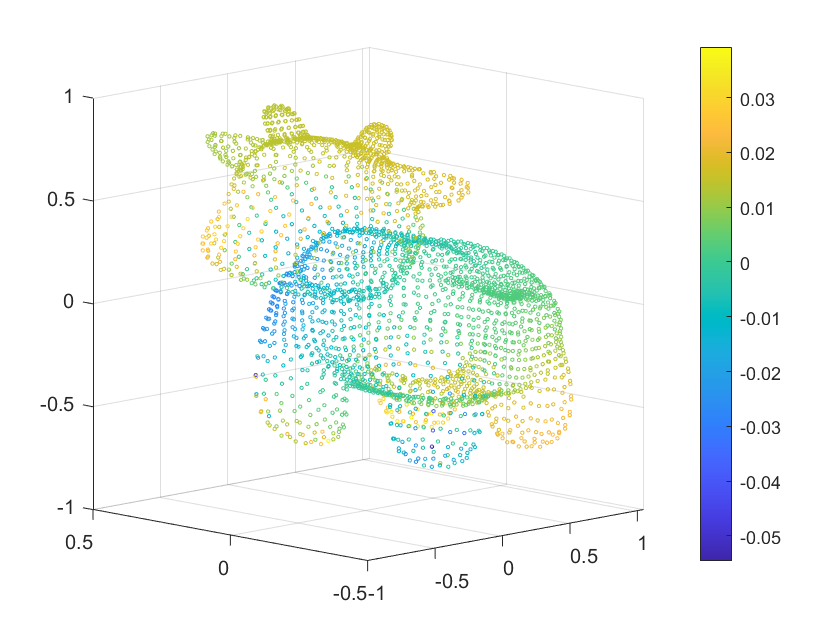}
\vspace{-10pt}\subcaption{SE GGP sample.}\label{fig:cow-se-sample}
\endminipage
\minipage{0.333\textwidth}
  \includegraphics[width=1\textwidth]{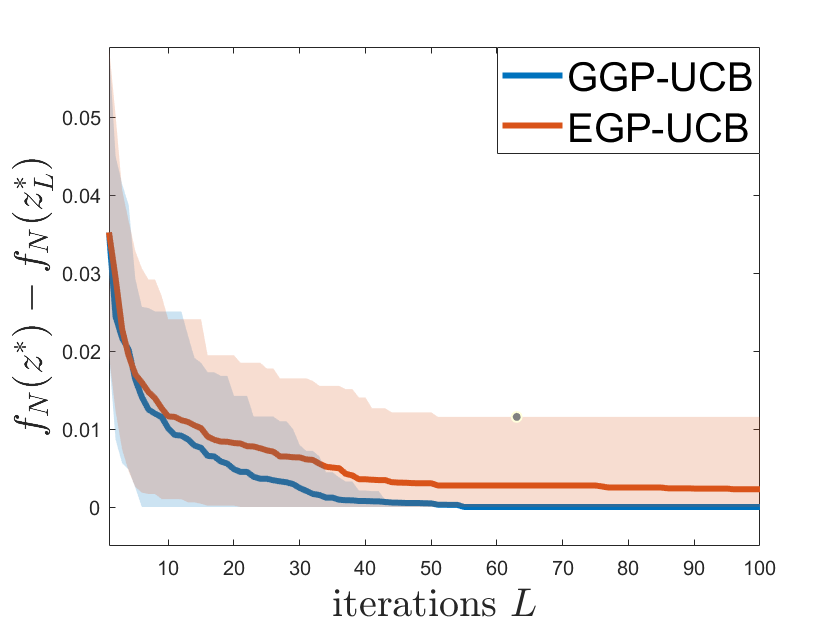}
\vspace{-10pt}\subcaption{Simple regrets.}\label{fig:cow-se-regret}
\endminipage 
\caption{
 (a) A random sample from \eqref{eq:graph SE} based on the graph Laplacian $\Delta_{\bar{N}}$ with $\tau_*=0.05$.   (b) Comparison of simple regrets as a function of $L$ between GGP-UCB and EGP-UCB.   The curves represent the average regrets over 50 trials and the shaded regions represent the 10\% $\sim$ 90\% percentiles.  }
\label{fig:cow-se}
\end{figure}

\subsection{Heat Source Detection on the Sphere}\label{sec:ex-heat}
 In this subsection we employ Algorithm \ref{algo:GP-UCB} on a heat source detection problem on the two-dimensional unit sphere $S^2$, which is given only as a point cloud. The goal of this example is to demonstrate the applicability of our BO framework in inverse problem settings, where the objective function to be optimized involves an expensive-to-evaluate \emph{forward map} that usually needs to be approximated.

Consider the heat equation 
\begin{align}\label{heatEqn}
\begin{split}
\begin{cases}
\phi_t  = \Delta_{S^2} \phi , \quad \quad \quad \quad &(x, t) \in S^2\times[0,\infty),  \\
\phi(x,0) = \phi_0(x),   &x \in S^2,
\end{cases}
\end{split}
\end{align}
where $\Delta_{S^2}$ is the Laplace-Beltrami operator on $S^2$ and $\phi_0$ is an initial heat configuration. The solution of the heat equation for some time $t > 0$ is given by
\begin{equation}
    \phi(x, t) = \sum_{i=1}^\infty \langle \phi_0, \psi_i \rangle_{S^2} \cdot e^{-\lambda_i t}\psi_i(x), \quad x \in S^2, \label{eq:heat sol}
\end{equation}
where $\{(\lambda_i, \psi_i)\}_{i=1}^\infty$ are the eigenpairs of $-\Delta_{S^2}$ and $\langle \cdot, \cdot \rangle_{S^2}$ is the Riemannian  inner product associated to $S^2$. The initial heat configuration is given by 
\begin{align}\label{eq:heat source}
\phi_0(x) = \exp\left({\zeta {z^*}^\top x}\right), \quad \zeta > 0, ~x \in S^2,
\end{align}
which can be viewed as an unnormalized density of the von-Mises Fisher distribution \cite{fisher1953dispersion} on $S^2$. 
 A larger concentration parameter $\zeta$ leads to more probability mass centered around its mean $z^*.$

\begin{figure}[!htb]
\minipage{0.333\textwidth}
  \includegraphics[width=\textwidth]{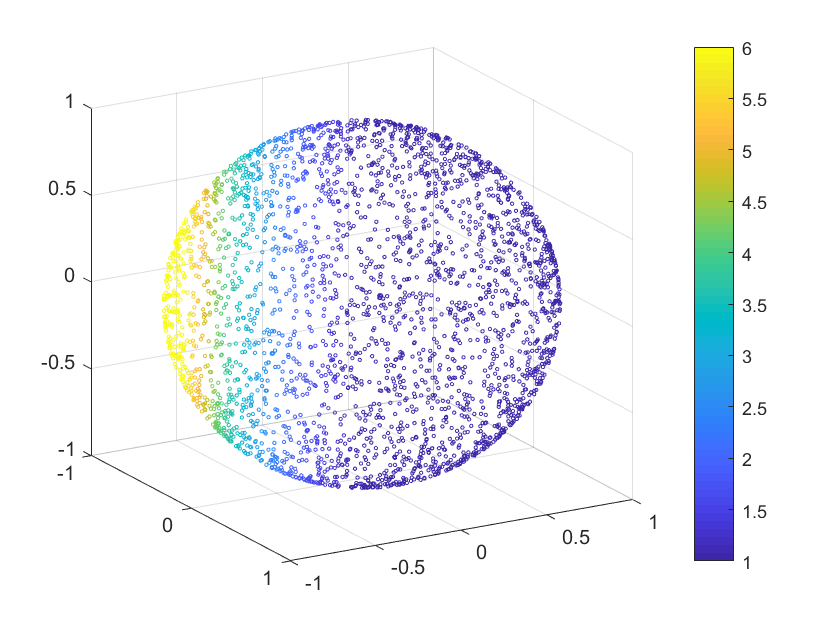}\subcaption{$t=0.$}
\label{fig:init_heat}
\endminipage\hfill
\minipage{0.333\textwidth}
  \includegraphics[width=1\textwidth]{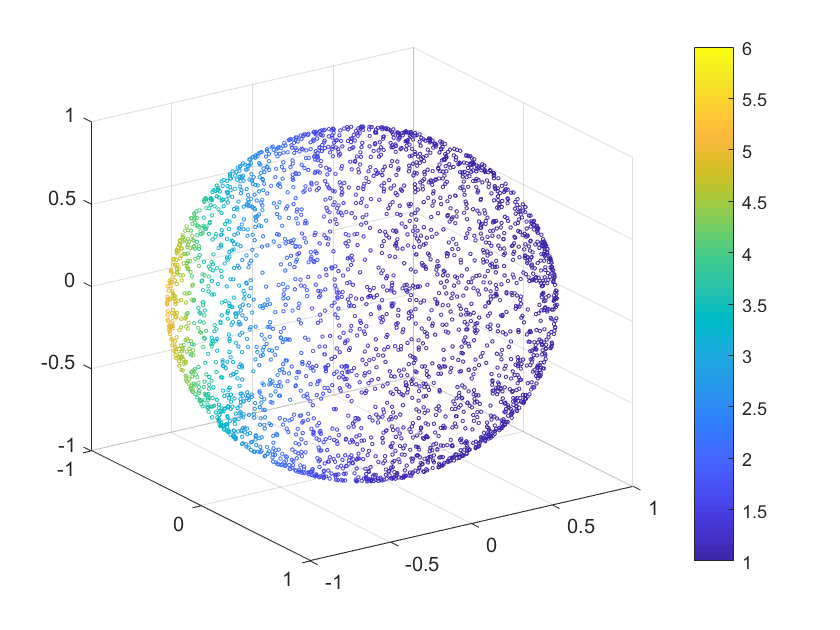}\subcaption{$t=0.25.$}
\label{fig:HEAT_NOISE_T025}
\endminipage\hfill
\minipage{0.333\textwidth}
  \includegraphics[width=1\textwidth]{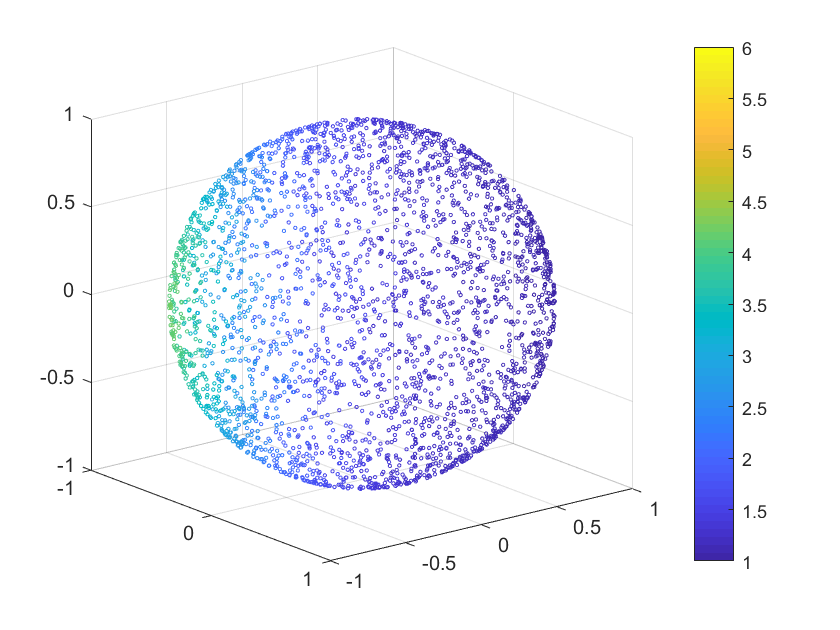}\subcaption{$t=0.4.$}
\label{fig:HEAT_NOISE_T04}
\endminipage\hfill

\caption{(a) Initial heat over the point cloud.
(b) Noisy evaluation of heat at $t = 0.25$. (c) Noisy evaluation of heat at $t = 0.4$. %
}
\label{fig:heat}
\end{figure}
Our goal is to recover the center $z^*$ of the initial heat configuration,  assuming we are only given a point cloud $\mathcal{M}_N=\{x_i\}_{i=1}^{N=3000}$ but not $\mathcal{M}$ directly, and noisy heat measurements at some positive time $t$ of the form 
$$
\mathsf{d} = \phi(\mathbf{x}) + \eta, \quad \quad \phi(\mathbf{x}) = \bigl( \phi(x_1,t), \ldots, \phi(x_N,t) \bigr)^\top, 
$$
where $\eta \sim \mathcal{N}(0, 0.01 I_N)$.  To generate $\phi(x,t)$, we truncate \eqref{eq:heat sol} at $i=36$, by keeping only the terms with $\lambda_i\leq 30$ (the sixth repeated eigenvalue of $-\Delta_{S^2}$).  Figure \ref{fig:heat} contains plots of an example of initial heat configuration with $\zeta=2$ and the corresponding noisy data for times $t = 0.25$ and $t = 0.4$.  
Assuming that the center $z^* \in \M_N$, we adopt an optimization perspective to this inverse problem \cite{sanzstuarttaeb} and attempt to maximize the objective function
$$
f(z) = -\log \|\mathsf{d} - \mathcal{G}(z) \|_{\infty}, \quad \quad z \in \M_N,
$$
 along the point cloud $\M_N$, where $\mathcal{G}(z) \in \R^N$ is the forward map given by 
\begin{align}
    [\mathcal{G}(z)]_k=\sum_{i=1}^{\infty}\langle \phi_0^z,\psi_i \rangle_{S^2}\cdot e^{-\lambda_it}\psi_i(x_k),\qquad x_k\in\mathcal{M}_N, 
\end{align}
with $\phi^z_0(x) = \exp\left({\zeta z^\top x}\right)$ for $x \in S^2$. 
However, since $\mathcal{M}$ is only known through $\mathcal{M}_N$, the eigenvalue and eigenfunctions should be also treated as unknown to us. Therefore, we shall instead maximize the approximate objective function 
$$
f_N(z) = -\log \|\mathsf{d} - \mathcal{G}_N(z) \|_\infty, \quad \quad z \in \M_N,
$$
where
$$
\mathcal{G}_N(z) = \sum_{i=1}^{k_N}\langle \phi^z_{0, N}, \psi_{N, i} \rangle \cdot e^{-\lambda_{N, i} t}\psi_{N, i}, \quad \phi^z_{0, N} =\Bigl( \exp\bigl({\zeta z^\top x_1}\bigr), \cdots, \exp\bigl({\zeta z^\top x_{N}}\bigr)\Bigr)^\top
$$
with the hope that the optimizer of $f_N$ agrees with, or at least is close to, that of $f$.
Here, as before,  $\{(\lambda_{N, i}, \psi_{N, i})\}_{i=1}^N$ are eigenpairs of the unnormalized graph Laplacian and $\langle \cdot, \cdot \rangle$ is the standard Euclidean inner product.  For the truncation level, we set $k_N=70$ to account for the discrepancy ---shown in Figure \ref{fig:spectra}--- between the spectrum of the graph Laplacian and that of the negative Laplace-Beltrami operator.

 To optimize $f_N,$ we apply Algorithm \ref{algo:GP-UCB} with a graph Mat\'ern prior \eqref{eq:graph trun} with parameters $s=4$, $\kappa=1$. There is no observation noise in this case since $f_N$ can be computed exactly, so that $\mu_{N,\ell}$ and $\sigma_{N,\ell}$ in the acquisition function will be computed using \eqref{eq:pm pstd} with $\sigma=0$ and $\smash{Y_\ell = (f_N(z_1), \cdots, f_N(z_\ell))^\top}$. Since we are interested in the recovery of $z^*$, we shall report the distance measure $\|z^*-z_L^*\|_{2}$, where $z_L^*$ is the query point returned by GGP-UCB or random sampling that maximizes $f_N$ in the first $L$ iterations. 
The results are shown in Figure \ref{fig:heat_dist} for observations $\mathsf{d}$ collected at two different times $t=0.25$ and $t=0.4$. Qualitatively similar performance as in previous examples is achieved. 
However, notice that in Figure \ref{fig:mean_dist_t025_matern} the recovery is not exact, as the distance $\|z^*-z_L^*\|_2$ does not decrease to zero. This is because we are searching for the maximizer of the approximate objective $f_N$, which differs from the true heat source $z^*$ when $t$ is large due to the approximation error of $\mathcal{G}_N$ to $\mathcal{G}$. In other words, the attainable discrepancy, defined as the distance between $z^*$ and the maximizer of $f_N$, is nonzero in this case. Besides this effect caused by an error in the approximation of the objective, the simulation results suggest that our GGP-UCB algorithm correctly finds the maximizer of the approximate objective $f_N$ with a significantly smaller number $L$ of queries than the total number $N$ of points in $\M_N$.

\begin{figure}[!htb]
\centering
\minipage{0.333\textwidth}
  \includegraphics[width=\textwidth]{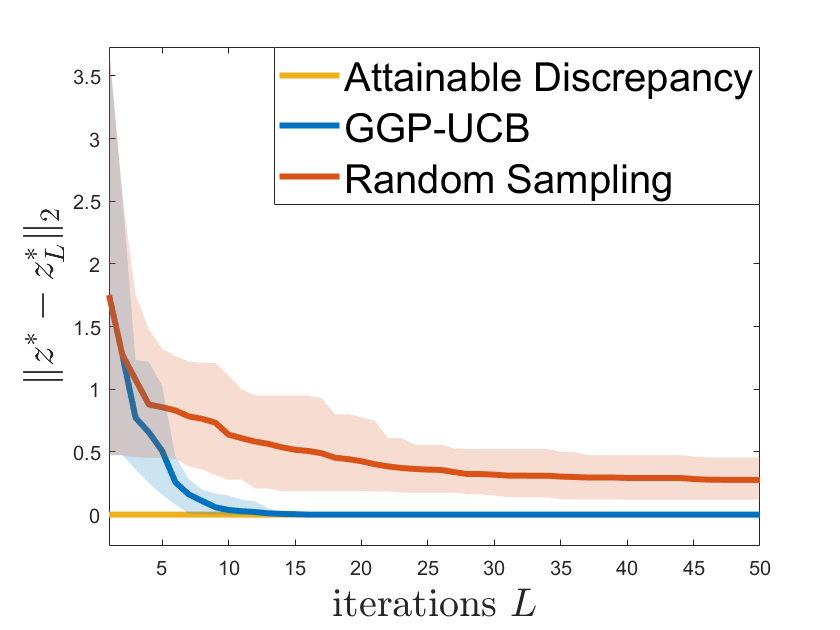}\subcaption{$t=0.25.$}
\label{fig:mean_dist_t01_matern}
\endminipage
\minipage{0.333\textwidth}
  \includegraphics[width=1\textwidth]{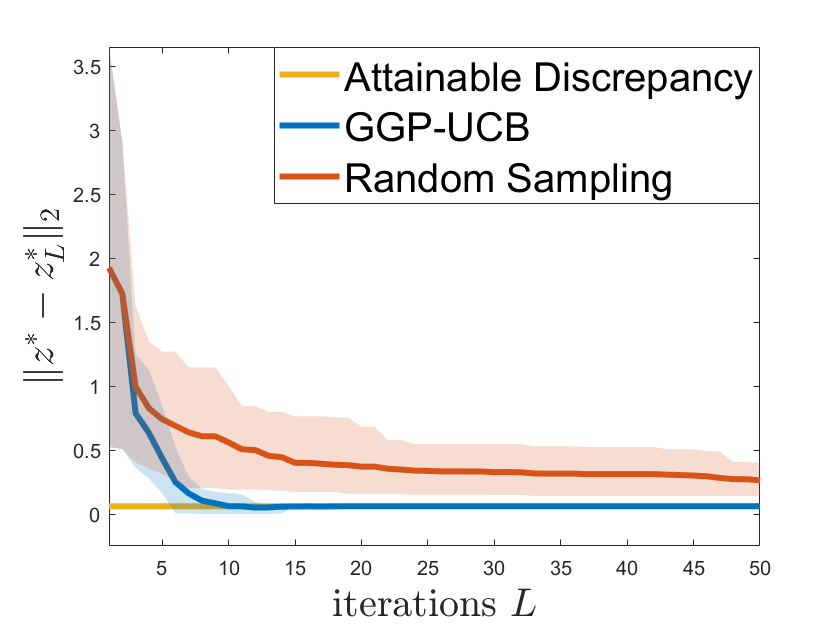}\subcaption{$t=0.4.$}
\label{fig:mean_dist_t025_matern}
\endminipage
\caption{Recovery error $\|z^*-z_L^*\|_2,$ where $z^*$ is the true source in \eqref{eq:heat source}
and $z_L^*$ is the query point returned by GGP-UCB or random sampling that maximizes $f_N$ in the first $L$ iterations. 
Heat measurements are collected at times (a) $t = 0.25$ and (b) $t = 0.4$.
The curves represent the average regrets over 50 trials and the shaded regions represent the 10\% $\sim$ 90\% percentiles.
}
\label{fig:heat_dist}
\end{figure}

We remark that there are two intertwined aspects which make source detection difficult for intermediate to large $t$ values.
The smoothing effect of the forward map $\mathcal{G}$ implies that a larger observation time will lead to a more flattened (homogeneous) temperature configuration, as shown in Figure \ref{fig:heat}.  
In other words, two rather different initial heat configurations will yield almost identical heat configurations after a large time $t>0$. Such ill-posedness hinders the recovery of the true heat source location for large $t$. In addition, the forward map $\mathcal{G}$ and its approximation $\mathcal{G}_N$ are defined in terms of an exponential transformation of the eigenvalues of the Laplace-Beltrami operator and the graph Laplacian. Therefore, for moderate $t,$ any small inaccuracy in the eigenvalue estimation can lead to significantly different forward models $\mathcal{G}$ and $\mathcal{G}_N$,  so that $f_N$ is a poor approximation to $f$. If one had access to the true forward map, this issue would not be present.

\section{Discussion}\label{sec: conclusion}
This paper introduced GGP-UCB, a manifold learning technique to optimize an objective function on a hidden   compact   manifold. Our regret bounds and numerical experiments demonstrate the effectiveness of our method. 

\paragraph{Curse of Dimensionality}
Similar to other Bayesian nonparametric techniques, we expect GGP-UCB to be particularly effective when the dimension $m$ of the manifold $\M \subset \R^d$ is small or moderate. In particular, our regret bounds in Theorem \ref{thm:regret bound} suffer from the standard curse of dimension with $m,$ while they do not depend on the dimension $d$ of the ambient space.

\paragraph{Estimating the Intrinsic Dimension}
For simplicity, we have assumed that the dimension $m$ of $\M$ is known, that we have access to clean samples from $\M,$ and that $\mathcal{M}$ has no boundary. If the dimension of $\M$ is unknown, classical manifold learning techniques can be used to estimate it \cite{hein2005intrinsic,harlim2020kernel}. Similarly, if the given point cloud is noisy in that it consists of random perturbations of points sampled from $\M,$ a denoising method can be employed to uncover the underlying geometry \cite{ruiyilocalregularization}. Finally, if $\M$ has a boundary, our GGP-UCB method may be combined with a ghost point diffusion map algorithm to remove boundary artifacts \cite{josh2021,peoples2021spectral,jiang2020ghost}.   

\paragraph{Other Acquisition Functions}
Our focus on UCB acquisition functions was motivated by the desire to establish convergence guarantees under misspecification, as well as by their simplicity and successful empirical performance. However,  there is no algorithmic roadblock to employ other acquisition functions such as expected improvement and Thompson sampling. An interesting direction for future research is to investigate how to  provably correct for geometric misspecification when using these alternative acquisition functions.

\paragraph{Beyond the Manifold Setting}
We have focused on GGP surrogate models defined via a specific choice of unnormalized graph-Laplacian; other graph constructions  (e.g. based on nearest neighbors or self-tuning kernels) and graph-Laplacian (e.g. symmetric and random walk) could be considered \cite{von2007tutorial}. 
Furthermore, the proposed BO framework can be extended beyond the manifold setting. In particular, similar constructions of the GGPs can be carried out over any point cloud (not necessarily embedded in a Euclidean space) as long as a graph Laplacian encoding pairwise similarities of the point cloud can be formed \cite{sanz2020spde,borovitskiy2021matern}. Together with suitable choices of acquisition functions, the resulting framework can be used to solve discrete optimization problems by endowing the search space with a graph structure, which could facilitate the search of optimizers. This is an interesting direction for future research.

\section*{Acknowledgments}
DSA is thankful for the support of NSF DMS-2027056, NSF DMS-2237628, and DOE DE-SC0022232. DSA is also thankful to the BBVA Foundation for a start-up grant. The authors are grateful to Jiaheng Chen for generous feedback on an earlier version of this manuscript.

\bibliographystyle{abbrvnat}
\bibliography{references}

\clearpage
\appendix

These appendices contain the proofs of Proposition \ref{prop:graph GP approx bound}, Theorem \ref{thm:regret bound}, and Corollary \ref{cor:continuum regret}. The proofs build on the theory of spectral convergence of graph Laplacians and regret analysis of Bayesian optimization algorithms. To make our presentation self-contained, we will introduce necessary background and previous results whenever needed.

\section{Proof of Proposition \ref{prop:graph GP approx bound}} \label{appenA}
Let $\M_N$ be i.i.d. samples from a distribution $\mu$ supported on a smooth, connected, and compact $m$-dimensional submanifold $\mathcal{M}\subset \mathbb{R}^d$ without boundary. For simplicity, we shall assume that $\mu$ is the uniform distribution on $\mathcal{M}$. The first result \cite[Theorem 2]{garcia2020error} states that with high probability, the $x_i$'s form a $\rho_N$-net over $\mathcal{M}$ and characterizes $\rho_N$. 
\begin{proposition}\label{prop:d infinity bound}
For any $c>1$, with probability $1-O(N^{-c})$, there exists a transportation map $T_N:\mathcal{M} \to \{x_1, \cdots, x_N\}$ so that 
\begin{align}
    \rho_N:=\underset{x\in\mathcal{M}}{\operatorname{sup}} d_{\mathcal{M}} \bigl(x,T_N(x)\bigr)\lesssim \frac{(\log N)^{p_m}}{N^{1/m}}, \label{eq:rho_N}
\end{align}
where $p_m=3/4$ when $m=2$ and $p_m=1/m$ otherwise. We recall that $d_{\mathcal{M}}$ is the geodesic distance on $\mathcal{M}$. 
\end{proposition}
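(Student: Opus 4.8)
The plan is to recognize $\rho_N$ as (a fixed multiple of) the $\infty$-transportation distance $d_\infty(\mu,\mu_N)$ between the uniform measure $\mu$ on $\mathcal M$ and the empirical measure $\mu_N=\frac1N\sum_{i=1}^N\delta_{x_i}$, and to bound it by exhibiting a near-optimal transport map, following the strategy of Garc\'ia Trillos and Slep\v{c}ev and using comparison geometry to transfer the Euclidean combinatorics to $\mathcal M$. Throughout, compactness and the absence of boundary give a positive injectivity radius, so small geodesic balls behave like Euclidean balls; bounded sectional curvature yields two-sided volume comparison, $\mu\bigl(B(y,r)\bigr)\asymp r^m$ uniformly in $y\in\mathcal M$ for $r$ below a fixed threshold, together with the covering fact that $\mathcal M$ admits a cover by $K\lesssim r^{-m}$ geodesic balls of radius $r$.

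First, for $m\ge 3$ a single scale suffices. Fix $r\asymp(\log N/N)^{1/m}$ with a large constant (depending on $c$), cover $\mathcal M$ by balls $B(y_j,r/2)$, $j=1,\dots,K$, with $K\lesssim r^{-m}$. The probability that a fixed $B(y_j,r/2)$ contains no sample is at most $(1-c_0r^m)^N\le e^{-c_0Nr^m}$, which is smaller than any prescribed power $N^{-(c+m)}$ once the constant in $r$ is large; a union bound over the $K\lesssim r^{-m}$ balls gives, with probability $1-O(N^{-c})$, that every geodesic ball of radius $r$ in $\mathcal M$ meets $\M_N$. On this event one partitions $\mathcal M$ into Borel cells $U_1,\dots,U_N$ with $\mu(U_i)=1/N$, $x_i\in U_i$, and $\sup_{x\in U_i}d_\M(x,x_i)\lesssim r$: assign each $x\in\mathcal M$ to a sample at geodesic distance $\lesssim r$, then rebalance masses between neighbouring cells, which only transports mass over distances $O(r)$ because the count fluctuation $\sqrt{Nr^m}$ in an $r$-ball is dominated by the mean $Nr^m\asymp\log N$. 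Setting $T_N\equiv x_i$ on $U_i$ yields $T_{N\,\sharp}\mu=\mu_N$, $T_N(x_i)=x_i$, and $\rho_N\le\sup_x d_\M\bigl(x,T_N(x)\bigr)\lesssim r\asymp(\log N)^{1/m}N^{-1/m}$, the claimed bound with $p_m=1/m$.

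The borderline dimension $m=2$ is where the single-scale argument breaks: at the relevant scale $r$ the count fluctuation $\sqrt{Nr^2}$ is comparable to the mean $Nr^2\asymp\log N$, so the cheap local rebalancing above costs more than $O(r)$. The fix is the Ajtai--Koml\'os--Tusn\'ady / Leighton--Shor multiscale scheme: decompose $\mathcal M$ dyadically, transport the discrepancy between $\mu$ and $\mu_N$ one dyadic scale at a time, estimate the per-scale cost through the $\sqrt{\cdot}$ concentration of cell counts, and truncate the dyadic sum at the scale where cells carry $\Theta(\log N)$ samples; the optimal truncation produces one extra factor $(\log N)^{1/4}$, giving $\rho_N\lesssim(\log N)^{3/4}N^{-1/2}$, the claimed bound with $p_2=3/4$. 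One again enforces $T_N(x_i)=x_i$ by declaring the cell containing $x_i$ to be the one assigned to $x_i$, and the high-probability statement follows from a union bound over dyadic cells against the sub-Gaussian deviations of their sample counts.

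The main obstacle is the $m=2$ case: making the multiscale cancellation quantitative —choosing the dyadic truncation, summing the per-scale transport costs, and handling the boundary effects between adjacent cells— is the delicate combinatorial core, and on a manifold it has to be carried out in charts while keeping the curvature-induced distortion of volumes and distances under control. By comparison, the $m\ge 3$ regime and the bookkeeping required to beat an arbitrary polynomial rate $N^{-c}$ are routine.
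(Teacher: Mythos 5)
The paper offers no proof of this proposition at all: it is quoted directly from \cite[Theorem 2]{garcia2020error}, whose own proof bounds the $\infty$-transportation distance between $\mu$ and the empirical measure $\mu_N$ by combining the Leighton--Shor/AKT matching theorem for $m=2$ and the Shor--Yukich matching theorem for $m\geq 3$ with a chart-based transfer to $\mathcal{M}$ controlling metric and volume distortion. Your overall strategy is therefore the same as the one underlying the cited result, and for $m=2$ your deferral to the multiscale AKT/Leighton--Shor scheme is at essentially the same level of detail as the paper's citation.

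The genuine gap is in your single-scale argument for $m\geq 3$. Knowing that every geodesic ball of radius $r\asymp(\log N/N)^{1/m}$ contains $\Theta(\log N)$ samples does not by itself produce a transport map with displacement $O(r)$: the step ``rebalance masses between neighbouring cells, which only transports mass over distances $O(r)$ because the count fluctuation $\sqrt{Nr^m}$ is dominated by the mean'' is asserted rather than proved, and the criterion you give cannot be sufficient. Indeed, in dimension $m=2$ at scale $r=\sqrt{\log N/N}$ the fluctuation $\sqrt{\log N}$ is also dominated by the mean $\log N$, yet the $\infty$-transport distance is of order $(\log N)^{3/4}N^{-1/2}\gg\sqrt{\log N/N}$; so ``fluctuation $\ll$ mean at the finest scale'' does not imply that local rebalancing succeeds (and your stated reason for the $m=2$ dichotomy, that the fluctuation there is comparable to the mean, is likewise inaccurate --- the real obstruction in $m=2$ is the accumulation of discrepancies across $\asymp\log N$ dyadic scales). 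What is needed is a verification of the Hall/Strassen-type condition $\mu_N(A)\leq\mu(A^{Cr})$ for all Borel sets $A$, equivalently a comparison at every coarser scale $\rho\geq r$ of the count discrepancy $\sqrt{N\rho^m\log N}$ against the boundary-flux capacity $N\rho^{m-1}r$, or else a direct appeal to the Shor--Yukich grid-matching theorem as in the cited literature. For $m\geq 3$ this multi-scale comparison does close, with the worst case at $\rho\asymp r$ (which is exactly why $(\log N/N)^{1/m}$ is the right rate), but carrying it out is precisely the missing step: as written, the $m\geq 3$ half of your argument establishes only that $\mathcal{M}_N$ is an $r$-net, which is strictly weaker than the transportation bound being claimed.
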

Proposition \ref{prop:d infinity bound} implies that the point cloud $\M_N$ is ``well-structured'' with high probability and is the building block for the spectral approximation results below \cite[Proposition 10 and Lemma 15]{sanz2020unlabeled}. Recall that the graph-Laplacian $\Delta_N$ constructed in Subsection \ref{ssec:prior} admits a spectral decomposition, in analogy to the Laplace-Beltrami operator $\Delta_{\mathcal{M}}$.

\begin{proposition}\label{prop:eval efun bound}
Suppose there exists $\delta>0$ such that, for $N$ sufficiently large,  
\begin{align}
    h_N\gtrsim N^{-\frac{1}{m+4+\delta}},\quad \quad k_N\lesssim N^{\frac{1-\delta}{m}}, \quad \quad h_Nk_N^{\frac{2}{m}}\lesssim 1. \label{eq:conditions for hn and kn}
\end{align}
Then with probability $1-O(N^{-c})$ for some $c>0$, there exists orthonormalized eigenfunctions $\{\psi_{N,i}\}_{i=1}^N$ for $\Delta_N$,  $\{\psi_i\}_{i=1}^{\infty}$ for $\Delta_{\mathcal{M}}$, and $T_N:\mathcal{M}\rightarrow \{x_1,\ldots,x_N\}$ satisfying $T_N(x_i)=x_i$ so that, for $i=1,\ldots,k_N$,
\begin{align*}
    |\lambda_{N,i}-\lambda_i|&\lesssim \lambda_i\left(\frac{\rho_N}{h_N}+h_N\sqrt{\lambda_i}\right),\\
    \|\psi_{N,i}\circ T_N-\psi_i\|_{\infty}& \lesssim \lambda_i^{m+1}i^{\frac{3}{2}} \sqrt{\frac{\rho_N}{h_N}+h_N\sqrt{\lambda_i}} \,\, .
\end{align*}
\end{proposition}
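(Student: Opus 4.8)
The plan is to obtain Proposition~\ref{prop:eval efun bound} as a quantitative packaging of three ingredients: the transport map of Proposition~\ref{prop:d infinity bound}, a comparison between the discrete Dirichlet energy associated with $\Delta_N$ and the Dirichlet energy of $-\Delta_{\mathcal{M}}$, and an abstract min-max spectral perturbation argument, followed by an elliptic-regularity boost from $L^2$ to $L^\infty$. This follows the strategy of \cite{garcia2020error} and \cite[Proposition 10 and Lemma 15]{sanz2020unlabeled}. Throughout I would work on the high-probability event of Proposition~\ref{prop:d infinity bound}, fixing the transport map $T_N$ with $\rho_N\lesssim(\log N)^{p_m}N^{-1/m}$.

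First I would establish the energy comparison. Since $v^{\top}\Delta_N v=\tfrac12\sum_{i,j}W_{ij}|v_i-v_j|^2$ with the weights in \eqref{eq:weight matrix}, evaluating at $v=(f(x_i))_i$ for a smooth $f$ on $\mathcal{M}$ and using $T_N$ to replace the empirical average $\tfrac1N\sum_i$ by $\int_{\mathcal{M}}$ introduces a relative error of order $\rho_N/h_N$ and reduces $v^{\top}\Delta_N v$ to a nonlocal continuum energy of the form $\iint_{\mathcal{M}^2}W_{h_N}(x,y)\,|f(x)-f(y)|^2\,dx\,dy$. A Bourgain-Brezis-Mironescu-type expansion of this nonlocal energy shows it equals $\int_{\mathcal{M}}|\nabla f|^2$ up to a relative error $\lesssim h_N\|f\|_{H^2}/\|f\|_{H^1}$, which on the span of the first $k_N$ Laplace-Beltrami eigenfunctions is $\lesssim h_N\sqrt{\lambda_{k_N}}$; the normalization $2(m+2)/(N\nu_m h_N^{m+2})$ in \eqref{eq:weight matrix} is exactly what forces the limiting constant to be $1$ for the volume-normalized Laplacian. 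The hypotheses \eqref{eq:conditions for hn and kn} are what place us in the regime where $\rho_N/h_N$ and $h_N\sqrt{\lambda_{k_N}}\lesssim h_N k_N^{1/m}$ are both $\lesssim1$ and the graph is connected.

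Next, from this comparison I would construct almost-isometric, almost-intertwining maps between $(\mathbb{R}^N,\Delta_N)$ and $(L^2(\mathcal{M}),-\Delta_{\mathcal{M}})$ --- the restriction $f\mapsto(f(x_i))_i$ together with a matching interpolation --- that are $(1+O(\rho_N/h_N))$-isometries in $L^2$ and transport the two Dirichlet forms into one another with the errors above. Feeding these into a Courant-Fischer min-max argument (the abstract perturbation lemma of \cite{sanz2020unlabeled}) gives $|\lambda_{N,i}-\lambda_i|\lesssim\lambda_i\bigl(\rho_N/h_N+h_N\sqrt{\lambda_i}\bigr)$ for $i\le k_N$. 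Once the eigenvalues are this close, a quantitative $\sin\theta$ (Davis-Kahan)-type comparison of the $i$-th near-eigenspaces produces an $L^2$ bound on $\psi_{N,i}\circ T_N-\psi_i$; the polynomial factor $i^{3/2}$ comes from the possible clustering of eigenvalues below level $i$ and from the lower bounds on spectral gaps available through Weyl's law $\lambda_i\asymp i^{2/m}$.

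The main obstacle --- and the final step --- is upgrading the $L^2$ estimate to the $L^\infty$ estimate in the statement. On the continuum side, $-\Delta_{\mathcal{M}}\psi_i=\lambda_i\psi_i$ permits elliptic bootstrapping, so $\|\psi_i\|_{C^k}\lesssim\lambda_i^{k/2}$ and, through the Sobolev embedding $H^{\lceil m/2\rceil+1}(\mathcal{M})\hookrightarrow C(\mathcal{M})$, a pointwise bound $\|\psi_i\|_\infty\lesssim\lambda_i^{(\lceil m/2\rceil+1)/2}$; on the graph side, $\Delta_N\psi_{N,i}=\lambda_{N,i}\psi_{N,i}$ is a discrete elliptic eigenvalue equation to which a discrete Moser iteration / discrete Sobolev inequality applies, yielding a comparable a priori $\ell^\infty$-to-$\ell^2$ bound. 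Interpolating the $L^2$-closeness against these regularity bounds (and converting discrete to continuum norms through $T_N$) gives the claimed estimate, with the exponent $m+1$ on $\lambda_i$ being the deliberately non-sharp accumulation of the losses in these regularity steps --- exactly what is tracked in \cite[Lemma 15]{sanz2020unlabeled}. The probability qualifier $1-O(N^{-c})$ is inherited entirely from Proposition~\ref{prop:d infinity bound} together with standard Bernstein/covering concentration for the empirical operator; the rest of the argument is deterministic on that event, and the only genuinely delicate bookkeeping is the propagation of the powers of $\lambda_i$ and $i$ through the discrete regularity estimate.
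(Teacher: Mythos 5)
The paper does not prove Proposition \ref{prop:eval efun bound} at all: it is imported as known background, cited directly as \cite[Proposition 10 and Lemma 15]{sanz2020unlabeled} (building on \cite{garcia2020error}), with Proposition \ref{prop:d infinity bound} only supplying the transport map $T_N$ and the high-probability event on which everything else is deterministic. Your write-up is therefore a reconstruction of the strategy of those cited works rather than something that can be compared against an in-paper argument, and at that level it is broadly faithful: transport map, comparison of the graph Dirichlet form built from the weights \eqref{eq:weight matrix} with a nonlocal continuum energy and then with the continuum Dirichlet energy, almost-isometric discretization/interpolation maps fed into a min-max argument for the eigenvalue bound, a Davis--Kahan-type step for the eigenvectors, and a final upgrade from $L^2$ to $L^\infty$ using regularity on both the continuum side (Proposition \ref{prop:Laplacian efun infinity bound}) and the graph side, under the admissibility conditions \eqref{eq:conditions for hn and kn}. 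The caveat is that the genuinely delicate part --- producing the precise factor $\lambda_i^{m+1} i^{3/2}$ in the sup-norm bound --- is exactly the part you leave as a plan (``discrete Moser iteration / discrete Sobolev inequality'', ``interpolating''), and your attribution of the mechanism is looser than what the references actually track: there the pointwise control is extracted from the graph eigenvector equation together with pointwise consistency of $\Delta_N$ and the gradient bounds $\|\nabla\psi_i\|_\infty\lesssim\lambda_i^{(m+1)/2}$, and the index-dependent factor reflects the spectral-gap and multiplicity bookkeeping in the projection argument. So your sketch would need that bookkeeping carried out in full before it could replace the citation; if instead you follow the paper and treat the result as quoted, a one-line reference to \cite[Proposition 10 and Lemma 15]{sanz2020unlabeled} and \cite{garcia2020error} is the appropriate form, not a proof outline.
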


We also need a result on the growth of the $L^{\infty}$-norm of the Laplace-Beltrami eigenfunctions and their gradients from \cite[Theorem 1.2]{donnelly2006eigenfunctions} and \cite[equation (2.10)]{xu2005asymptotic}. 
\begin{proposition}\label{prop:Laplacian efun infinity bound}
Let $\psi$ be an $L^2$-normalized eigenfunction of $-\Delta_{\mathcal{M}}$ associated with $\lambda\neq 0$. Then $\|\psi\|_{\infty}\leq C\lambda^{(m-1)/4}$ and $\|\nabla \psi\|_{\infty}\leq C\lambda^{(m+1)/2}$ for a universal constant $C$. 
\end{proposition}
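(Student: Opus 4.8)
The proposition collects two classical facts from spectral geometry---both quoted directly from \cite{donnelly2006eigenfunctions,xu2005asymptotic}---so the intended ``proof'' is really just to invoke those references. Below I sketch the reasoning behind each bound and indicate where the genuine work sits.

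For the sup-norm estimate $\|\psi\|_\infty\lesssim\lambda^{(m-1)/4}$, the plan is to appeal to H\"ormander's $L^2\!\to\!L^\infty$ spectral cluster estimate. Setting $\mu=\sqrt\lambda$ and letting $\chi_\mu$ be a smoothed spectral projector of $\sqrt{-\Delta_\M}$ onto the unit window $[\mu,\mu+1]$, one has $\|\chi_\mu\|_{L^2(\M)\to L^\infty(\M)}\lesssim\mu^{(m-1)/2}$ uniformly in $\mu$, with a constant depending only on $\M$. Since an eigenfunction with $\sqrt{-\Delta_\M}\,\psi=\mu\psi$ is fixed by $\chi_\mu$, this gives $\|\psi\|_\infty=\|\chi_\mu\psi\|_\infty\lesssim\mu^{(m-1)/2}\|\psi\|_2=\lambda^{(m-1)/4}$. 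The cluster estimate itself is obtained by representing $\chi_\mu$ through the half-wave propagator $e^{it\sqrt{-\Delta_\M}}$, using finite propagation speed to reduce to a small geodesic ball, and estimating the resulting oscillatory integral by stationary phase. That microlocal step is the real content and the main obstacle---but it is entirely standard, uses only smoothness, compactness and absence of boundary (all guaranteed by Assumption~\ref{assp:manifold}), and can be taken off the shelf.

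For the gradient bound $\|\nabla\psi\|_\infty\lesssim\lambda^{(m+1)/2}$ the plan is a short, self-contained elliptic bootstrap. Iterating $-\Delta_\M\psi=\lambda\psi$ yields $(-\Delta_\M)^j\psi=\lambda^j\psi$, hence $\|\psi\|_{H^{2j}(\M)}\asymp(1+\lambda)^{j}\|\psi\|_{L^2(\M)}=(1+\lambda)^{j}$ for every integer $j$, with constants depending only on $\M$. Combining this with $\|\nabla\psi\|_\infty\lesssim\|\nabla\psi\|_{H^s(\M)}\lesssim\|\psi\|_{H^{s+1}(\M)}$, valid once $s>m/2$ by Sobolev embedding, and choosing $s$ an integer slightly above $m/2$, gives the claimed power of $\lambda$ after absorbing constants. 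The exponent $(m+1)/2$ is not sharp---applying the cluster estimate to $\nabla\psi$ would improve it to $\lambda^{(m+1)/4}$---but the weaker version is all that is needed when these bounds are summed over the first $k_N$ modes in the proof of Proposition~\ref{prop:graph GP approx bound}, so I would not bother optimizing it.
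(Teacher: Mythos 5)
Your proposal is correct, and it is worth noting that the paper itself offers no proof of this proposition: it simply quotes \cite[Theorem 1.2]{donnelly2006eigenfunctions} for the sup-norm bound and \cite[equation (2.10)]{xu2005asymptotic} for the gradient bound. Your first part is therefore not so much a different route as an unpacking of the same classical result: the H\"ormander--Sogge spectral cluster estimate $\|\chi_\mu\|_{L^2\to L^\infty}\lesssim\mu^{(m-1)/2}$ is exactly what underlies Donnelly's theorem, and your reduction $\|\psi\|_\infty=\|\chi_\mu\psi\|_\infty\lesssim\lambda^{(m-1)/4}$ is the standard deduction. Where you genuinely diverge is the gradient bound: instead of citing Xu's (sharper, $\lambda^{(m+1)/4}$-type) estimate, you give a self-contained elliptic bootstrap. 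That argument does close, but check the exponent arithmetic: for an eigenfunction one has $\|\psi\|_{H^{t}}\asymp(1+\lambda)^{t/2}$ for \emph{any} $t\ge 0$ (either by interpolation between your even-order estimates or directly from the spectral definition of $H^t$), so $\|\nabla\psi\|_\infty\lesssim\|\psi\|_{H^{s+1}}\asymp(1+\lambda)^{(s+1)/2}$ with $s$ the smallest integer exceeding $m/2$; this exponent is $(m+4)/4$ ($m$ even) or $(m+3)/4$ ($m$ odd), which is $\le(m+1)/2$ precisely because Assumption \ref{assp:manifold} imposes $m\ge 2$ (for $m=2$ it matches exactly). Two cosmetic points: the powers of $(1+\lambda)$ convert to powers of $\lambda$ only after using the spectral gap $\lambda\ge\lambda_1>0$, and the resulting constants depend on $\M$ rather than being truly universal --- but the same is true of the bounds the paper cites, so nothing is lost. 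The trade-off is what you state: the paper's citation buys sharper constants and exponents off the shelf, while your bootstrap is elementary and self-contained, and the weaker exponent is harmless where the proposition is used (summing over the first $k_N$ modes in the proof of Proposition \ref{prop:graph GP approx bound}).
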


\begin{lemma}\label{lemma:SE factor}
The random field $u^{\SE}$ defined in \eqref{eq:continuum SE} satisfies $\mathbb{E}\|u^{\SE}\|^2_{L^2}\asymp 1$ and   has a modification that is locally H\"older continuous of order $\alpha$ for all $\alpha<\frac12$. The random field $u^{\Ma}$ defined in \eqref{eq:continuum Matern} has a modification that is locally H\"older continuous of order $\gamma$ for all $\gamma< \frac{2s-2m+1}{m+3}\wedge \frac12$.  
\end{lemma}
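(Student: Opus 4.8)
The plan is to establish each claim via a Kolmogorov-type continuity criterion applied to the Karhunen–Lo\`eve series, using the Weyl law $\lambda_i \asymp i^{2/m}$ and the sup-norm bounds for Laplace–Beltrami eigenfunctions in Proposition \ref{prop:Laplacian efun infinity bound}. First, for the $L^2$-normalization claim, I would simply compute
\[
\mathbb{E}\|u^{\SE}\|_{L^2}^2 = \tau^{m/2}\sum_{i=1}^\infty e^{-\lambda_i\tau}\,\mathbb{E}\xi_i^2 = \tau^{m/2}\sum_{i=1}^\infty e^{-\lambda_i\tau},
\]
using orthonormality of the $\psi_i$ in $L^2(\mathcal{M})$. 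Then I would bound the sum above and below by comparing it to an integral: by Weyl's law there are constants $0 < c_1 \le c_2$ with $c_1 i^{2/m} \le \lambda_i \le c_2 i^{2/m}$ for $i$ large, so $\sum_i e^{-\lambda_i\tau}$ is sandwiched between constant multiples of $\int_0^\infty e^{-c\,t^{2/m}\tau}\,dt \asymp \tau^{-m/2}$ (substitute $u = t^{2/m}\tau$). This gives $\mathbb{E}\|u^{\SE}\|_{L^2}^2 \asymp 1$, where the implied constants are uniform in $\tau$ over bounded ranges; care is only needed with the finitely many small-$i$ terms, which contribute $O(\tau^{m/2})$ and are absorbed.

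For the H\"older continuity of $u^{\SE}$, I would work pointwise: for $x,\tilde x \in \mathcal{M}$,
\[
\mathbb{E}\,|u^{\SE}(x)-u^{\SE}(\tilde x)|^2 = \tau^{m/2}\sum_{i=1}^\infty e^{-\lambda_i\tau}\bigl(\psi_i(x)-\psi_i(\tilde x)\bigr)^2.
\]
I would split this sum at a frequency cutoff $\Lambda$ depending on $d_{\mathcal{M}}(x,\tilde x)$. For the low modes $\lambda_i \le \Lambda$, use the mean value inequality $|\psi_i(x)-\psi_i(\tilde x)| \le \|\nabla\psi_i\|_\infty\, d_{\mathcal{M}}(x,\tilde x) \lesssim \lambda_i^{(m+1)/2} d_{\mathcal{M}}(x,\tilde x)$ from Proposition \ref{prop:Laplacian efun infinity bound}. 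For the high modes $\lambda_i > \Lambda$, use the crude bound $(\psi_i(x)-\psi_i(\tilde x))^2 \le 4\|\psi_i\|_\infty^2 \lesssim \lambda_i^{(m-1)/2}$ together with the exponential decay $e^{-\lambda_i\tau}$, which makes the tail negligible once $\Lambda$ is chosen polynomially in $1/d_{\mathcal{M}}(x,\tilde x)$ — in fact, because of the Gaussian-type decay one can take $\Lambda$ as small as $(\log(1/d_{\mathcal{M}}))^{m/2}$ up to constants, so the tail is controlled. Optimizing the cutoff, one obtains $\mathbb{E}\,|u^{\SE}(x)-u^{\SE}(\tilde x)|^2 \lesssim d_{\mathcal{M}}(x,\tilde x)^{2\alpha}$ for any $\alpha < 1/2$ (indeed up to logarithmic corrections one nearly gets $\alpha = 1/2$, but for a clean Kolmogorov statement one settles for $\alpha < 1/2$). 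Since the increments are Gaussian, the $L^2$ bound upgrades to $\mathbb{E}\,|u^{\SE}(x)-u^{\SE}(\tilde x)|^p \lesssim_p d_{\mathcal{M}}(x,\tilde x)^{p\alpha}$ for all $p$, and Kolmogorov's continuity theorem on the compact manifold $\mathcal{M}$ (covering it with finitely many charts of bounded geometry, as is permitted under Assumption \ref{assp:manifold}) yields a modification that is locally H\"older of every order $\alpha < 1/2$.

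For $u^{\Ma}$ the argument is structurally identical, except the spectral weights are now $(\kappa^2+\lambda_i)^{-s}$ instead of $e^{-\lambda_i\tau}$, so the high-mode tail decays only polynomially. Splitting at a cutoff $\Lambda$, the low-mode contribution is $\lesssim d_{\mathcal{M}}(x,\tilde x)^2 \sum_{\lambda_i \le \Lambda} \lambda_i^{-s}\lambda_i^{m+1} \asymp d_{\mathcal{M}}(x,\tilde x)^2 \Lambda^{\,m+2-s + m/2}$ (using Weyl's law to convert the sum, assuming $s$ large enough that this exponent is eventually handled correctly), and the high-mode contribution is $\lesssim \sum_{\lambda_i > \Lambda} \lambda_i^{-s}\lambda_i^{(m-1)/2} \asymp \Lambda^{\,(m-1)/2 - s + m/2}$. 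Balancing the two by choosing $\Lambda \asymp d_{\mathcal{M}}(x,\tilde x)^{-2/(m+3/2)}$ (roughly) produces $\mathbb{E}\,|u^{\Ma}(x)-u^{\Ma}(\tilde x)|^2 \lesssim d_{\mathcal{M}}(x,\tilde x)^{2\gamma}$ with $2\gamma = \frac{2(2s-2m+1)/(m+3/2)}{\cdots}$; tracking the algebra carefully yields exactly the stated exponent $\gamma < \frac{2s-2m+1}{m+3}\wedge \frac12$. As before, Gaussianity promotes this to all moments and Kolmogorov's theorem gives the H\"older-continuous modification. The main obstacle is the bookkeeping in this last balancing step for the Mat\'ern case: one must be careful that the low-mode sum $\sum_{\lambda_i \le \Lambda}\lambda_i^{m+1-s}$ is genuinely dominated by its top term (which requires $s < m+2$, or otherwise it converges and the dependence on $\Lambda$ changes), and that the interplay of the two regimes produces precisely the exponent $\frac{2s-2m+1}{m+3}$ rather than something slightly off — this is the kind of place where an off-by-a-constant in the Weyl exponent or in $\|\nabla\psi_i\|_\infty$ would change the final answer. (The almost sure continuity of $u^{\Ma}$ is already recorded in \cite[Lemma 3]{sanz2020unlabeled}, so strictly only the sharper H\"older exponent and the $u^{\SE}$ statements need this argument.)
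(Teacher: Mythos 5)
Your proposal follows essentially the same route as the paper: bound the second moment of increments using the eigenfunction bounds of Proposition \ref{prop:Laplacian efun infinity bound} together with Weyl's law, and then invoke a Gaussian continuity criterion (the paper uses \cite[Corollary 4.5]{lang2016continuity} rather than a chart-by-chart Kolmogorov argument, but this is cosmetic). Two small points of comparison: for the SE field the paper needs no low/high frequency split at all, since the exponential weights make $\sum_i e^{-\lambda_i\tau}\lambda_i^{m+1}$ converge and the gradient bound alone already gives $\mathbb{E}|u^{\SE}(x)-u^{\SE}(y)|^2\lesssim d_{\mathcal{M}}(x,y)^2$ (so your aside that the second-moment estimate itself caps you "near $\alpha=1/2$" is not quite right; the $1/2$ in the statement comes from the continuity criterion as applied, not from the increment bound). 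For the Mat\'ern case your bookkeeping has slips exactly where you flagged them: the low-mode sum is $\asymp\Lambda^{m+1-s+m/2}$ (not $\Lambda^{m+2-s+m/2}$), the relevant divergence condition is $s<(3m+2)/2$ (not $s<m+2$), and balancing gives the eigenvalue cutoff $\Lambda\asymp d_{\mathcal{M}}(x,y)^{-4/(m+3)}$ (index cutoff $K\asymp d_{\mathcal{M}}(x,y)^{-2m/(m+3)}$, as in the paper), after which both regimes contribute $\asymp d_{\mathcal{M}}(x,y)^{(4s-4m+2)/(m+3)}$ and the stated exponent $\gamma<\frac{2s-2m+1}{m+3}\wedge\frac12$ does follow; with those corrections your argument matches the paper's.
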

\begin{proof}
By Weyl's law that $\lambda_i\asymp i^{\frac{2}{m}}$ (see e.g. \cite[Theorem 72]{canzani2013analysis}), we have
\begin{align*}
    \mathbb{E}\left\|\sum_{i=1}^{\infty}e^{-\frac{\lambda_i\tau}{2}}\xi_i\psi_i\right\|_{L^2}^2 = \sum_{i=1}^{\infty} e^{-\lambda_i \tau} \asymp \sum_{i=1}^{\infty} e^{-Ci^{2/m}\tau}\asymp \int_0^{\infty}e^{-Cx^{2/m}\tau} \, dx.
\end{align*}
By a change of variable, the last expression is equal to 
\begin{align*}
    \frac{m}{2}\int_0^{\infty} e^{-C\tau x}x^{\frac{m}{2}-1} \, dx =\frac{m}{2}\cdot \frac{\Gamma(\frac{m}{2})}{(C\tau)^{m/2}}\asymp \tau^{-\frac{m}{2}}. 
\end{align*}
For the second claim, by \cite[Corollary 4.5]{lang2016continuity} it suffices to show that 
\begin{align}
    \mathbb{E}|u^{\SE}(x)-u^{\SE}(y)|^2 \leq Cd_{\mathcal{M}}(x,y)^{\eta} \label{eq:cont cond}
\end{align}
for all  $\eta\in(0,1)$, $C>0$, and for all $x,y$ satisfying $d_{\mathcal{M}}(x,y)< 1$. Notice that 
\begin{align*}
    \mathbb{E}|u^{\SE}(x)-u^{\SE}(y)|^2 &=\tau^{\frac{m}{2}}\mathbb{E}\Big|\sum_{i=1}^{\infty}e^{-\frac{\lambda_i\tau}{2}}\xi_i(\psi_i(x)-\psi_i(y))\Big|^2\\
    &=\tau^{\frac{m}{2}} \sum_{i=1}^{\infty} e^{-\lambda_i\tau} |\psi_i(x)-\psi_i(y)|^2 \\
    &\leq \tau^{\frac{m}{2}} \sum_{i=1}^{\infty}e^{-\lambda_i\tau} \|\nabla \psi_i\|_{\infty}^2 d_{\mathcal{M}}(x,y)^2\\
    &\leq  C\tau^{\frac{m}{2}} \sum_{i=1}^{\infty}e^{-\lambda_i\tau} \lambda_i^{m+1} d_{\mathcal{M}}(x,y)^2,
\end{align*}
where we have used Proposition \ref{prop:Laplacian efun infinity bound} in the last step. Now by Weyl's law, 
\begin{align*}
    \sum_{i=1}^{\infty}e^{-\lambda_i \tau}\lambda_i^{m+1} \leq C\sum_{i=1}^{\infty}e^{-c\tau i^{2/m}}i^{\frac{2(m+1)}{m}}\leq C\int_{1}^{\infty}e^{-c\tau x^{2/m}}x^{\frac{2(m+1)}{m}} \, dx<\infty.  
\end{align*}
Therefore, 
\begin{align*}
    \mathbb{E}|u^{\SE}(x)-u^{\SE}(y)|^2 \leq C d_{\mathcal{M}}(x,y)^2 \leq Cd_{\mathcal{M}}(x,y)^{\eta} 
\end{align*}
for any $\eta\in(0,1)$ when $d_{\mathcal{M}}(x,y)<1$, thereby verifying \eqref{eq:cont cond}.

To show local H\"older continuity of $u^{\Ma}$, we need a more careful analysis. Similarly as above, we have 
\begin{align*}
    \mathbb{E}|u^{\Ma}(x)-u^{\Ma}(y)|^2 & = \kappa^{2s-m}\mathbb{E} \Big| \sum_{i=1}^{\infty}(\kappa^2+\lambda_i)^{-\frac{s}{2}}\xi_i (\psi_i(x)-\psi_i(y) \Big|^2\\
    &=\kappa^{2s-m} \sum_{i=1}^{\infty}(\kappa^2+\lambda_i)^{-s} |\psi_i(x)-\psi_i(y)|^2.
\end{align*}
Now by Proposition \ref{prop:Laplacian efun infinity bound}, we shall control $|\psi_i(x)-\psi_(y)|$ by the smaller quantity of the following two bounds  
\begin{align*}
    |\psi_i(x)-\psi_i(y)|&\leq C\lambda_i^{\frac{m-1}{4}},\\
    |\psi_i(x)-\psi_i(y)|&\leq C\lambda_i^{\frac{m+1}{2}}d_{\mathcal{M}}(x,y). 
\end{align*}
Precisely, we have 
\begin{align}
    \mathbb{E}|u^{\Ma}(x)-u^{\Ma}(y)|^2 & \leq C\sum_{i=1}^{\infty}(\kappa^2+\lambda_i)^{-s} \operatorname{min}\left\{\lambda_i^{\frac{m-1}{2}},\lambda_i^{m+1}d_{\mathcal{M}}(x,y)^2\right\}\nonumber\\
    &\leq C\sum_{i=1}^{\infty} i^{-\frac{2s}{m}}\operatorname{min}\left\{i^{\frac{m-1}{m}},i^{\frac{2m+2}{m}}d_{\mathcal{M}}(x,y)^2\right\}\nonumber\\
    &\leq C \sum_{i=1}^{K} i^{-\frac{2s}{m}}i^{\frac{2m+2}{m}} d_{\mathcal{M}}(x,y)^2 + C\sum_{i=K+1}^{\infty} i^{-\frac{2s}{m}}i^{\frac{m-1}{m}}, \label{eq:ma holder}
\end{align}
where $K=d_{\mathcal{M}}(x,y)^{-\frac{2m}{m+3}}$. Therefore we have 
\begin{align*}
    \eqref{eq:ma holder} &\leq Cd_{\mathcal{M}}(x,y)^2 \int_1^K z^{\frac{-2s+2m+2}{m}}dz+C\int_{K}^{\infty}z^{\frac{-2s+m-1}{m}}dz\\
    &\leq Cd_{\mathcal{M}}(x,y)^{\frac{4s-4m+2}{m+3}} \leq Cd_{\mathcal{M}}(x,y). 
\end{align*}
The result follows again by \cite[Corollary 4.5]{lang2016continuity}. 
  
\end{proof}

Now we are ready to prove Proposition \ref{prop:graph GP approx bound}. The first statement on the approximation error of $u_N^{\Ma}$ follows from \cite[Theorem 4.6]{sanz2020unlabeled}. To show the second, recall that 
\begin{alignat*}{2}
    u_N^{\SE}&=\tau^{\frac{m}{4}}\sum_{i=1}^{k_N}e^{-\frac{\lambda_{N,i} \tau}{2}}\xi_i\psi_{N,i},\quad \quad &&\xi_i\overset{i.i.d.}{\sim}\mathcal{N}(0,1),\\
    u^{\SE}&=\tau^{\frac{m}{4}}\sum_{i=1}^{\infty} e^{-\frac{\lambda_i \tau}{2}}\xi_i\psi_{i},\quad \quad &&\xi_i\overset{i.i.d.}{\sim}\mathcal{N}(0,1),
\end{alignat*}
and introduce two intermediate random processes
\begin{alignat*}{2}
    \widetilde{u}_{N}^{\SE} &= \tau^{\frac{m}{4}}\sum_{i=1}^{k_N} e^{-\frac{\lambda_i \tau}{2}}\xi_i\psi_{N,i},\quad \quad &&\xi_i\overset{i.i.d.}{\sim}\mathcal{N}(0,1), \\
    \widehat{u}_N^{\SE}&=\tau^{\frac{m}{4}}\sum_{i=1}^{k_N} e^{-\frac{\lambda_i \tau}{2}}\xi_i\psi_{i},\quad \quad &&\xi_i\overset{i.i.d.}{\sim}\mathcal{N}(0,1).
\end{alignat*}
We then have
\begin{align*}
    \mathbb{E}\|u_N^{\SE}\circ T_N-u^{\SE}\|_{\infty}
    \leq \mathbb{E}\|u_N^{\SE}\circ T_N-\widetilde{u}_{N}^{\SE}\circ T_N\|_{\infty}+\mathbb{E}\|\widetilde{u}_{N}^{\SE}\circ T_N-\widehat{u}_N^{\SE}\|_{\infty}+\mathbb{E}\|\widehat{u}_N^{\SE}-u^{\SE}\|_{\infty}
\end{align*}
and we shall proceed by bounding each of the three terms on the right. First, note that 
\begin{align}
    \mathbb{E}\|\widehat{u}_N^{\SE}-u^{\SE}\|_{\infty}
    &=\mathbb{E}\bigg\|\tau^{\frac{m}{4}}\sum_{i=k_N+1}^{\infty}e^{-\frac{\lambda_i \tau}{2}}\xi_i\psi_i\bigg\|_{\infty}\nonumber\\
    &\lesssim \sum_{i=k_N+1}^{\infty}e^{-\frac{\lambda_i \tau}{2}}\mathbb{E}|\xi_i|\|\psi_i\|_{\infty}
    \lesssim  \sum_{i=k_N+1}^{\infty}e^{-\frac{\lambda_i \tau}{2}} \lambda_i^{\frac{m-1}{4}},\label{eq:SE eq1}
\end{align}
where we have used Proposition \ref{prop:Laplacian efun infinity bound} in the last step. Now by Weyl's law, $\lambda_i\asymp i^{2/m}$ so that we can further bound 
\begin{align}
    \eqref{eq:SE eq1} \lesssim \sum_{i=k_N+1}^{\infty} e^{-c_0  \tau i^{2/m}}i^{\frac{2}{m}\frac{m-1}{4}}&\lesssim \int_{k_N}^{\infty} e^{-c_0\tau x^{2/m}} x^{\frac{m-1}{2m}} \, dx \nonumber\\
    &= \int_{k_N^{2/m}}^{\infty}e^{-c_0 \tau z}z^{\frac{3m-1}{4}-1}dz \label{eq:SE eq2}
\end{align}
after a change of variable, where $c_0$ is a universal constant. Notice that the rightmost term \eqref{eq:SE eq2} is equal up to a multiplicative constant to $\mathbb{P}(X\geq k_N^{2/m})$ with $X$ being a Gamma random variable with shape parameter $\frac{3m-1}{4}$ and scale parameter $\frac{1}{c_0 \tau}$. Now by the tail bound of sum-Gamma distributions (cf. \cite[Lemma 5.1]{zhang2020concentration}) applied to $X-\mathbb{E}X\in \text{sub}\Gamma \Bigl(\frac{3m-1}{4c_0^2 \tau^2},\frac{1}{c_0 \tau}\Bigr)$, we have 
\begin{align}
     \mathbb{E}\|\widehat{u}_N^{\SE}-u^{\SE}\|_{\infty}\lesssim\eqref{eq:SE eq2} \lesssim \mathbb{P}(X-\mathbb{E}X\geq k_N^{2/m}-\mathbb{E}X)\lesssim e^{-Ck_N^{2/m}} \label{eq:SE part1}
\end{align}
for some constant $C$ when $k_N^{2/m} \gg \mathbb{E}X=\frac{3m-1}{4c_0 \tau}$. Similarly, we have 
\begin{align}
    \mathbb{E}\|u_N^{\SE}\circ T_N-\widetilde{u}_{N}^{\SE}\circ T_N\|_{\infty}
    \lesssim \sum_{i=1}^{k_N} \Big|e^{-\frac{\lambda_{N,i}\tau}{2}}-e^{-\frac{\lambda_i \tau}{2}}\Big| \|\psi_{N,i}\circ T_N\|_{\infty}. \label{eq:SE eq3}
\end{align}
By the mean value theorem, we have that $|e^{-x}-e^{-y}|=e^{-\zeta}|x-y|\leq \text{max}\{e^{-x},e^{-y}\}|x-y|$ for some $\zeta \in \bigl(\min(x,y), \max(x,y)\bigr)$ where $x,y>0$. Thus, we have
\begin{align*}
    \Big|e^{-\frac{\lambda_{N,i}\tau}{2}}-e^{-\frac{\lambda_i \tau}{2}}\Big| &\leq \text{max} \Bigl\{e^{-\frac{\lambda_{N,i}\tau}{2}},e^{-\frac{\lambda_i \tau}{2}} \Bigr\} \frac{\tau}{2}|\lambda_{N,i}-\lambda_i| 
    \leq \frac{\tau}{2} e^{-\frac{\lambda_i\tau}{4}}\lambda_i \left(\frac{\rho_N}{h_N}+h_N\sqrt{\lambda_i}\right),
\end{align*}
where in the last step we have used Proposition \ref{prop:eval efun bound} which also implies $\lambda_{N,i}\geq \lambda_i/2$ when $N$ is large. Moreover, Proposition \ref{prop:eval efun bound} implies that, for $i=1,\ldots,k_N$,
\begin{align}
    \|\psi_{N,i}\circ T_N\|_{\infty} &\leq \|\psi_{N,i}\circ T_N-\psi_i\|_{\infty}+\|\psi_i\|_{\infty}\nonumber \\
    &\lesssim \lambda_i^{m+1}i^{\frac{3}{2}} \sqrt{\frac{\rho_N}{h_N}+h_N\sqrt{\lambda_i}} +\|\psi_i\|_{\infty} 
    \lesssim \Big(\frac{\rho_N}{h_N}+h_N\Big)i^{\frac{7m+5}{2m}}+\|\psi_i\|_{\infty}. \label{eq:SE eq4}
\end{align}
Proposition \ref{prop:Laplacian efun infinity bound} implies that $\|\psi_i\|_{\infty}\lesssim \lambda_i^{\frac{m-1}{4}}\lesssim i^{\frac{m-1}{2m}}$. Therefore we would like to set $h_N$ and $k_N$ to satisfy 
\begin{align}
    \Big(\frac{\rho_N}{h_N}+h_N\Big)k_N^{\frac{7m+5}{2m}} \lesssim k_N^{\frac{m-1}{2m}} \label{eq:condition4 for kn}
\end{align}
so that \eqref{eq:SE eq4} grows like $\|\psi_i\|_{\infty}$ for all $i=1,\ldots,k_N$. We shall keep \eqref{eq:condition4 for kn} in mind together with those conditions in \eqref{eq:conditions for hn and kn} and proceed by assuming that we have made such choices. 
Now we can bound 
\begin{align}
    \eqref{eq:SE eq3} &\lesssim \sum_{i=1}^{k_N} \frac{\tau}{2}e^{-\frac{\lambda_i \tau}{4}}\lambda_i^{\frac{m+3}{4}}\left(\frac{\rho_N}{h_N}+h_N\sqrt{\lambda_i}\right)\nonumber\\
    &\lesssim \frac{\tau}{2}\left(\frac{\rho_N}{h_N}+h_N\right)\sum_{i=1}^{k_N}e^{-\frac{\lambda_i\tau}{4}}\lambda_i^{\frac{m+5}{4}}\lesssim \frac{\rho_N}{h_N}+h_N, \label{eq:SE part2}
\end{align}
where we used the fact that
\begin{align*}
  \sum_{i=1}^{k_N}e^{-\frac{\lambda_i \tau}{4}}\lambda_i^{\frac{m+5}{4}} \lesssim \sum_{i=1}^{k_N}e^{-Ci^{2/m}}i^{\frac{m+5}{2m}}\lesssim \int_{1}^{\infty}e^{-Cx^{2/m}}x^{\frac{m+5}{2m}}\, dx <\infty. 
\end{align*}
Lastly, we have by Proposition \ref{prop:eval efun bound}
\begin{align}
    \mathbb{E}\|\widetilde{u}_{N}^{\SE}\circ T_N-\widehat{u}_N^{\SE}\|_{\infty}& \lesssim \sum_{i=1}^{k_N}e^{-\frac{\lambda_i \tau}{2}} \|\psi_{N,i}\circ T_N-\psi_i\|_{\infty}\nonumber\\
    &\lesssim \sum_{i=1}^{k_N}e^{-\frac{\lambda_i \tau}{2}}\lambda_i^{m+1}i^{\frac{3}{2}}\sqrt{\frac{\rho_N}{h_N}+h_N\sqrt{\lambda_i}}\lesssim \sqrt{\frac{\rho_N}{h_N}+h_N} \,. \label{eq:SE part3}
\end{align}
Combining \eqref{eq:SE part1}, \eqref{eq:SE part2},\eqref{eq:SE part3}, we get 
\begin{align*}
    \mathbb{E}\|u_N^{\SE}\circ T_N-u^{\SE}\|_{\infty} \lesssim e^{-Ck_N^{2/m}} + \sqrt{\frac{\rho_N}{h_N}+h_N} \,.
\end{align*}
Now it remains to set $h_N$ and $k_N$ and we remark that the approximation error will be dominated by the second term $\sqrt{\rho_N/h_N+h_N}$ when $N$ is large. It can be checked that the following scaling satisfies the conditions imposed by \eqref{eq:conditions for hn and kn} and \eqref{eq:condition4 for kn}. 
\paragraph{Case 1: $m\leq 4$} Setting for some arbitrarily small $\delta>0$ 
\begin{align*}
    h_N\asymp N^{-\frac{1}{m+4+\delta}}(\log N)^{\frac{p_m}{2}} ,\quad \quad (\log N)^{\frac{m}{2}}\ll k_N\ll N^{\frac{m}{(m+4+\delta)(3m+3)}}(\log N)^{-\frac{mp_m}{6m+6}},
\end{align*}
we obtain that, for large $N,$
\begin{align*}
    \mathbb{E}\|u_N^{\SE}\circ T_N-u^{\SE}\|_{\infty}\lesssim N^{-\frac{1}{2(m+4+\delta)}} (\log N)^{\frac{p_m}{4}}.
\end{align*}
\paragraph{Case 2: $m\geq 5$} Setting
\begin{align*}
    h_N\asymp N^{-\frac{1}{2m}}(\log N)^{-\frac{p_m}{2}},\quad \quad (\log N)^{\frac{m}{2}}\ll k_N\ll N^{\frac{1}{6m+6}}(\log N)^{-\frac{mp_m}{6m+6}},
\end{align*}
we obtain 
\begin{align*}
    \mathbb{E}\|u_N^{\SE}\circ T_N-u^{\SE}\|_{\infty}\lesssim N^{-\frac{1}{4m}}(\log N)^{\frac{p_m}{4}}.
\end{align*}

\section{Proof of Theorem \ref{thm:regret bound}}  \label{appenB}
We start by introducing the key ingredients of the regret analysis of Bayesian optimization algorithms, in particular the GGP-UCB algorithm. Most of the preliminary results in this section can be found in \cite{srinivas2010gaussian,bogunovic2021misspecified}. 

Recall that our goal is to bound the simple regret defined as in \eqref{eq:simple regret}. But a typical strategy in the BO literature is to look at the \emph{cumulative regret},  defined as 
\begin{align}
    R_{N,L}=\sum_{\ell=1}^L \Bigl( f(z^{\ast})-f(z_\ell)\Bigr),\quad \quad    z^*=\underset{z\in \M_N}{\operatorname{arg\,max}}\, f(z).  \label{eq:cumu R_{N,L}}
\end{align}
Then using the fact that 
\begin{align*}
    f(z_L^*)\geq \frac{1}{L}\sum_{\ell=1}^L f(z_\ell), \quad \quad z_L^*= \underset{z\in\{z_\ell\}_{\ell=1}^L}{\operatorname{arg\,max}}\, f(z),
\end{align*}
one can bound the simple regret as 
\begin{align}
    r_{N,L}=f(z^*)-f(z_L^*) \leq \frac{1}{L}\sum_{\ell=1}^L \Bigl(f(z^*)-f(z_\ell)\Bigr) =\frac{R_{N,L}}{L}.  \label{eq:sr cr}
\end{align}
The key to bounding the cumulative regret consists of two steps. The first is a concentration-type result that constructs confidence bands which $f$ lies in with high probability based on the observed samples. More precisely, we have the following result. 
\begin{lemma}\label{lem:graph GP concentration}
Let $\delta\in(0,1)$ and set $b_{N,\ell}=\sqrt{2\log (\pi^2\ell^2 N/6\delta)}$. Then with probability $1-\delta$, we have
\begin{align*}
    |u_N(z)-\widetilde{\mu}_{N,\ell-1}(z)|\leq b_{N,\ell}\sigma_{N,\ell-1}(z) \quad \forall z \in \M_N, \quad \forall \ell\geq 1,
\end{align*}
where 
\begin{align*}
     \widetilde{\mu}_{N,\ell}( z)&=c_{N,\ell}(z)^\top(C_{N,\ell}+\sigma^2 I)^{-1}\widetilde{Y}_{N,\ell}  \quad \quad 
\end{align*}
and $\widetilde{Y}_{N,\ell}\in \mathbb{R}^\ell$ is vector with entry $(\widetilde{Y}_{N,\ell})_i=u_N(z_i)+\eta_i.$ See \eqref{eq:pm pstd} for the definition of $c_{N,\ell}$ and $C_{N,\ell}$.
\end{lemma}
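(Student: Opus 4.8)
The plan is to run the standard well-specified GP-UCB confidence-band argument of \cite{srinivas2010gaussian}, transported to the graph index set $\M_N$. The point to keep in mind is that in Lemma~\ref{lem:graph GP concentration} the function $u_N$ is itself a draw from the prior $\pi_N$ and the data $\widetilde{Y}_{N,\ell-1}$ are generated from $u_N$ plus independent Gaussian noise, so there is no model misspecification here: the conditional law of $u_N(z)$ given the first $\ell-1$ rounds is exactly a Gaussian with mean $\widetilde{\mu}_{N,\ell-1}(z)$ and variance $\sigma_{N,\ell-1}^2(z)$ as in \eqref{eq:pm pstd}. Thus the statement reduces to a Gaussian tail estimate combined with a union bound over the finite point cloud and over the iteration index.

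First I would fix $\ell\ge 1$ and $z\in\M_N$ and condition on the $\sigma$-algebra generated by $\widetilde{Y}_{N,\ell-1}$. Under this conditioning the query points $z_1,\dots,z_{\ell-1}$ are deterministic, the random vector $\bigl(u_N(z),u_N(z_1),\dots,u_N(z_{\ell-1}),\eta_1,\dots,\eta_{\ell-1}\bigr)$ is jointly Gaussian, and the Gaussian conditioning formula gives $u_N(z)\mid\widetilde{Y}_{N,\ell-1}\sim\mathcal{N}\bigl(\widetilde{\mu}_{N,\ell-1}(z),\sigma_{N,\ell-1}^2(z)\bigr)$ with precisely the expressions in the statement (note in particular that $\sigma_{N,\ell-1}^2(z)$ is a function of the query locations only, not of the observed values). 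Applying the elementary bound $\Prob(|r|>b)\le e^{-b^2/2}$, valid for $r\sim\mathcal{N}(0,1)$ and any $b>0$, yields $\Prob\bigl(|u_N(z)-\widetilde{\mu}_{N,\ell-1}(z)|>b\,\sigma_{N,\ell-1}(z)\mid\widetilde{Y}_{N,\ell-1}\bigr)\le e^{-b^2/2}$, and taking expectations removes the conditioning.

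Next I would set $b=b_{N,\ell}=\sqrt{2\log(\pi^2\ell^2N/6\delta)}$, so that $e^{-b_{N,\ell}^2/2}=6\delta/(\pi^2\ell^2N)$, and take a union bound over the $N$ points $z\in\M_N$ and over all $\ell\ge1$. Since $\sum_{\ell\ge1}\ell^{-2}=\pi^2/6$, the probability that the inequality fails for some $z$ and some $\ell$ is at most $\sum_{\ell\ge1}N\cdot\frac{6\delta}{\pi^2\ell^2N}=\delta$, which is the claim.

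The one step that is not purely mechanical --- and hence the main obstacle --- is justifying the conditioning in the second paragraph when the design is adaptive: because each $z_k$ is a function of $\widetilde{Y}_{N,k-1}$, one must argue that conditioning on $\widetilde{Y}_{N,\ell-1}$ still produces the nominal GP posterior at an arbitrary $z\in\M_N$. This is handled exactly as in the GP-bandit literature: the posterior covariance is a deterministic function of the design locations, and a regular-conditional-probability (filtration) argument shows that the adaptive choice of query points does not alter the posterior; see \cite{srinivas2010gaussian}. Everything else is a routine Gaussian estimate, so I would present it succinctly.
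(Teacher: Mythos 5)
Your proposal is correct and follows essentially the same route as the paper: the paper's proof is a one-line appeal to \cite[Lemma 5.1]{srinivas2010gaussian} applied to the graph GP $u_N$ with the surrogate data $\widetilde{Y}_{N,\ell}$, and what you have written out (Gaussian conditioning under the adaptive design, the tail bound $e^{-b^2/2}$, and the union bound over the $N$ points and over $\ell$ using $\sum_{\ell\geq 1}\ell^{-2}=\pi^2/6$) is precisely the argument behind that cited lemma, with the correct constants matching $b_{N,\ell}$. Your emphasis that the surrogate data make the model well specified, so no misspecification correction enters at this stage, is exactly the point of introducing $\widetilde{Y}_{N,\ell}$ in the paper.
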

\begin{proof}
This is \cite[Lemma 5.1]{srinivas2010gaussian} applied to the graph GP $u_N$, with the ``surrogate'' data $\widetilde{Y}_{N,\ell}$. 
\end{proof}
Here and below, we shall use $c_N(\cdot,\cdot)$ as a placeholder for either the Mat\'ern or SE graph-based covariance function \eqref{eq:graph cf}. Notice that the ``surrogate'' data $\widetilde{Y}_{N,\ell}$ is introduced only for the purpose of analysis and the algorithm only has access to the real data $y_\ell=f(z_\ell)+\eta_\ell$. An important follow-up question is on the difference between the surrogate-data posterior mean $\widetilde{\mu}_{N,\ell}$ and the true posterior mean $\mu_{N,\ell}=c_{N,\ell}(x)^\top(C_{N,\ell}+\sigma^2 I)^{-1}Y_\ell$ that is actually used in the algorithm, answered by the following result. 

\begin{lemma}\label{lem:misspec pm bound}
In the event of \eqref{eq:misspec bound}, we have
\begin{align*}
    |\mu_{N,\ell}(z)-\widetilde{\mu}_{N,\ell}(z)|\leq \frac{\epsilon_N\sqrt{\ell}}{\delta \sigma} \sigma_{N,\ell}(z), \quad \forall z\in \M_N\, \quad \forall \ell\geq 1,  
\end{align*}
where we recall $\sigma$ is the standard deviation of the noise $\eta_{\ell}$ and $\sigma_{N,\ell}$ is defined in \eqref{eq:pm pstd}.
\end{lemma}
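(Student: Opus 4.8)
The plan is to reduce the assertion to a deterministic linear‑algebra inequality relating the kernel‑regression weights to the posterior standard deviation, following the strategy of \cite{bogunovic2021misspecified} but carried out for the graph Gaussian process $u_N$ rather than for an RKHS element.

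First I would observe that, because the query points lie in $\M_N$, the real data satisfy $y_i = f(z_i) + \eta_i = f_N(z_i) + \eta_i$, while by definition $(\widetilde{Y}_{N,\ell})_i = u_N(z_i) + \eta_i$ with the \emph{same} noise realizations. Consequently $\mu_{N,\ell}$ and $\widetilde{\mu}_{N,\ell}$ are formed with the identical matrices $c_{N,\ell}(z)$ and $C_{N,\ell}$, so that
\[
\mu_{N,\ell}(z) - \widetilde{\mu}_{N,\ell}(z) = c_{N,\ell}(z)^{\top}\bigl(C_{N,\ell}+\sigma^2 I\bigr)^{-1}\bigl(Y_\ell - \widetilde{Y}_{N,\ell}\bigr),
\]
where the $i$-th entry of $Y_\ell - \widetilde{Y}_{N,\ell}$ is $f_N(z_i) - u_N(z_i)$. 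On the event \eqref{eq:misspec bound} each such entry has absolute value at most $\epsilon_N/\delta$, hence $\|Y_\ell - \widetilde{Y}_{N,\ell}\|_2 \le \sqrt{\ell}\,\epsilon_N/\delta$. By Cauchy--Schwarz the lemma then follows once I establish the deterministic bound
\[
\bigl\|(C_{N,\ell}+\sigma^2 I)^{-1}c_{N,\ell}(z)\bigr\|_2 \;\le\; \frac{\sigma_{N,\ell}(z)}{\sigma}, \qquad z\in\M_N.
\]

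To prove this, set $M := C_{N,\ell}+\sigma^2 I$ and diagonalize $C_{N,\ell} = \sum_j \lambda_j u_j u_j^{\top}$ with $\lambda_j\ge 0$ and $\{u_j\}$ orthonormal; write $c_j = u_j^{\top}c_{N,\ell}(z)$. Positive semidefiniteness of the $(\ell+1)$-dimensional covariance matrix of $\bigl(u_N(z_1),\dots,u_N(z_\ell),u_N(z)\bigr)$ under $\pi_N$ — whose leading $\ell\times\ell$ block is $C_{N,\ell}$ and whose remaining row and column are built from $c_{N,\ell}(z)$ and $c_N(z,z)$ — forces $c_j = 0$ whenever $\lambda_j = 0$ and $\sum_{\lambda_j>0} c_j^2/\lambda_j \le c_N(z,z)$. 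A direct computation then gives
\[
\sigma^2\bigl\|M^{-1}c_{N,\ell}(z)\bigr\|_2^2 + c_{N,\ell}(z)^{\top}M^{-1}c_{N,\ell}(z) = \sum_j c_j^2\,\frac{\lambda_j + 2\sigma^2}{(\lambda_j+\sigma^2)^2},
\]
and since $\lambda_j(\lambda_j + 2\sigma^2)\le(\lambda_j+\sigma^2)^2$ the right‑hand side is at most $\sum_{\lambda_j>0} c_j^2/\lambda_j \le c_N(z,z)$. Rearranging and recalling $\sigma_{N,\ell}^2(z) = c_N(z,z) - c_{N,\ell}(z)^{\top}M^{-1}c_{N,\ell}(z)$ from \eqref{eq:pm pstd} yields $\sigma^2\|M^{-1}c_{N,\ell}(z)\|_2^2 \le \sigma_{N,\ell}^2(z)$. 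Chaining with the earlier display produces $|\mu_{N,\ell}(z) - \widetilde{\mu}_{N,\ell}(z)| \le (\sigma_{N,\ell}(z)/\sigma)\cdot\sqrt{\ell}\,\epsilon_N/\delta$, uniformly in $z\in\M_N$ and $\ell\ge 1$.

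The main obstacle is exactly this deterministic inequality. Applying Cauchy--Schwarz in the naive form $|c^{\top}M^{-1}v| \le \|M^{-1/2}c\|_2\,\|M^{-1/2}v\|_2$ would only bound the first factor by $\sqrt{c_N(z,z) - \sigma_{N,\ell}^2(z)}$, which can be strictly larger than $\sigma_{N,\ell}(z)$ and is too weak; one genuinely needs the sharper elementary estimate $\tfrac{\lambda+2\sigma^2}{(\lambda+\sigma^2)^2}\le\tfrac1\lambda$ together with the fact that $c_{N,\ell}(z)$ has no component in the kernel of $C_{N,\ell}$ (so that $c^{\top}C_{N,\ell}^{+}c\le c_N(z,z)$ is available). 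Everything else in the argument is routine bookkeeping.
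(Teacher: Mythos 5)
Your proof is correct. The paper itself disposes of this lemma in one line, by invoking Lemma 2 of \cite{bogunovic2021misspecified} with the misspecification level set to $\epsilon_N/\delta$; you instead reprove the key content of that cited lemma from scratch, which is a genuinely more self-contained route. Your reduction is the same as theirs in spirit — write $\mu_{N,\ell}(z)-\widetilde{\mu}_{N,\ell}(z)=c_{N,\ell}(z)^{\top}(C_{N,\ell}+\sigma^2 I)^{-1}(Y_\ell-\widetilde{Y}_{N,\ell})$, note that on the event \eqref{eq:misspec bound} the difference vector has entries bounded by $\epsilon_N/\delta$ (the same noise realizations appear in both data vectors, consistent with how $\widetilde{Y}_{N,\ell}$ is defined in Lemma \ref{lem:graph GP concentration}), and then bound the weight vector — but the deterministic inequality $\|(C_{N,\ell}+\sigma^2 I)^{-1}c_{N,\ell}(z)\|_2\le\sigma_{N,\ell}(z)/\sigma$ is exactly where the work lies, and your eigendecomposition argument for it checks out: the identity $\sigma^2\|M^{-1}c\|_2^2+c^{\top}M^{-1}c=\sum_j c_j^2(\lambda_j+2\sigma^2)/(\lambda_j+\sigma^2)^2$ is correct, the elementary bound $\lambda(\lambda+2\sigma^2)\le(\lambda+\sigma^2)^2$ holds, and the facts that $c_{N,\ell}(z)$ has no component in the kernel of $C_{N,\ell}$ and that $c^{\top}C_{N,\ell}^{+}c\le c_N(z,z)$ follow correctly from positive semidefiniteness of the bordered covariance matrix (generalized Schur complement). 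You are also right that the naive Cauchy--Schwarz split through $M^{-1/2}$ only yields $\sqrt{c_N(z,z)-\sigma_{N,\ell}^2(z)}$ and is too weak. What the paper's approach buys is brevity, at the cost of the reader having to check that the RKHS-flavored statement in \cite{bogunovic2021misspecified} transfers to the present GP-sample-path setting on the finite index set $\M_N$; what your approach buys is a fully explicit, purely linear-algebraic verification that makes that transfer transparent.
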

\begin{proof}
This follows by setting the misspecification error to be $\epsilon_N/\delta$ in \cite[Lemma 2]{bogunovic2021misspecified}.
\end{proof}

Now with these preparations, we are ready to start the proof of Theorem \ref{thm:regret bound}. 
In the event of \eqref{eq:misspec bound} that 
\begin{align*}
    \underset{z\in\M_N}{\operatorname{max}} |u_N(z)-f(z)|\leq \delta^{-1}\epsilon_N,
\end{align*}
which holds with probability $1-\delta$ by Proposition \ref{prop:graph GP approx bound} (with $\epsilon_N$ the corresponding error bounds \eqref{eq:epsilon_N}), we can shift our focus to the following cumulative regret 
\begin{align*}
    \widetilde{R}_{N,L}=\sum_{\ell=1}^L  u_N(z^*)- u_N(z_\ell), \quad\quad z^*=\underset{z\in \M_N}{\operatorname{arg\,max}}\, f(z),
\end{align*}
which differs from $R_{N,L}$ \eqref{eq:cumu R_{N,L}} at most by $2\epsilon_N L/\delta$. Under the further event where Lemma \ref{lem:graph GP concentration} holds, we have by Lemma \ref{lem:misspec pm bound} that for all $z\in \M_N$,
\begin{align*}
    \mu_{N,\ell-1}(z)-\left(b_{N,\ell}+\frac{\epsilon_N\sqrt{\ell-1}}{\delta \sigma} \right)\sigma_{N,\ell-1}(z)\leq u_N(z)
    \leq \mu_{N,\ell-1}(z)+\left(b_{N,\ell}+\frac{\epsilon_N\sqrt{\ell-1}}{\delta \sigma} \right)\sigma_{N,\ell-1}(z).
\end{align*}
Therefore 
\begin{align*}
    \widetilde{R}_{N,L}&\leq \sum_{\ell=1}^L \Biggl(\mu_{N,\ell-1}(z^*)+\left(b_{N,\ell}+\frac{\epsilon_N\sqrt{\ell-1}}{\delta\sigma} \right)\sigma_{N,\ell-1}(z^*)\\
    &\quad \quad -\left[\mu_{N,\ell-1}(z_\ell)-\left(b_{N,\ell}+\frac{\epsilon_N\sqrt{\ell-1}}{\delta \sigma} \right)\sigma_{N,\ell-1}(z_\ell)\right] \Biggr)\\
    &\leq 2\sum_{\ell=1}^L\left(b_{N,\ell}+\frac{\epsilon_N\sqrt{\ell-1}}{\delta\sigma} \right)\sigma_{N,\ell-1}(z_\ell)\\
    &\leq 2\left(b_{N,L}+\frac{\epsilon_N\sqrt{L-1}}{\delta \sigma} \right)\sum_{\ell=1}^L \sigma_{N,\ell-1}(z_\ell),
\end{align*}
where in the second step we have used our definition of $z_\ell$ in \eqref{eq:choice of zt} that for all $z\in \M_N$ including $z^*$
\begin{align*}
    \mu_{N,\ell-1}(z_\ell)+\left(b_{N,\ell}+\frac{\epsilon_N\sqrt{\ell-1}}{\delta \sigma} \right)\sigma_{N,\ell-1}(z_\ell)\geq 
    \mu_{N,\ell-1}(z)+\left(b_{N,\ell}+\frac{\epsilon_N\sqrt{\ell-1}}{\delta \sigma} \right)\sigma_{N,\ell-1}(z).
\end{align*}
Therefore we have arrived at the conclusion that 
\begin{align}
    R_{N,L}\leq \frac{2\epsilon_NL}{\delta} + 2\left(b_{N,L}+\frac{\epsilon_N\sqrt{L-1}}{\delta\sigma} \right)\sum_{\ell=1}^L \sigma_{N,\ell-1}(z_\ell). \label{eq:proof RNT eq1}
\end{align}
Here comes the second key ingredient in the regret analysis, which is to relate the sum of posterior standard deviations $\sum_{\ell=1}^L\sigma_{N,\ell-1}(z_\ell)$ to the so-called \emph{maximum information gain}.
The following result is taken from \cite[Lemma 5.3]{srinivas2010gaussian}. 

\begin{lemma}\label{lemma:information gain}
Let $I(y;v)$ denote the mutual information between two random vectors $y$ and $v$ of the same size. We have 
\begin{align*}
    I\big(\widetilde{Y}_{N,L}; \{u_N(z_\ell)\}_{\ell=1}^L\big)=\frac12\sum_{\ell=1}^L \log \big(1+\sigma^{-2} \sigma^2_{N,\ell-1}(z_\ell)\big),
\end{align*}
where $\widetilde{Y}_{N,L}$ is the surrogate data defined in Lemma \ref{lem:graph GP concentration}. 
\end{lemma}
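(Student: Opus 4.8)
The plan is to deduce the identity from the joint Gaussianity of the surrogate model, under which it reduces to a short computation with log-determinants of covariance matrices; this is exactly \cite[Lemma 5.3]{srinivas2010gaussian} applied to the finite-dimensional graph GP $u_N\sim\pi_N$ together with the surrogate data $\widetilde Y_{N,L}$. First I would write the mutual information as a difference of Shannon differential entropies, $I\big(\widetilde Y_{N,L};\{u_N(z_\ell)\}_{\ell=1}^L\big) = h(\widetilde Y_{N,L}) - h\big(\widetilde Y_{N,L}\mid\{u_N(z_\ell)\}_{\ell=1}^L\big)$. Since $(\widetilde Y_{N,L})_\ell = u_N(z_\ell)+\eta_\ell$ with $\eta_\ell\sim\mathcal N(0,\sigma^2)$ i.i.d.\ and independent of $u_N$, conditioning on $\{u_N(z_\ell)\}_{\ell=1}^L$ leaves only the noise, so the subtracted term equals $\tfrac{L}{2}\log(2\pi e\,\sigma^2)$.

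Next I would expand the remaining entropy by the chain rule, $h(\widetilde Y_{N,L}) = \sum_{\ell=1}^L h\big((\widetilde Y_{N,L})_\ell\mid (\widetilde Y_{N,L})_{1},\ldots,(\widetilde Y_{N,L})_{\ell-1}\big)$, and invoke Gaussian conditioning: given the first $\ell-1$ surrogate observations, $u_N(z_\ell)$ is Gaussian with variance precisely $\sigma^2_{N,\ell-1}(z_\ell)$ --- this is the posterior-variance formula in \eqref{eq:pm pstd}, which depends only on the query locations $z_1,\ldots,z_\ell$ and not on the observed values --- while $\eta_\ell$ is independent of the past with variance $\sigma^2$. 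Hence the $\ell$-th conditional entropy is $\tfrac12\log\big(2\pi e(\sigma^2+\sigma^2_{N,\ell-1}(z_\ell))\big)$, and subtracting the term from the first step and cancelling the $\log(2\pi e\,\sigma^2)$ factors yields $\tfrac12\sum_{\ell=1}^L\log\big(1+\sigma^{-2}\sigma^2_{N,\ell-1}(z_\ell)\big)$, as claimed. Equivalently, one may observe $I = \tfrac12\log\det\!\big(I_L+\sigma^{-2}C_{N,L}\big)$, with $C_{N,L}$ the covariance matrix of $\{u_N(z_\ell)\}_{\ell=1}^L$, and telescope the determinant through Schur complements to reach the same product.

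The one point requiring care --- and the main, if mild, obstacle --- is that the query points $z_\ell$ are chosen adaptively by \eqref{eq:choice of zt} and are therefore random rather than fixed. I would handle this exactly as in \cite{srinivas2010gaussian}: condition on the realized sequence $z_1,\ldots,z_L$; since each $z_\ell$ is a measurable function of the data collected before step $\ell$, and since the Gaussian posterior variances $\sigma^2_{N,\ell-1}(z_\ell)$ are deterministic functions of the locations (not of the observed values), the step-by-step computation above is unaffected by the adaptivity. This reduces the statement to \cite[Lemma 5.3]{srinivas2010gaussian} for the graph GP $u_N$ with surrogate data $\widetilde Y_{N,L}$, which completes the argument.
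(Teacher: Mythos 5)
Your proposal is correct and follows essentially the same route as the paper: the paper simply cites \cite[Lemma 5.3]{srinivas2010gaussian} applied to the graph GP $u_N$ with the surrogate data $\widetilde{Y}_{N,L}$, and your entropy/chain-rule computation is exactly the standard argument behind that lemma. Your additional remark on the adaptively chosen $z_\ell$'s (the posterior variance depending only on the query locations) is a valid and welcome clarification, but it does not constitute a different approach.
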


As a corollary, we have the following result. 
\begin{lemma}\label{lem:sum of pstd bounded by information gain}
For $N$ large, there exists a universal constant $B$ such that $c_N(x,\tilde{x})$ $\leq B$.   
Moreover, 
\begin{align*}
    \sum_{\ell=1}^L \sigma_{N,\ell-1}(z_\ell)\leq \sqrt{2(\sigma^2+B^2)L\gamma_L}\,,
\end{align*}
where 
\begin{align*}
    \gamma_L= \underset{S\subset \M_N, |S|=L}{\operatorname{max}}\, I\big(\widetilde{Y}_{N,S}; u_N(S)\big)
\end{align*}
is the maximum information gain. Here $u_N(S)$ denotes the vector $\{u_N(s)\}_{s\in S}$ and $\widetilde{Y}_{N,S}$ is the associated vector of observations as in Lemma \ref{lem:graph GP concentration}. 
\end{lemma}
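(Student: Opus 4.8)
The plan is to prove Lemma~\ref{lem:sum of pstd bounded by information gain} in two pieces: first the uniform boundedness of the graph covariance function, and then the bound on the sum of posterior standard deviations.

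\textbf{Step 1: Uniform bound on $c_N$.} From the explicit series \eqref{eq:graph cf}, I would bound $c_N^{\SE}(x,\tx) = \tau^{m/2}\sum_{i=1}^{k_N} e^{-\lambda_{N,i}\tau}\psi_{N,i}(x)\psi_{N,i}(\tx)$ by Cauchy--Schwarz as $\tau^{m/2}\bigl(\sum_{i=1}^{k_N} e^{-\lambda_{N,i}\tau}\psi_{N,i}(x)^2\bigr)^{1/2}\bigl(\sum_i e^{-\lambda_{N,i}\tau}\psi_{N,i}(\tx)^2\bigr)^{1/2}$, so it suffices to bound $\sum_{i=1}^{k_N} e^{-\lambda_{N,i}\tau}\psi_{N,i}(x)^2$ uniformly in $x$. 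Using Proposition~\ref{prop:eval efun bound} one has $\|\psi_{N,i}\circ T_N\|_\infty \lesssim \|\psi_i\|_\infty + o(1) \lesssim \lambda_i^{(m-1)/4} \lesssim i^{(m-1)/2m}$ (invoking Proposition~\ref{prop:Laplacian efun infinity bound} and Weyl's law), together with $\lambda_{N,i}\geq \lambda_i/2 \gtrsim i^{2/m}$ for large $N$; hence the sum is dominated by a convergent series $\sum_i e^{-c i^{2/m}\tau} i^{(m-1)/2m} \lesssim \int_1^\infty e^{-c\tau x^{2/m}} x^{(m-1)/2m}\,dx < \infty$, independent of $N$. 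This yields $c_N^{\SE}(x,\tx)\leq B$. The Mat\'ern case is analogous with $e^{-\lambda_{N,i}\tau}$ replaced by $(\kappa^2+\lambda_{N,i})^{-s}$ and the convergence of $\sum_i i^{-2s/m} i^{(m-1)/2m}$ requiring $s > (3m-1)/4$ (which is implied by the standing hypothesis $s > \tfrac32 m - \tfrac12$). Since $\sigma_{N,\ell-1}^2(z) \leq c_N(z,z) \leq B$, the posterior variances are also uniformly bounded.

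\textbf{Step 2: From variances to information gain.} I would follow the standard Srinivas et al.\ argument. Using $x \leq \frac{x}{\log(1+x)}\log(1+x)$ and monotonicity of $t\mapsto t/\log(1+t)$, with $x = \sigma^{-2}\sigma_{N,\ell-1}^2(z_\ell) \leq \sigma^{-2}B$ and hence $\frac{x}{\log(1+x)}\leq \frac{\sigma^{-2}B}{\log(1+\sigma^{-2}B)}$, one gets $\sigma_{N,\ell-1}^2(z_\ell) \leq \frac{\sigma^{-2}B}{\log(1+\sigma^{-2}B)}\cdot \sigma^2 \log(1+\sigma^{-2}\sigma_{N,\ell-1}^2(z_\ell))$. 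A slightly cleaner route giving the stated constant $2(\sigma^2+B^2)$: note $\sigma^{-2}\sigma_{N,\ell-1}^2(z_\ell)\leq B^2/\sigma^2$ is not quite what is written, so I would instead use the elementary inequality $s^2 \leq (\sigma^2+B^2)\log(1+\sigma^{-2}s^2)$ valid for $0\leq s^2 \leq B^2$ (since $\log(1+u)\geq \frac{u}{1+u}\cdot$ const, one checks $\log(1+\sigma^{-2}s^2)\geq \frac{\sigma^{-2}s^2}{1+\sigma^{-2}B^2}=\frac{s^2}{\sigma^2+B^2}$ using $s^2\le B^2$). Summing over $\ell$ and applying Lemma~\ref{lemma:information gain},
\begin{align*}
    \sum_{\ell=1}^L \sigma_{N,\ell-1}^2(z_\ell) \leq (\sigma^2+B^2)\sum_{\ell=1}^L \log\bigl(1+\sigma^{-2}\sigma_{N,\ell-1}^2(z_\ell)\bigr) = 2(\sigma^2+B^2)\, I\bigl(\widetilde{Y}_{N,L};\{u_N(z_\ell)\}_{\ell=1}^L\bigr) \leq 2(\sigma^2+B^2)\gamma_L,
\end{align*}
where the last step uses that the query set $\{z_\ell\}_{\ell=1}^L \subset \M_N$ has size (at most) $L$, so the mutual information is bounded by the maximum over all size-$L$ subsets, i.e.\ $\gamma_L$. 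Finally, Cauchy--Schwarz (or Jensen) gives $\sum_{\ell=1}^L \sigma_{N,\ell-1}(z_\ell) \leq \sqrt{L\sum_{\ell=1}^L \sigma_{N,\ell-1}^2(z_\ell)} \leq \sqrt{2(\sigma^2+B^2)L\gamma_L}$, which is the claim.

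\textbf{Main obstacle.} The routine part is Step~2, which is essentially verbatim from \cite{srinivas2010gaussian}. The genuine work is Step~1: establishing that the diagonal $c_N(x,x)=\sum_{i=1}^{k_N}(\text{spectral weight})_i\,\psi_{N,i}(x)^2$ is bounded \emph{uniformly in both $x$ and $N$}. This requires combining the eigenfunction sup-norm growth bound (Proposition~\ref{prop:Laplacian efun infinity bound}), the spectral-convergence control on $\psi_{N,i}$ and $\lambda_{N,i}$ (Proposition~\ref{prop:eval efun bound}, which needs the parameter scalings of Proposition~\ref{prop:graph GP approx bound} so that the conditions \eqref{eq:conditions for hn and kn} hold and the error terms are $o(1)$), and Weyl's law to make the resulting series converge; care is needed that the per-mode error terms from Proposition~\ref{prop:eval efun bound}, which grow polynomially in $i$, are still dominated by the exponential (or polynomial, in the Mat\'ern case) decay of the spectral weights up to $i=k_N$, which is exactly where the truncation level and the hypothesis on $s$ enter. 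One should also confirm that the map $T_N$ here can be taken to be the same one furnished by Propositions~\ref{prop:d infinity bound}--\ref{prop:eval efun bound}, so that ``$\|\psi_{N,i}\|_\infty$'' (a sup over $\M_N$) is legitimately compared with $\|\psi_i\|_\infty$ (a sup over $\M$).
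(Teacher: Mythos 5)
Your proof is correct, and your Step 2 coincides with the paper's argument essentially verbatim: the same elementary inequality $x\leq(1+\sigma^{-2}B^2)\log(1+x)$ on the bounded range, the same application of Lemma \ref{lemma:information gain} to convert the sum of log-terms into the mutual information, the same passage to $\gamma_L$, and the same Cauchy--Schwarz step. The only place you diverge is the first claim: the paper bounds $c_N$ by a perturbation argument, controlling $|c_N(z_1,z_2)-c(z_1,z_2)|$ ``in a similar fashion as Proposition \ref{prop:graph GP approx bound}'' and then invoking uniform boundedness of the manifold covariance (via Proposition \ref{prop:Laplacian efun infinity bound}), whereas you bound the graph covariance directly through Cauchy--Schwarz on its spectral expansion, using Proposition \ref{prop:eval efun bound} to transfer the eigenvector sup-norms and eigenvalue lower bounds and Weyl's law to sum the series. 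Both routes rest on exactly the same ingredients; your direct estimate is somewhat more explicit and self-contained (the paper leaves this step as a sketch), while the paper's comparison-to-continuum route reuses the machinery of Proposition \ref{prop:graph GP approx bound} without redoing the series estimates. Two bookkeeping remarks: your inequality $s^2\leq(\sigma^2+B^2)\log(1+\sigma^{-2}s^2)$ needs $\sigma^2_{N,\ell-1}\leq B^2$, while the covariance bound only gives $\sigma^2_{N,\ell-1}\leq B$; this is harmless since $B$ is a universal constant that may be taken $\geq 1$ (the paper's own proof has the same cosmetic slack), and your care in writing the mutual information with the surrogate data $\widetilde{Y}_{N,L}$ and $u_N$ is in fact more accurate than the paper's display, which has a typo replacing them with $y_\ell$ and $f_N$.
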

\begin{proof}
The first statement can be proved in a similar fashion as Proposition \ref{prop:graph GP approx bound} by bounding the difference $|c_N(z_1,z_2)-c(z_1,z_2)|$ between the graph and manifold covariance functions, and using the fact that the manifold covariance function $c(\cdot,\cdot)$ is uniformly upper bounded (which follows by the control of growth of the Laplace-Beltrami eigenfunctions in Proposition \ref{prop:Laplacian efun infinity bound}). 

For the second statement, notice that $\sigma_{N,\ell-1}(z_\ell)\leq c_N(z_\ell,z_\ell)\leq B$. Using the fact that $(1+\sigma^{-2}B^2)\log (1+ x)\geq x$ over $[0,\sigma^{-2}B^2]$, we have \begin{align*}
    \sum_{\ell=1}^L \sigma^2_{N,\ell-1}(z_\ell)&\leq  (\sigma^2+B^2)  \sum_{\ell=1}^L \log \big(1+\sigma^{-2} \sigma^2_{N,\ell-1}(z_\ell)\big)\\
    &=2(\sigma^2+B^2)  I(\{y_\ell\}_{\ell=1}^L; \{f_{N}(z_\ell)\}_{\ell=1}^L)\leq 2( \sigma^2+B^2) \gamma_L,
\end{align*}
where the equality in the second step follows from Lemma \ref{lemma:information gain}. 
Finally, by Cauchy-Schwarz inequality we have that  $\sum_{\ell=1}^L\sigma_{N,\ell-1}(z_\ell)\leq \sqrt{L\sum_{\ell=1}^L \sigma_{N,\ell-1}(z_\ell)^2}$ and the result follows. 
\end{proof}

Applying Lemma \ref{lem:sum of pstd bounded by information gain} to \eqref{eq:proof RNT eq1}, we get 
\begin{align}
    R_{N,L}\leq C \left(b_{N,L}\sqrt{L}+\frac{\epsilon_NL}{\delta\sigma}\right) \sqrt{\gamma_L}\label{eq:proof RNT eq2},
\end{align}
where $C$ is a universal constant. Upper bounds on $\gamma_L$ have been studied extensively in the literature and by \cite[Theorem 3 or equation (7)]{vakili2021information} with $D=k_N$ and $\delta_D=0$ in our case (which holds because our graph kernel only has $k_N$ nonzero eigenvalues), we get
\begin{align*}
    R_{N,L}\leq C \left(b_{N,L}\sqrt{L}+\frac{\epsilon_NL}{\delta\sigma}\right) \sqrt{k_N\log L}.
\end{align*}
Finally, we return to bounding the simple regret using \eqref{eq:sr cr}:
\begin{align*}
    r_{N,L}\leq \frac{R_{N,L}}{L} \leq C \left(\frac{b_{N,L}}{\sqrt{L}}+\frac{\epsilon_N}{\delta\sigma}\right) \sqrt{k_N\log L}. 
\end{align*}

\section{Proof of Corollary \ref{cor:continuum regret}}    
  
Denote $\hat{z}_N^*=\operatorname{arg\,min}_{z\in\mathcal{M}_N} \, d_{\mathcal{M}}(z^*,z)$, i.e., the point in $\mathcal{M}_N$ closest to $z^*$. Then by Proposition \ref{prop:d infinity bound} we necessarily have 
\begin{align*}
    d_{\mathcal{M}}(z^*,\hat{z}_N^*) \leq d_{\mathcal{M}}(z^*,T_N(z^*))\leq \rho_N. 
\end{align*}
Now notice that 
\begin{align*}
    f(z^*)-f(z_N^*)= [f(z^*)-f(\hat{z}_N^*)] +[f(\hat{z}_N^*) -f(z_N^*)]\leq f(z^*)-f(\hat{z}_N^*) 
\end{align*}
since $z_N^*$ being the maximizer of $f$ over $\mathcal{M}_N$ implies $f(\hat{z}_N^*) -f(z_N^*)\leq 0$. By local $\alpha$-H\"older continuity of $f$ at $z^*$, we conclude that 
\begin{align*}
    f(z^*)-f(\hat{z}_N^*) \leq C_f d_{\mathcal{M}}(z^*,\hat{z}_N^*)^{\alpha} \leq C\rho_N^{\alpha}.
\end{align*}
By Lemma \ref{lemma:SE factor} and \eqref{eq:rho_N}, we get 
\begin{align}\label{eq:holder error}
    f(z^*)-f(z_N^*)=\widetilde{O} 
    \begin{cases}
    N^{-\left[\frac{2s-2m+1}{m(m+3)}\wedge \frac{1}{2m}\right]}\qquad & \text{(Mat\'ern)}\\
    N^{-\frac{1}{2m}}  \qquad & \text{(SE)}
    \end{cases},
\end{align}
where we have dropped all dependence on logarithmic factors in the notation $\widetilde{O}$. 
The results follows by the identity
\begin{align*}
    f(z^*)-f(z_L^*)= f(z^*)-f(z_N^*) +r_{N,L}
\end{align*}
and the observation that the error in \eqref{eq:holder error} would be absorbed by that of $r_{N,L}$ as shown in \eqref{eq:regret explicit rate}.

\end{document}